\setlist[itemize]{leftmargin=7.5mm}
\DeclareMathOperator*{\argmax}{arg\,max}
\DeclareMathOperator*{\argmin}{arg\,min}
\newtheorem{theorem}{Theorem}[]
\newtheorem{lemma}{Lemma}[]
\newtheorem{definition}{Definition}[]
\newenvironment{customthm}[1]
  {\innercustomthm}
  {\endinnercustomthm}
\numberwithin{equation}{section}
\newcommand{\supp}{Appx.}
\ificcvfinal\pagestyle{empty}\fi
\begin{document}

\title{Local Temperature Scaling for Probability Calibration}

\author{Zhipeng Ding \quad Xu Han \quad Peirong Liu \quad Marc Niethammer\\ 
University of North Carolina at Chapel Hill, Chapel Hill, USA\\ 
{\tt\small \{zp-ding, xhs400, peirong, mn\}@cs.unc.edu}
}




\maketitle
\ificcvfinal\thispagestyle{empty}\fi

\begin{abstract}
For semantic segmentation, label probabilities are often uncalibrated as they are typically only the by-product of a segmentation task. Intersection over Union (IoU) and Dice score are often used as criteria for segmentation success, while metrics related to label probabilities are not often explored. However, probability calibration approaches have been studied, which match probability outputs with experimentally observed errors. These approaches mainly focus on classification tasks, but not on semantic segmentation. Thus, we propose a learning-based calibration method that focuses on multi-label semantic segmentation. Specifically, we adopt a convolutional neural network to predict local temperature values for probability calibration. One advantage of our approach is that it does not change prediction accuracy, hence allowing for calibration as a post-processing step. Experiments on the COCO, CamVid, and LPBA40 datasets demonstrate improved calibration performance for a range of different metrics. We also demonstrate the good performance of our method for multi-atlas brain segmentation from magnetic resonance images.
\end{abstract}

\section{Introduction}

With the development of deep convolutional neural networks (CNNs), the accuracy of semantic segmentation has improved dramatically~\cite{cciccek20163d,long2015fully}. However, ideally semantic segmentation networks should not only be accurate, but should also indicate when they are likely incorrect. For example, an autonomous driving system might use deep convolutional neural networks to analyze a real-time scene from a camera~\cite{bojarski2016end}, the associated semantic segmentation of street scenes should provide accurate detections of pedestrians and other vehicles, and the system should recognize when such predictions are unreliable. Another example is the segmentation of brain tumors with a CNN~\cite{havaei2017brain}. If the segmentation network can not confidently segment critical regions of the brain, then a medical expert should decide or be alerted to such doubtful regions. 
Thus, it is important for semantic segmentation networks to generate both accurate label predictions \emph{and} accurate confidence measures.

However, due to overfitting, CNNs for semantic segmentation tend to be overconfident about predicted labels~\cite{garcia2017review,guo2017calibration,jena2019bayesian,li2019overfitting}. Approaches for joint prediction and calibration exist~\cite{kumar2018trainable,maddox2019simple,milios2018dirichlet,mukhoti2020calibrating}. However, they require changing the learning task and typically strive for calibration, but do not guarantee it. An alternative approach is to calibrate the resulting probabilities of a model via \emph{post-processing} so that they better reflect the true probabilities of being correct. This is the kind of approach we consider here as it easily applies to pre-trained networks and can even benfit joint prediction/calibration approaches. Probability calibration, first studied for classification~\cite{platt1999probabilistic}, generally addresses this problem via a hold-out validation dataset. 
  

Existing calibration approaches still have several limitations: (1) Most of the probability calibration approaches are designed for classification, thus are not guaranteed to work well for semantic segmentation (where it is also more challenging to annotate on a pixel/voxel level); (2) While there is limited work discussing probability calibration for semantic segmentation, this work either only applies to specific types of models (e.g., Bayesian neural networks~\cite{jena2019bayesian}) or only implicitly improves calibration performance (e.g., via model ensembling~\cite{mehrtash2019confidence} or multi-task learning~\cite{karimi2020improving}); (3) Most methods are designed to work for \emph{binary} classifications and approach multi-class problems by a decomposition into $k$ one-vs-rest binary calibrations (where $k$ denotes the number of classes). However, such a decomposition does not guarantee overall calibration (only for the individual subproblems before normalization) and the classification accuracy of the trained model may change after calibration as the probability order of labels may change.

Our goal is to develop a \emph{post-processing} calibration method for multi-label semantic segmentation, which retains label probability order and, therefore, a model's segmentation accuracy. Our work is inspired by temperature scaling (TS)~\cite{guo2017calibration} for classification probability calibration. As TS 
determines only \emph{one} global scaling constant, it cannot capture spatial miscalibration changes in images. We therefore (1) extend TS to multi-label semantic segmentation and (2) make it adaptive to local image changes. 

\textbf{Our contributions} are: 
(1) \emph{Spatially localized probability calibration:} We propose a learning-based local TS method that predicts a separate temperature scale for each pixel/voxel. (2) \emph{Completely separated accuracy-preserving post-processing:} Our approach is completely separated from the segmentation task, leaving the prediction accuracy unchanged. (3) \emph{Theoretical justification:} We provide a theoretical analysis for the effectiveness of our approach.
  (4) \emph{Comprehensive analysis:} We provide definitions and evaluation metrics for probability calibration for semantic segmentation and validate our approach both qualitatively and quantitatively. (5) \emph{Practical application:} We successfully apply our calibrated probabilities for multi-atlas segmentation label fusion in the field of medical image analysis. 

\section{Related Work}
\label{sec:related_work}

A variety of calibration approaches have been proposed, but none addresses our target setting.

\textbf{Bin-based Approaches.}
Non-parametric histogram binning~\cite{zadrozny2001obtaining} uses the average number of positive-class samples in each bin as the calibrated probability. Isotonic regression~\cite{zadrozny2002transforming} extends this approach by jointly optimizing bin boundaries and bin predictions
; it is one of the most popular non-parametric calibration methods. ENIR~\cite{naeini2016binary} further extends isotonic regression by relaxing the monotonicity assumption of isotonic regression. These bin-based methods do not consider correlations among neighboring pixels/voxels in semantic segmentation, while our proposed method captures correlations via convolutional filters.

\textbf{Temperature Scaling Approaches.}
Platt scaling~\cite{platt1999probabilistic} uses logistic regression for probability calibration.
Matrix scaling~\cite{guo2017calibration}, vector scaling~\cite{guo2017calibration}, and temperature scaling~\cite{hinton2015distilling,guo2017calibration} all generalize Platt scaling to multi-class calibration, among which temperature scaling is both effective and the simplest. ATS~\cite{mozafari2018attended} extends temperature scaling by using the conditional distribution on each class to address the calibration challenge on small validation datasets, for noisy labels, and highly accurate networks. BTS~\cite{ji2019bin} extends temperature scaling to a bin-wise setting and also uses data augmentation inside each bin to improve the calibration performance. However, unlike our approach (which extends temperature scaling) none of these approaches considers spatial variations for probability calibration.

\textbf{Bayesian Approaches.}
BBQ~\cite{naeini2015obtaining} extends binning via Bayesian averaging of the probabilities produced by all possible binning schemes. Bayes-Iso~\cite{allikivi2019non} extends isotonic regression by using Bayesian isotonic calibration to allow for more flexibility in the monotonic fitting and smoothness.
Jena et al.~\cite{jena2019bayesian} proposed to use a utility function focusing on the intermediate-layers of a Bayesian deep neural network to calibrate probabilities for image segmentation. Maronas et al.~\cite{maronas2020calibration} proposed decoupled Bayesian neural networks to calibrate classification probabilities. Bin-based Bayesian methods do not consider pixel/voxel correlations. Bayesian neural networks can capture spatial correlations, but require a Bayesian formulation in the first place. 
Furthermore, while Bayesian uncertainty quantification~\cite{kendall2017uncertainties} helps probability calibration, it may also not achieve it ({\supp}~\ref{app:additional_related_work}). Instead, our approach considers pixel/voxel correlations and can be used as a post-processing approach for any semantic segmentation method which generates probability outputs.

\textbf{Other Approaches.}
Mehrtash et al.~\cite{mehrtash2019confidence} found that model ensembling improves confidence calibration for medical image segmentation. A similar conclusion was also found in \cite{lakshminarayanan2017simple,zhang2020mix}, where an ensemble is used to produce good predictive uncertainty estimates. Karimi et al.~\cite{karimi2020improving} showed that multi-task learning can yield better-calibrated predictions than dedicated models trained separately. 
Note that ensembling or multi-task learning does not directly address probability calibration, 
instead they provide insight on how to obtain a better calibrated segmentation model. 
Leathart et al.~\cite{leathart2017probability} improved the calibration of classification tasks by building a decision tree over input tabular data, where the leaf nodes correspond to different calibration models.
Further, beta calibration~\cite{kull2017beyond} extends logistic calibration to overcome the situation where per-class score distributions are heavily skewed. Dirichlet calibration~\cite{kull2019beyond} uses the Dirichlet distribution to generalize beta calibration to multi-class problems. Rahimi et al.~\cite{rahimi2020intra} proposed to use neural network based intra order-preserving functions for calibration. These methods are also not directly designed for probability calibration of semantic segmentation, but focus on classification. Learning algorithms~\cite{kumar2018trainable,maddox2019simple,milios2018dirichlet,mukhoti2020calibrating} that jointly consider prediction and calibration also exist. 
Although they can help mitigate miscalibrations, they typically cannot entirely remove it. In fact, they can also benefit from our post-processing approach (\cref{sec:Tiramisu_exp}).
 
\section{Methodology}
\label{sec:method}
\subsection{Problem Statement}
\label{sec:problem_statement}

Our goal is the calibration of the predicted probabilities of deep semantic segmentation CNNs. Assume there is a pre-trained neural network $\mathcal{F}$, with an image $I$ as the input, which outputs a vector of logits at each location $x$. Each logit corresponds to a label, and the logit value reflects the label confidence. The predicted label is the one with the largest logit value; the corresponding confidence (probability of correctness) for each pixel/voxel is usually obtained via softmax of the logits. Specifically, the predicted confidence map and the corresponding segmentation map are
\begin{align}
\hat{P} (x) &= \max_{l \in L} \sigma_{SM}(\textbf{z}(x))^{(l)} = \max_{l \in L} \frac{\text{exp}(\textbf{z}(x)^{(l)})}{\sum_{j \in L} \text{exp}(\textbf{z}(x)^{(j)})}, \nonumber \\
\hat{S} (x) &=  \argmax_{l \in L} \textbf{z}(x)^{(l)},
\end{align}
where $\sigma_{SM}$ is the softmax function, $x$ denotes position, $L$ is the set of all labels, $l$ is the label index and $\textbf{z}(x)^{(l)} = z_l(x)$ is the logit that corresponds to label $l$ at location $x$.

The goal of probability calibration is to ensure that the confidence map $\hat{P}$ represents a true probability. For example, given a $10\times 10$ image, with label confidence of $0.7$ for each pixel, we would expect that 70 pixels should be correctly segmented. This can be formalized as follows:
\vspace{-0.5mm}
\begin{definition}
\label{def:perfect_calibration}
A semantic segmentation is perfectly calibrated in region $\Omega$ if 
\begin{equation}
\label{perfect_calibration}
\mathbb{P} (\hat{S}(x) = S(x) | \hat{P}(x) = p) = p,\, \forall p \in [0, 1], x \in \Omega
\end{equation}
where $S(x)$ and $\hat{S}(x)$ are the true and predicted segmentations at location $x$, respectively, $\hat{P}(x)$ is the confidence of the prediction $\hat{S}(x)$, and $\mathbb{P}$ is the probability measure.
\end{definition}
\vspace{-0.5mm}
In short, if the observed probability is the true probability, then the semantic segmentation model is well-calibrated. As it is difficult to work directly with this definition to assess miscalibration, we extend several visual and quantitative metrics~\cite{degroot1983comparison,murphy1977reliability,naeini2015obtaining,niculescu2005predicting,nixon2019measuring}, which have previously been proposed in the context of classification.

\vspace{-0.5mm}
\subsection{Calibration Setup}
\label{sec:calibration_setup}
\vspace{-0.5mm}
Assume the data split for a semantic segmentation network $\mathcal{F}$ is $D_{train}$ / $D_{val}$ / $D_{test}$, i.e. $\mathcal{F}$ is trained on the $D_{train}$ dataset, validated on the $D_{val}$ dataset to choose the best model, and finally tested on the $D_{test}$ dataset. Note that $D_{train}$, $D_{val}$, and $D_{test}$ are disjoint datasets. Miscalibration can be observed when evaluating $\mathcal{F}$ on $D_{test}$ for probability-related measures. Our goal is to calibrate the probability output of $\mathcal{F}$ on $D_{test}$. To this end, we train a calibration model $\mathcal{C}$ on the hold-out validation dataset $D_{val}$ via cross entropy loss, to obtain a better calibrated probability output of $\mathcal{F}$ on $D_{test}$. 
\vspace{-0.5mm}
\subsection{TS for Probability Calibration}
\label{sec:temp_scaling}
\vspace{-0.5mm}
Temperature scaling~\cite{guo2017calibration} has been proposed as a simple extension of Platt scaling~\cite{platt1999probabilistic} for post-hoc probability calibration for multi-class classifications. Specifically, temperature scaling estimates a single scalar parameter $T \in \mathbb{R}^+$, i.e., the temperature, to calibrate probabilities: $\hat{q} = \max_{l \in L} \sigma_{SM} (\textbf{z}/T)^{(l)}$, where $\hat{q}$ is the calibrated probability. 

We can directly extend temperature scaling to semantic segmentation by estimating \emph{one} global parameter $T \in \mathbb{R}^+$ for all pixels/voxels of all images: $\hat{\mathbb{Q}}_i (x, T) = \max_{l \in L} \sigma_{SM} (\textbf{z}_i (x)/T)^{(l)}$, where $\hat{\mathbb{Q}}_i$ is the calibrated probability map for the $i$-th image. As in~\cite{guo2017calibration}, we obtain this optimal value for $T$ by minimizing the following negative log-likelihood (NLL) w.r.t. a hold-out validation dataset:
\vspace{-0.5mm}
\begin{multline}
\label{eq:TS_opt}
T^* = \argmin_{T}  \left(-\sum_{i=1}^n \sum_{x \in \Omega} \log \Big( \sigma_{SM} \big ( \textbf{z}_i (x)/T \big )^{(S_i(x))} \Big)\right) \\
s.t. \quad T > 0,
\end{multline}
where $\Omega$ denotes the image space and $n$ the number of training images. However, temperature scaling in this way assumes that each image has the same distribution (i.e., the same temperature, $T$, for all images), which is unrealistic. We therefore propose to relax this assumption as follows:
\begin{definition}
\label{def:IBTS} 
Image-based temperature scaling (IBTS):

\begin{equation}
\label{eq:image_temperature_scaling}
\hat{\mathbb{Q}}_i (x, T_i) = \max_{l \in L} \sigma_{SM} (\textbf{z}_i (x)/T_i)^{(l)},
\end{equation} 
where $T_i \in \mathbb{R}^+$ is image-dependent.
\end{definition}

While this at first seems like a minor change to the standard temperature scaling approach, it is important to note that moving to an image-based temperature value, $T_i$ requires us to \emph{learn} a regressor which predicts this temperature value for each image, $I$. 
Therefore, we use a CNN~\cite{gu2018recent} to learn a mapping from ($\textbf{z}_i$, $I_i$) to $T_i$. Suppose the network is $\mathscr{F}$, then the optimization is
 \vspace{-1mm}
\begin{multline}
\label{eq:IBTS_opt}
\theta^* = \argmin_{\theta}  -\sum_{i=1}^n \sum_{x \in \Omega} \log \Big( \sigma_{SM} \big ( \frac{\textbf{z}_i (x)}{\mathscr{F}(\theta, \textbf{z}_i, I_i)} \big )^{(S_i(x))} \Big)  \\
s.t. \quad \mathscr{F}(\theta, \textbf{z}_i, I_i) > 0,
\end{multline}
where $\theta$ are the parameters of the network $\mathscr{F}$. The calibrated probability can be obtained by substituting $T_i^* = \mathscr{F}(\theta^*, \textbf{z}_i, I_i)$ in Eq.~\eqref{eq:image_temperature_scaling}.

\subsection{Local TS for Probability Calibration}
\label{sec:local_temp_scaling}

\begin{figure}
  \centering
  \includegraphics[width=\linewidth]{./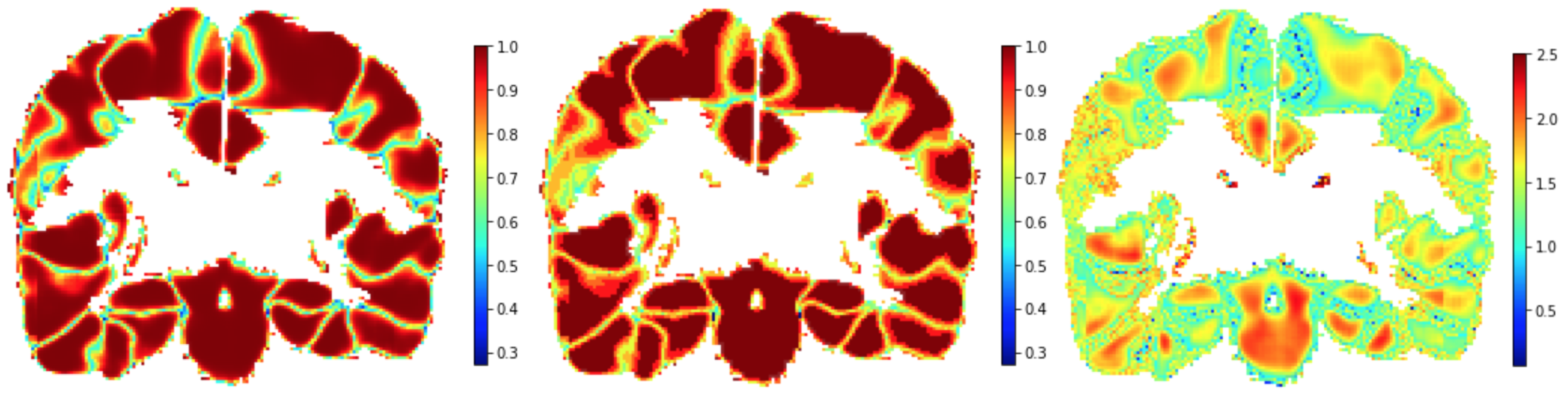}
  \caption{Left: Predicted probabilities (confidence) by a U-Net in \cref{sec:UNet_exp}. Middle: Average accuracy of each bin for 10 bins of reliability diagram with an equal bin width
  indicating different probability ranges that need to be optimized for different locations. Right: Temperature value map obtained via optimization, revealing different optimal localized TS values at different locations.}
  \label{fig:loc_diff}
\end{figure}

\begin{figure*}
  \centering
  \includegraphics[width=0.9\textwidth]{./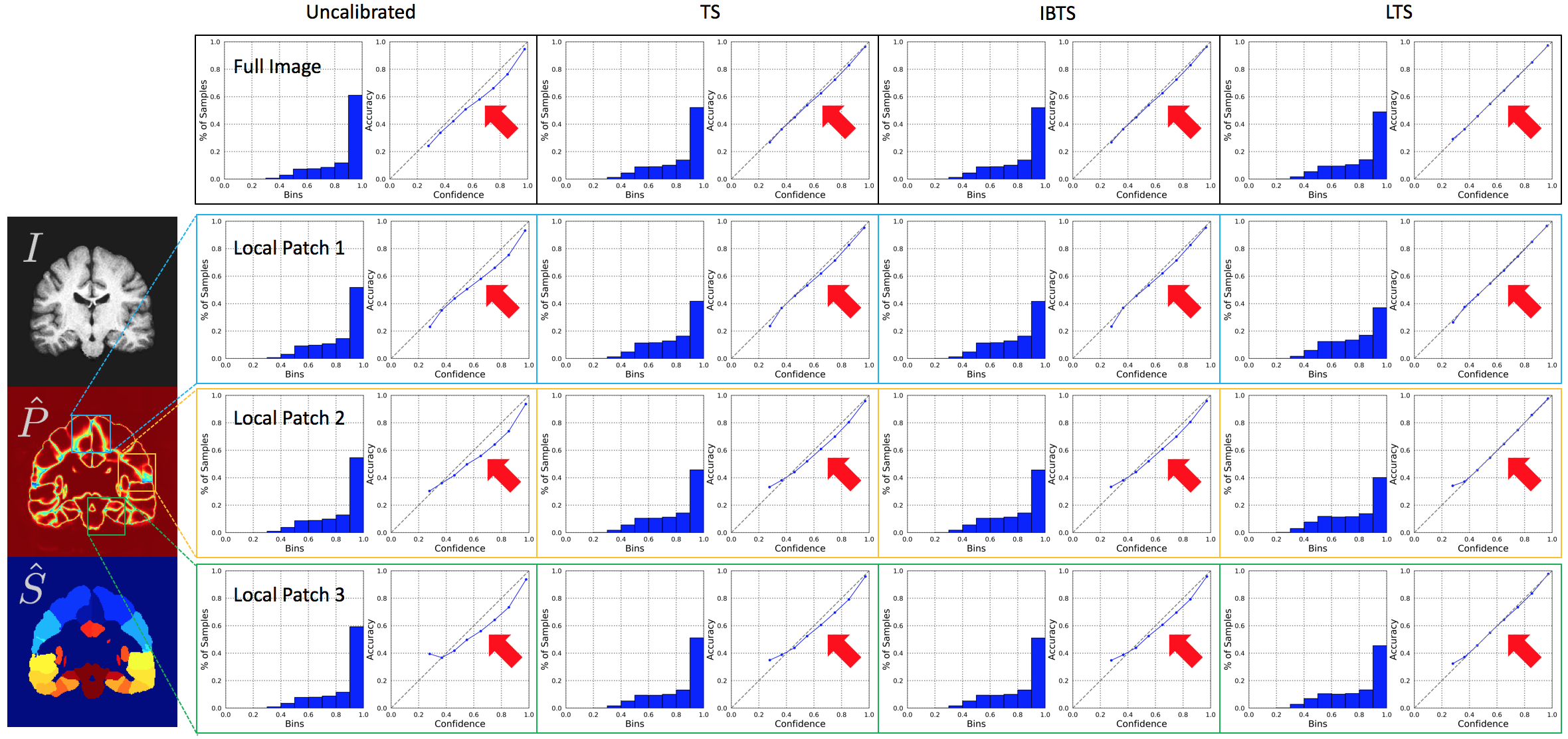}
  \caption{An example of global and local reliability diagrams for different methods for a U-Net segmentation experiment (\cref{sec:UNet_exp}). $I$ is the image, $\hat{P}$ is the predicted uncalibrated probability, and $\hat{S}$ is the predicted segmentation. Figures are displayed in couples, where the left figure is the probability distribution of pixels/voxels while the right figure is the reliability diagram (See {\supp}~\ref{app:metrics} for definitions). The top row shows the global reliability diagrams for different methods for the entire image. The three rows underneath correspond to local reliability diagrams for the different methods for different local patches. Note that TS and IBTS can calibrate probabilities well across the entire image. Visually, they are only slightly worse than LTS. However, when it comes to local patches, LTS can still successfully calibrate probabilities while TS and IBTS can not. In general, LTS improves local probability calibrations. More results are in {\supp}~\ref{app:reliability_diagram}.}
  \vspace{-1mm}
  \label{fig:local_rd}
\end{figure*}

Probabilities predicted by a deep CNN vary by location. Fig.~\ref{fig:loc_diff} illustrates that object interiors can usually be accurately predicted while predictions on boundary or near-boundary locations are more ambiguous. Thus the optimal temperature value may vary across locations. However, using a global parameter, $T$, or an image-based parameter, $T_i$, cannot account for such spatial variations. That this is a practical concern is illustrated in the uncalibrated reliability diagrams of Fig.~\ref{fig:local_rd} which shows that the confidence-vs-accuracy relation may indeed vary across an image. Hence, spatial variations should be considered for semantic segmentation. Therefore, we propose the following local temperature scaling (LTS) approach.

\begin{definition}
\label{def:LTS}
Local temperature scaling (LTS):
\begin{equation}
\label{eq:local_temperature_scaling}
\hat{\mathbb{Q}}_i (x, T_i(x)) = \max_{l \in L} \sigma_{SM} (\textbf{z}_i (x)/T_i (x))^{(l)},
\end{equation} 
where $T_i (x) \in \mathbb{R}^+$ is image \emph{and} location dependent.
\end{definition}

For $T_i(x) = 1$, no calibration occurs as the logits $\textbf{z}_i(x)$ do not change. For $T_i(x)>1$, confidence will be reduced, which helps counteract overconfident predictions. As $T_i(x) \to \infty$, the calibrated probabilities will approach $1/|L|$, which represents maximum uncertainty. For $T_i(x)<1$, prediction confidence will be increased. This will be helpful to counteract underconfident predictions. Lastly, as $T_i(x) \to 0$, the calibrated probabilities will become binary  ($\in\{0,1\}$), which represents minimum uncertainty. As $T_i(x)$ is positive, such a local scaling does not change the ordering of the probabilities over the different classes. Hence, the segmentation accuracy remains unchanged.

\begin{figure}[h!]
  \centering
  \includegraphics[width=0.9\linewidth]{./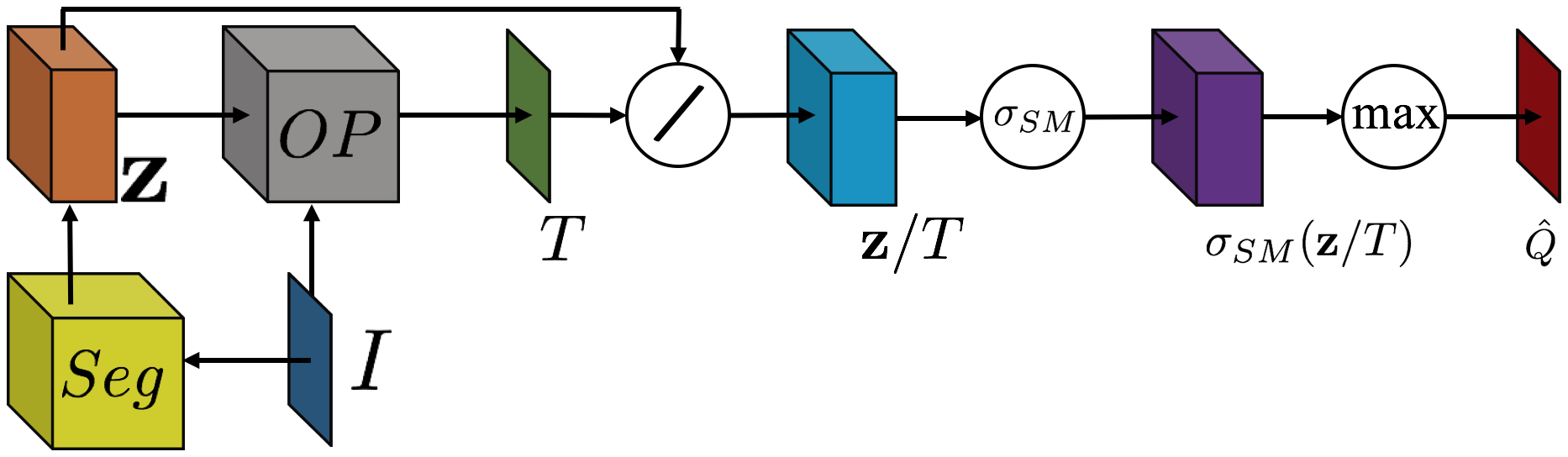}
  \caption{Architecture for probability calibration via (local) temperature scaling. The output logit map of a pre-trained semantic segmentation network (Seg) is locally scaled to produces the calibrated probabilities. $OP$ denotes optimization or prediction via a deep convolutional network to obtain the (local) temperature values. Details of this $OP$ unit can be found in {\supp}~\ref{app:LTS_networks}.
  \vspace{-1mm}}
  \label{fig:learning_framework}
\end{figure}

Another network $\mathscr{H}$, with parameter $\alpha$, can be used to learn this local mapping from ($\textbf{z}_i$, $I_i$) to $T_i(x)$. The optimization follows Eq.~\eqref{eq:IBTS_opt}, with $\mathscr{F}(\theta, \textbf{z}_i, I_i)$ replaced by $\mathscr{H}(\alpha, \textbf{z}_i, I_i, x)$, where $x$ indicates the spatial locations. Finally, we obtain $T_i(x)^* = \mathscr{H}(\alpha^*, \textbf{z}_i, I_i, x)$.

Fig.~\ref{fig:learning_framework} illustrates our high-level design for probability calibration. The input is a logit map $\textbf{z}$, usually obtained by a segmentation network (Seg). Together with the image $I$, it is then passed to an optimization unit or a prediction unit
to generate the temperature map. These temperature values are used to calibrate the logit map. The calibrated probabilities are, in turn, obtained via a softmax on the calibrated logits. Class labels do not change under this process and can still be obtained by determining the class with the largest predicted probability. {\supp}~\ref{app:LTS_networks} details the implementation. Training details are described in {\supp}~\ref{app:implementation}. 

\subsection{Theoretical Justification}
\label{sec:why_cross_entropy_and_temperature_scaling}
\textbf{Why does miscalibration happen?} One usually uses the loss corresponding to the negative log-likelihood (NLL) of the multinomial distribution~\cite{bengio2017deep,friedman2001elements} (i.e., the multi-class cross-entropy loss) to train semantic segmentation networks because minimizing it will minimize the Kullback-Leibler (KL) divergence between the ground-truth probability distribution and the predicted probability distribution. The minimum loss is achieved if and only if the predicted probability distribution recovers the ground-truth probability distribution~\cite{bengio2017deep,friedman2001elements}. For semantic segmentation, the NLL loss is minimized when $\hat{P}(x) = 1$ and $\hat{S}(x) = S(x)$, for all $x$. The segmentation error is minimized when $\textbf{z}(x)^{(S(x))} > \textbf{z}(x)^{(l)}$ for all $l \in L$ and $l \neq S(x)$. This indicates that even if the segmentation error is minimized to zero, the NLL loss may still be positive and the optimization will consequently try to continue reducing it to zero by pushing $\hat{P}(x)$ to one for $\hat{S}(x) = S(x)$. This explains how overconfidence occurs in the context of semantic segmentation. Note that this overconfidence also results in low-entropy distributions. 

\textbf{How to eliminate miscalibration?} As indicated in \cite{mukhoti2020calibrating} encouraging the predicted distribution to have higher entropy can help avoid overconfident predictions for deep CNNs, and can thereby improve calibration. Thus, to calibrate an overconfident semantic segmentation network, we need to simultaneously minimize the NLL loss w.r.t. the to-be-learned calibration parameters while assuring that the corresponding entropy of the calibrated probabilities stays sufficiently large to probabilistically describe empirically observable segmentation errors. Note that we minimize the NLL loss for the same reason as for segmentation (above): because the goal is to recover the true probability distribution. The difference is that for segmentation we optimize w.r.t. the segmentation network parameters while for calibration we optimize w.r.t. the calibration model parameters. 


\textbf{Why do we use (local) TS to calibrate probabilities?} Overconfident networks usually exhibit the phenomenon that the entropy of the output probabilities is much lower than the cross entropy on the testing dataset as shown in \cite{guo2017calibration,mukhoti2020calibrating}. Thus, we define overconfidence as entropy being lower than the cross entropy of probabilities ({\supp}~\ref{app:temp_entropy}; and similarly for underconfidence).
Specifically, we show the following theorem in {\supp}~\ref{app:temp_entropy}.


\noindent
\textbf{Theorem~\ref{thm:NLL_Entropy}.} \textit{When the to-be-calibrated segmentation network is overconfident, minimizing NLL w.r.t. TS, IBTS, and LTS results in solutions that are also the solutions of maximizing entropy of the calibrated probability w.r.t. TS, IBTS and LTS under the condition of overconfidence.
} 

For example, for TS, the above theorem can be mathematically expressed as follows,
\vspace{-2mm}
\begin{gather*}
    \argmin_{T} - \sum_{i=1}^n \sum_{x \in \Omega} \log \Big(\sigma_{SM}\big( \textbf{z}_i(x)/T \big)^{(S_i(x))}\Big)\\
    \Updownarrow\\
    \argmax_{T} -\sum_{i=1}^n \sum_{x \in \Omega} \sum_{l=1}^L \sigma_{SM}\big(\frac{\textbf{z}_i(x)}{T}\big)^{(l)} \log \Big(\sigma_{SM} \big(\frac{\textbf{z}_i(x)}{T}\big)^{(l)}\Big) \\
    s.t. \; 
	\sum_{i=1}^n\sum_{x \in \Omega}\sum_{l=1}^L \textbf{z}_i(x)^{(l)}\sigma_{SM}\big(\frac{\textbf{z}_i(x)}{T}\big)^{(l)} \geq \sum_{i=1}^n\sum_{x \in \Omega} \textbf{z}_i(x)^{(S_i(x))}
\end{gather*}
where $T>0$. Hence, our three different variants for probability calibration via temperature scaling (TS, IBTS, LTS)  will counteract the tendency of entropy minimization caused by the NLL loss discussed above. Training the segmentation network via the NLL loss followed by post-hoc probability calibration via temperature scaling is an effective approach to obtain high segmentation accuracy while avoiding overconfidence of the resulting label probabilities. \cref{sec:COCO_exp}-\cref{sec:VoteNet_exp} show experiments to support this claim.


\section{Experiments}
\label{sec:experiments}

We show the performance and behavior of our proposed TS approaches for semantic segmentation on the COCO dataset (\cref{sec:COCO_exp}), CamVid dataset (\cref{sec:Tiramisu_exp}) and LPBA40 dataset (a dataset of magnetic resonance (MR) images of the human brain) (\cref{sec:UNet_exp}). We further show how our probability calibration may influence downstream tasks, by exploring it in the context of multi-atlas segmentation on LPBA40 (\cref{sec:VoteNet_exp}). 

\textbf{Evaluation Metrics.} To assess the performance of probability calibration, we use five metrics, which were originally designed for classification, for semantic segmentation. Specifically, they are the reliability diagram~\cite{degroot1983comparison,murphy1977reliability,niculescu2005predicting}, expected calibration error~\cite{naeini2015obtaining} (ECE), maximum calibration error~\cite{naeini2015obtaining} (MCE), static calibration error~\cite{nixon2019measuring} (SCE), and adaptive calibration error~\cite{nixon2019measuring} (ACE). To make the above metrics applicable to semantic segmentation, we consider the predicted probabilities for each pixel/voxel as separate samples. We use 10 equally-sized (probability or sample size) bins to compute all these metrics. In \cref{sec:VoteNet_exp}, we additionally use average surface distance (ASD), surface Dice (SD), the 95-th percentile of the maximum symmetric distance (95MD), and average volume Dice (VD) to measure segmentation performance. Detailed definitions are in {\supp}~\ref{app:metrics}.

\textbf{Baseline Methods.} To illustrate the effectiveness of our proposed LTS approach (see Eq.~\eqref{eq:local_temperature_scaling}), we compare it to standard TS and IBTS (see Eq.~\eqref{eq:image_temperature_scaling}), where we directly assess if local adjustments can be properly predicted and if they are beneficial. 
While other probability calibration methods exist, as discussed in \cref{sec:related_work}, most are for classification and not for semantic segmentation. This is an important difference. For example, in semantic segmentation, nearby pixels/voxels are correlated with each other, whereas such relations do not apply to classification. Thus, simply considering each pixel/voxel as a classification data point is not appropriate. For completeness, however, we still choose several classic methods (\cref{sec:COCO_exp}) to compare against, i.e. isotonic regression (IsoReg)~\cite{zadrozny2002transforming}, vector scaling (VS)~\cite{guo2017calibration}, ensemble temperature scaling (ETS)~\cite{zhang2020mix}, and Dirichlet calibration with off-diagonal regularization (DirODIR)~\cite{kull2019beyond}. Furthermore, to illustrate that our method is also beneficial for joint training (\cref{sec:Tiramisu_exp}), we show the performance before and after using LTS for models trained with maximum mean calibration loss (MMCE)~\cite{kumar2018trainable} and focal loss (FL)~\cite{mukhoti2020calibrating}. 
All methods are fine-tuned with the best parameters via grid search. Details are in {\supp}~\ref{app:implementation}.

  \textbf{Evaluation Regions.} Since label boundaries are difficult to segment, these are the regions where most of the relevant miscalibrations are expected to occur (see also Fig.~\ref{fig:loc_diff}). For a refined analysis, we extract boundaries and their nearby regions (i.e., regions up to 2 pixels/voxels away from the boundary). We denote this evaluation region by \textit{Boundary} in all experiments. We also evaluate performance \emph{within} label regions (excluding the background, but including the respective \textit{Boundary} region). We denote this large region as \textit{All}. It is expected that the calibration inside the \textit{Boundary} region will be more challenging (as the prediction is more ambiguous) than the calibration inside the bigger \textit{All} region. {\supp}~\ref{app:region} shows examples of these regions for a 3D brain MR image. Furthermore, to evaluate the local probability calibration performance for an image segmentation, we also randomly select 10 small patches (72$\times$72 for 2D, 72$\times$72$\times$72 for 3D) and compute the same metrics as for the entire image. We report average performance (denoted \textit{Local-Avg}) and the worst case performance (denoted \textit{Local-Max}) across 10 patches. {\supp}~\ref{app:patch_size_metrcis} shows results for different patch sizes. Note that results in the \textit{All} region reflect the overall calibration performance for an image segmentation; results in the \textit{Boundary} region reflect the most challenging calibration performance for an image segmentation; results in the \textit{Local} region generally reflect whether the calibration method can handle spatial variations.

\textbf{Downstream MAS setting.} Multi-atlas segmentation (MAS) relies on transferring segmentations from a set of atlas images to a target image via deformable registration. The segmentation in the target space is then obtained by a label fusion method, which establishes a consensus among the registered atlas labels. We use the label fusion strategy by Wang et al.~\cite{wang2012multi}, which takes advantage of the label probabilities. Hence, better-calibrated probabilities should lead to better fusion accuracy (i.e., segmentation accuracy).

\textbf{Statistical Considerations.} To indicate the success of probability calibration, we use a Mann-Whitney U-test~\cite{mann1947test} to check for significant differences between the result of LTS and the results for all other baseline methods (UC, TS, IBTS, etc.). We use the Benjamini/Hochberg correction~\cite{benjamini1995controlling} for multiple comparisons with a false discovery rate of 0.05. Results are highlighted in green when LTS performs significantly better than the corresponding method (no color means no statistically significant differences).

\textbf{Datasets.} We use three datasets for our experiments: The Common Object in Context (COCO)~\cite{lin2014microsoft} dataset, the Cambridge-driving Labeled Video Database (CamVid)~\cite{brostow2008segmentation,brostow2009semantic}, and the LONI Probabilistic Brain Atlas (LPBA40)~\cite{shattuck2008construction} dataset. Detailed descriptions and the training/validation/testing splits are in {\supp}~\ref{app:implementation}.


\begin{table*}[!th] 
\centering
\begin{adjustbox}{max width=0.81\textwidth}
\begin{tabular}{cccccccccccccc}
\specialrule{.15em}{.05em}{.05em}
\multicolumn{1}{c}{\multirow{3}{*}{Dataset}} &
\multicolumn{1}{c}{\multirow{3}{*}{Method}} & \multicolumn{3}{c}{ECE(\%)$\downarrow$} & \multicolumn{3}{c}{MCE(\%)$\downarrow$} & \multicolumn{3}{c}{SCE(\%)$\downarrow$} & \multicolumn{3}{c}{ACE(\%)$\downarrow$}\\
\cmidrule(lr){3-5}
\cmidrule(lr){6-8}
\cmidrule(lr){9-11}
\cmidrule(lr){12-14}
\multicolumn{1}{r}{}& \multicolumn{1}{r}{}& \multicolumn{1}{c}{\multirow{2}{*}{\textit{All}}} & \multicolumn{1}{c}{\multirow{2}{*}{\textit{Boundary}}} & \textit{Local-Avg} & \multicolumn{1}{c}{\multirow{2}{*}{\textit{All}}} & \multicolumn{1}{c}{\multirow{2}{*}{\textit{Boundary}}} & \textit{Local-Avg} & \multicolumn{1}{c}{\multirow{2}{*}{\textit{All}}} & \multicolumn{1}{c}{\multirow{2}{*}{\textit{Boundary}}} & \textit{Local-Avg} & \multicolumn{1}{c}{\multirow{2}{*}{\textit{All}}} & \multicolumn{1}{c}{\multirow{2}{*}{\textit{Boundary}}} & \textit{Local-Avg} \\
\multicolumn{1}{r}{} & \multicolumn{1}{r}{} & \multicolumn{1}{r}{} & \multicolumn{1}{r}{} & [\textit{Local-Max}] & \multicolumn{1}{r}{} & \multicolumn{1}{r}{} & [\textit{Local-Max}] & \multicolumn{1}{r}{} & \multicolumn{1}{r}{} & [\textit{Local-Max}] & \multicolumn{1}{r}{} & \multicolumn{1}{r}{} & [\textit{Local-Max}] \\
\specialrule{.15em}{.05em}{.05em}
\multicolumn{1}{c}{\multirow{16}{*}{\shortstack{FCN\\COCO\\(1000)}}}& \multicolumn{1}{c}{\multirow{2}{*}{UC}}&\multicolumn{1}{c}{\multirow{2}{*}{12.44(17.87)}}&\multicolumn{1}{c}{\multirow{2}{*}{{\setlength{\fboxsep}{0pt}\colorbox{green!30}{24.41(7.23)}}}}&14.48(20.89) &\multicolumn{1}{c}{\multirow{2}{*}{27.66(22.23)}}&\multicolumn{1}{c}{\multirow{2}{*}{{\setlength{\fboxsep}{0pt}\colorbox{green!30}{38.61(7.22)}}}}&{\setlength{\fboxsep}{0pt}\colorbox{green!30}{34.90(23.89)}} &\multicolumn{1}{c}{\multirow{2}{*}{20.24(18.75)}}&\multicolumn{1}{c}{\multirow{2}{*}{{\setlength{\fboxsep}{0pt}\colorbox{green!30}{24.97(7.07)}}}}&{\setlength{\fboxsep}{0pt}\colorbox{green!30}{20.05(21.67)}} &\multicolumn{1}{c}{\multirow{2}{*}{20.19(18.73)}}&\multicolumn{1}{c}{\multirow{2}{*}{{\setlength{\fboxsep}{0pt}\colorbox{green!30}{24.46(7.26)}}}}&{\setlength{\fboxsep}{0pt}\colorbox{green!30}{19.86(21.68)}}\\
\multicolumn{1}{r}{}& \multicolumn{1}{r}{}& \multicolumn{1}{r}{} & \multicolumn{1}{r}{} & {\setlength{\fboxsep}{0pt}\colorbox{green!30}{[33.14(26.83)]}} & \multicolumn{1}{r}{} & \multicolumn{1}{r}{} & {\setlength{\fboxsep}{0pt}\colorbox{green!30}{[58.73(19.66)]}} & \multicolumn{1}{r}{} & \multicolumn{1}{r}{} & {\setlength{\fboxsep}{0pt}\colorbox{green!30}{[39.66(24.30)]}} & \multicolumn{1}{r}{} & \multicolumn{1}{r}{} & {\setlength{\fboxsep}{0pt}\colorbox{green!30}{[39.16(24.62)]}} \\
\cmidrule(lr){2-14}
&\multicolumn{1}{c}{\multirow{2}{*}{IsoReg~\cite{zadrozny2002transforming}}}&\multicolumn{1}{c}{\multirow{2}{*}{{\setlength{\fboxsep}{0pt}\colorbox{green!30}{12.55(14.22)}}}}&\multicolumn{1}{c}{\multirow{2}{*}{{\setlength{\fboxsep}{0pt}\colorbox{green!30}{16.27(6.62)}}}}&\setlength{\fboxsep}{0pt}\colorbox{green!30}{15.35(16.81)}&\multicolumn{1}{c}{\multirow{2}{*}{27.58(21.06)}}&\multicolumn{1}{c}{\multirow{2}{*}{33.36(10.01)}}&31.76(20.05)&\multicolumn{1}{c}{\multirow{2}{*}{\setlength{\fboxsep}{0pt}\colorbox{green!30}{22.28(15.35)}}}&\multicolumn{1}{c}{\multirow{2}{*}{\setlength{\fboxsep}{0pt}\colorbox{green!30}{17.20(6.42)}}}&\setlength{\fboxsep}{0pt}\colorbox{green!30}{21.65(17.77)}&\multicolumn{1}{c}{\multirow{2}{*}{\setlength{\fboxsep}{0pt}\colorbox{green!30}{22.19(15.35)}}}&\multicolumn{1}{c}{\multirow{2}{*}{\setlength{\fboxsep}{0pt}\colorbox{green!30}{16.40(6.77)}}}&\setlength{\fboxsep}{0pt}\colorbox{green!30}{21.41(17.82)}\\
\multicolumn{1}{r}{}& \multicolumn{1}{r}{}& \multicolumn{1}{r}{} & \multicolumn{1}{r}{} & \setlength{\fboxsep}{0pt}\colorbox{green!30}{[29.26(22.36)]}  & \multicolumn{1}{r}{} & \multicolumn{1}{r}{} & [43.24(23.70)]  & \multicolumn{1}{r}{} & \multicolumn{1}{r}{} & \setlength{\fboxsep}{0pt}\colorbox{green!30}{[37.13(19.38)]}  & \multicolumn{1}{r}{} & \multicolumn{1}{r}{} & \setlength{\fboxsep}{0pt}\colorbox{green!30}{[36.69(19.69)]}  \\
\cmidrule(lr){2-14}
&\multicolumn{1}{c}{\multirow{2}{*}{VS~\cite{guo2017calibration}}}&\multicolumn{1}{c}{\multirow{2}{*}{\setlength{\fboxsep}{0pt}\colorbox{green!30}{12.70(17.22)}}}&\multicolumn{1}{c}{\multirow{2}{*}{\setlength{\fboxsep}{0pt}\colorbox{green!30}{24.60(6.98)}}}&\setlength{\fboxsep}{0pt}\colorbox{green!30}{14.57(20.26)}&\multicolumn{1}{c}{\multirow{2}{*}{\setlength{\fboxsep}{0pt}\colorbox{green!30}{38.40(16.92)}}}&\multicolumn{1}{c}{\multirow{2}{*}{\setlength{\fboxsep}{0pt}\colorbox{green!30}{38.96(7.45)}}}&\setlength{\fboxsep}{0pt}\colorbox{green!30}{41.20(20.23)}&\multicolumn{1}{c}{\multirow{2}{*}{18.05(18.25)}}&\multicolumn{1}{c}{\multirow{2}{*}{\setlength{\fboxsep}{0pt}\colorbox{green!30}{25.00(6.90)}}}&18.13(21.07)&\multicolumn{1}{c}{\multirow{2}{*}{17.98(18.25)}}&\multicolumn{1}{c}{\multirow{2}{*}{\setlength{\fboxsep}{0pt}\colorbox{green!30}{24.55(7.09)}}}&17.92(21.07)\\
\multicolumn{1}{r}{}& \multicolumn{1}{r}{}& \multicolumn{1}{r}{} & \multicolumn{1}{r}{} & \setlength{\fboxsep}{0pt}\colorbox{green!30}{[29.89(17.28)]}  & \multicolumn{1}{r}{} & \multicolumn{1}{r}{} & \setlength{\fboxsep}{0pt}\colorbox{green!30}{[50.42(25.40)]}  & \multicolumn{1}{r}{} & \multicolumn{1}{r}{} & [32.31(18.43)]  & \multicolumn{1}{r}{} & \multicolumn{1}{r}{} & [32.22(18.40)]  \\
\cmidrule(lr){2-14}
&\multicolumn{1}{c}{\multirow{2}{*}{ETS~\cite{zhang2020mix}}}&\multicolumn{1}{c}{\multirow{2}{*}{\setlength{\fboxsep}{0pt}\colorbox{green!30}{12.54(14.27)}}}&\multicolumn{1}{c}{\multirow{2}{*}{\setlength{\fboxsep}{0pt}\colorbox{green!30}{15.68(6.79)}}}&\setlength{\fboxsep}{0pt}\colorbox{green!30}{15.42(16.88)}&\multicolumn{1}{c}{\multirow{2}{*}{27.36(21.01)}}&\multicolumn{1}{c}{\multirow{2}{*}{\textbf{33.27(10.09)}}}&30.92(20.34)&\multicolumn{1}{c}{\multirow{2}{*}{\setlength{\fboxsep}{0pt}\colorbox{green!30}{22.37(15.42)}}}&\multicolumn{1}{c}{\multirow{2}{*}{\setlength{\fboxsep}{0pt}\colorbox{green!30}{16.72(6.58)}}}&\setlength{\fboxsep}{0pt}\colorbox{green!30}{21.80(17.83)}&\multicolumn{1}{c}{\multirow{2}{*}{\setlength{\fboxsep}{0pt}\colorbox{green!30}{22.29(15.41)}}}&\multicolumn{1}{c}{\multirow{2}{*}{15.82(6.93)}}&\setlength{\fboxsep}{0pt}\colorbox{green!30}{21.57(17.87)}\\
\multicolumn{1}{r}{}& \multicolumn{1}{r}{}& \multicolumn{1}{r}{} & \multicolumn{1}{r}{} & \setlength{\fboxsep}{0pt}\colorbox{green!30}{[29.41(22.44)]}  & \multicolumn{1}{r}{} & \multicolumn{1}{r}{} & [42.72(24.68)]  & \multicolumn{1}{r}{} & \multicolumn{1}{r}{} & \setlength{\fboxsep}{0pt}\colorbox{green!30}{[37.33(19.41)]}  & \multicolumn{1}{r}{} & \multicolumn{1}{r}{} & \setlength{\fboxsep}{0pt}\colorbox{green!30}{[36.85(19.75)]}  \\
\cmidrule(lr){2-14}
&\multicolumn{1}{c}{\multirow{2}{*}{DirODIR~\cite{kull2019beyond}}}&\multicolumn{1}{c}{\multirow{2}{*}{11.32(12.61)}}&\multicolumn{1}{c}{\multirow{2}{*}{14.17(17.73)}}&\setlength{\fboxsep}{0pt}\colorbox{green!30}{15.09(18.99)}&\multicolumn{1}{c}{\multirow{2}{*}{26.66(18.43)}}&\multicolumn{1}{c}{\multirow{2}{*}{34.04(12.88)}}&32.54(24.79)&\multicolumn{1}{c}{\multirow{2}{*}{\setlength{\fboxsep}{0pt}\colorbox{green!30}{19.59(13.16)}}}&\multicolumn{1}{c}{\multirow{2}{*}{15.27(7.75)}}&\setlength{\fboxsep}{0pt}\colorbox{green!30}{18.55(19.44)}&\multicolumn{1}{c}{\multirow{2}{*}{\setlength{\fboxsep}{0pt}\colorbox{green!30}{19.67(13.15)}}}&\multicolumn{1}{c}{\multirow{2}{*}{15.33(7.47)}}&\setlength{\fboxsep}{0pt}\colorbox{green!30}{18.71(19.34)}\\
\multicolumn{1}{r}{}& \multicolumn{1}{r}{}& \multicolumn{1}{r}{} & \multicolumn{1}{r}{} & \setlength{\fboxsep}{0pt}\colorbox{green!30}{[26.85(23.36)]}  & \multicolumn{1}{r}{} & \multicolumn{1}{r}{} & [46.07(18.04)]  & \multicolumn{1}{r}{} & \multicolumn{1}{r}{} & \setlength{\fboxsep}{0pt}\colorbox{green!30}{[34.48(23.17)]}  & \multicolumn{1}{r}{} & \multicolumn{1}{r}{} & \setlength{\fboxsep}{0pt}\colorbox{green!30}{[34.46(23.18)]}  \\
\cmidrule(lr){2-14}
&\multicolumn{1}{c}{\multirow{2}{*}{TS~\cite{guo2017calibration}}}&\multicolumn{1}{c}{\multirow{2}{*}{{\setlength{\fboxsep}{0pt}\colorbox{green!30}{12.53(14.28)}}}}&\multicolumn{1}{c}{\multirow{2}{*}{{\setlength{\fboxsep}{0pt}\colorbox{green!30}{15.69(6.79)}}}}&{\setlength{\fboxsep}{0pt}\colorbox{green!30}{15.41(16.89)}} &\multicolumn{1}{c}{\multirow{2}{*}{27.27(20.95)}}&\multicolumn{1}{c}{\multirow{2}{*}{33.27(10.17)}}&\textbf{30.91(20.32)} &\multicolumn{1}{c}{\multirow{2}{*}{{\setlength{\fboxsep}{0pt}\colorbox{green!30}{22.36(15.42)}}}}&\multicolumn{1}{c}{\multirow{2}{*}{{\setlength{\fboxsep}{0pt}\colorbox{green!30}{16.73(6.59)}}}}&{\setlength{\fboxsep}{0pt}\colorbox{green!30}{21.78(17.85)}} &\multicolumn{1}{c}{\multirow{2}{*}{{\setlength{\fboxsep}{0pt}\colorbox{green!30}{22.28(15.42)}}}}&\multicolumn{1}{c}{\multirow{2}{*}{15.83(6.94)}}&{\setlength{\fboxsep}{0pt}\colorbox{green!30}{21.56(17.88)}}\\
\multicolumn{1}{r}{}& \multicolumn{1}{r}{}&  \multicolumn{1}{r}{} & \multicolumn{1}{r}{} & {\setlength{\fboxsep}{0pt}\colorbox{green!30}{[29.37(22.47)]}} & \multicolumn{1}{r}{} & \multicolumn{1}{r}{} & [42.71(24.66)] & \multicolumn{1}{r}{} & \multicolumn{1}{r}{} & {\setlength{\fboxsep}{0pt}\colorbox{green!30}{[37.34(19.42)]}} & \multicolumn{1}{r}{} & \multicolumn{1}{r}{} & {\setlength{\fboxsep}{0pt}\colorbox{green!30}{[36.85(19.76)]}} \\
\cmidrule(lr){2-14}
&\multicolumn{1}{c}{\multirow{2}{*}{IBTS}}&\multicolumn{1}{c}{\multirow{2}{*}{{\setlength{\fboxsep}{0pt}\colorbox{green!30}{11.92(13.83)}}}}&\multicolumn{1}{c}{\multirow{2}{*}{{\setlength{\fboxsep}{0pt}\colorbox{green!30}{16.35(7.13)}}}}&{\setlength{\fboxsep}{0pt}\colorbox{green!30}{14.80(16.63)}} &\multicolumn{1}{c}{\multirow{2}{*}{26.25(20.26)}}&\multicolumn{1}{c}{\multirow{2}{*}{33.29(9.96)}}&31.19(19.97) &\multicolumn{1}{c}{\multirow{2}{*}{{\setlength{\fboxsep}{0pt}\colorbox{green!30}{21.68(15.31)}}}}&\multicolumn{1}{c}{\multirow{2}{*}{{\setlength{\fboxsep}{0pt}\colorbox{green!30}{17.31(6.90)}}}}&{\setlength{\fboxsep}{0pt}\colorbox{green!30}{21.06(17.81)}} &\multicolumn{1}{c}{\multirow{2}{*}{{\setlength{\fboxsep}{0pt}\colorbox{green!30}{21.62(15.29)}}}}&\multicolumn{1}{c}{\multirow{2}{*}{{\setlength{\fboxsep}{0pt}\colorbox{green!30}{16.40(7.33)}}}}&{\setlength{\fboxsep}{0pt}\colorbox{green!30}{20.82(17.84)}}\\
\multicolumn{1}{r}{}& \multicolumn{1}{r}{}& \multicolumn{1}{r}{} & \multicolumn{1}{r}{} & {\setlength{\fboxsep}{0pt}\colorbox{green!30}{[28.89(21.99)]}} & \multicolumn{1}{r}{} & \multicolumn{1}{r}{} & [43.45(23.27)] & \multicolumn{1}{r}{} & \multicolumn{1}{r}{} & {\setlength{\fboxsep}{0pt}\colorbox{green!30}{[36.62(19.32)]}} &  \multicolumn{1}{r}{} & \multicolumn{1}{r}{} & {\setlength{\fboxsep}{0pt}\colorbox{green!30}{[36.09(19.63)]}}  \\
\cmidrule(lr){2-14}
&\multicolumn{1}{c}{\multirow{2}{*}{\textbf{LTS}}}&\multicolumn{1}{c}{\multirow{2}{*}{\textbf{10.04(11.54)}}}&\multicolumn{1}{c}{\multirow{2}{*}{\textbf{13.44(6.23)}}}&\textbf{12.26(14.74)} &\multicolumn{1}{c}{\multirow{2}{*}{\textbf{26.17(15.67)}}}&\multicolumn{1}{c}{\multirow{2}{*}{35.18(12.31)}}&31.66(17.66) &\multicolumn{1}{c}{\multirow{2}{*}{\textbf{16.92(13.89)}}}&\multicolumn{1}{c}{\multirow{2}{*}{\textbf{14.53(6.18)}}}&\textbf{16.78(16.38)} &\multicolumn{1}{c}{\multirow{2}{*}{\textbf{16.91(13.93)}}}&\multicolumn{1}{c}{\multirow{2}{*}{\textbf{15.16(5.92)}}}&\textbf{16.85(16.45)}\\
\multicolumn{1}{r}{}& \multicolumn{1}{r}{}& \multicolumn{1}{r}{} & \multicolumn{1}{r}{} & [\textbf{24.31(18.63)}]  & \multicolumn{1}{r}{} & \multicolumn{1}{r}{} & [\textbf{40.13(20.39)}]  & \multicolumn{1}{r}{} & \multicolumn{1}{r}{} & [\textbf{30.05(17.45)}]  & \multicolumn{1}{r}{} & \multicolumn{1}{r}{} & [\textbf{30.21(17.60)}]  \\
\hline
\hline
\multicolumn{1}{c}{\multirow{16}{*}{\shortstack{Tiramisu\\CamVid\\(233)}}}&\multicolumn{1}{c}{\multirow{2}{*}{UC}}&\multicolumn{1}{c}{\multirow{2}{*}{{\setlength{\fboxsep}{0pt}\colorbox{green!30}{7.79(4.94)}}}}&\multicolumn{1}{c}{\multirow{2}{*}{{\setlength{\fboxsep}{0pt}\colorbox{green!30}{22.79(5.76)}}}}&{\setlength{\fboxsep}{0pt}\colorbox{green!30}{9.23(10.63)}}&\multicolumn{1}{c}{\multirow{2}{*}{{\setlength{\fboxsep}{0pt}\colorbox{green!30}{22.64(12.72)}}}}&\multicolumn{1}{c}{\multirow{2}{*}{{\setlength{\fboxsep}{0pt}\colorbox{green!30}{30.42(10.65)}}}}&{\setlength{\fboxsep}{0pt}\colorbox{green!30}{30.33(16.63)}}&\multicolumn{1}{c}{\multirow{2}{*}{{\setlength{\fboxsep}{0pt}\colorbox{green!30}{9.91(5.02)}}}}&\multicolumn{1}{c}{\multirow{2}{*}{{\setlength{\fboxsep}{0pt}\colorbox{green!30}{24.62(5.69)}}}}&{\setlength{\fboxsep}{0pt}\colorbox{green!30}{13.16(11.72)}}&\multicolumn{1}{c}{\multirow{2}{*}{{\setlength{\fboxsep}{0pt}\colorbox{green!30}{9.90(5.01)}}}}&\multicolumn{1}{c}{\multirow{2}{*}{{\setlength{\fboxsep}{0pt}\colorbox{green!30}{24.43(5.75)}}}}&{\setlength{\fboxsep}{0pt}\colorbox{green!30}{13.15(11.73)}}\\
\multicolumn{1}{r}{}& \multicolumn{1}{r}{}& \multicolumn{1}{r}{} & \multicolumn{1}{r}{} & {\setlength{\fboxsep}{0pt}\colorbox{green!30}{[25.35(12.80)]}}  & \multicolumn{1}{r}{} & \multicolumn{1}{r}{} & {\setlength{\fboxsep}{0pt}\colorbox{green!30}{[56.15(14.61)]}}  & \multicolumn{1}{r}{} & \multicolumn{1}{r}{} & {\setlength{\fboxsep}{0pt}\colorbox{green!30}{[30.60(12.48)]}}  & \multicolumn{1}{r}{} & \multicolumn{1}{r}{} & {\setlength{\fboxsep}{0pt}\colorbox{green!30}{[30.60(12.46)]}} \\
\cmidrule(lr){2-14}
&\multicolumn{1}{c}{\multirow{2}{*}{TS~\cite{guo2017calibration}}}&\multicolumn{1}{c}{\multirow{2}{*}{3.45(3.52)}}&\multicolumn{1}{c}{\multirow{2}{*}{{\setlength{\fboxsep}{0pt}\colorbox{green!30}{12.66(5.43)}}}}&{\setlength{\fboxsep}{0pt}\colorbox{green!30}{7.31(7.72)}}&\multicolumn{1}{c}{\multirow{2}{*}{{\setlength{\fboxsep}{0pt}\colorbox{green!30}{16.02(11.09)}}}}&\multicolumn{1}{c}{\multirow{2}{*}{\setlength{\fboxsep}{0pt}\colorbox{green!30}{23.57(12.88)}}}&27.29(16.23)&\multicolumn{1}{c}{\multirow{2}{*}{{\setlength{\fboxsep}{0pt}\colorbox{green!30}{9.42(3.90)}}}}&\multicolumn{1}{c}{\multirow{2}{*}{\setlength{\fboxsep}{0pt}\colorbox{green!30}{17.85(4.55)}}}&{\setlength{\fboxsep}{0pt}\colorbox{green!30}{13.50(10.14)}}&\multicolumn{1}{c}{\multirow{2}{*}{{\setlength{\fboxsep}{0pt}\colorbox{green!30}{9.44(3.92)}}}}&\multicolumn{1}{c}{\multirow{2}{*}{\setlength{\fboxsep}{0pt}\colorbox{green!30}{17.61(4.59)}}}&{\setlength{\fboxsep}{0pt}\colorbox{green!30}{13.50(10.17)}}\\
\multicolumn{1}{r}{}& \multicolumn{1}{r}{}& \multicolumn{1}{r}{} & \multicolumn{1}{r}{} & {\setlength{\fboxsep}{0pt}\colorbox{green!30}{[17.69(11.91)]}}  & \multicolumn{1}{r}{} & \multicolumn{1}{r}{} & \setlength{\fboxsep}{0pt}\colorbox{green!30}{[37.25(18.98)]}  & \multicolumn{1}{r}{} & \multicolumn{1}{r}{} & \setlength{\fboxsep}{0pt}\colorbox{green!30}{[27.72(11.37)]}  & \multicolumn{1}{r}{} & \multicolumn{1}{r}{} & \setlength{\fboxsep}{0pt}\colorbox{green!30}{[27.76(11.33)]}  \\
\cmidrule(lr){2-14}
&\multicolumn{1}{c}{\multirow{2}{*}{IBTS}}&\multicolumn{1}{c}{\multirow{2}{*}{3.63(3.65)}}&\multicolumn{1}{c}{\multirow{2}{*}{{\setlength{\fboxsep}{0pt}\colorbox{green!30}{12.57(6.07)}}}}&{\setlength{\fboxsep}{0pt}\colorbox{green!30}{7.25(7.67)}}&\multicolumn{1}{c}{\multirow{2}{*}{{\setlength{\fboxsep}{0pt}\colorbox{green!30}{16.01(10.21)}}}}&\multicolumn{1}{c}{\multirow{2}{*}{\setlength{\fboxsep}{0pt}\colorbox{green!30}{23.24(13.00)}}}&27.04(15.94)&\multicolumn{1}{c}{\multirow{2}{*}{{\setlength{\fboxsep}{0pt}\colorbox{green!30}{9.47(3.89)}}}}&\multicolumn{1}{c}{\multirow{2}{*}{\setlength{\fboxsep}{0pt}\colorbox{green!30}{17.98(4.88)}}}&{\setlength{\fboxsep}{0pt}\colorbox{green!30}{13.48(10.12)}}&\multicolumn{1}{c}{\multirow{2}{*}{{\setlength{\fboxsep}{0pt}\colorbox{green!30}{9.49(3.91)}}}}&\multicolumn{1}{c}{\multirow{2}{*}{\setlength{\fboxsep}{0pt}\colorbox{green!30}{17.75(4.92)}}}&{\setlength{\fboxsep}{0pt}\colorbox{green!30}{13.48(10.16)}}\\
\multicolumn{1}{r}{}& \multicolumn{1}{r}{}& \multicolumn{1}{r}{} & \multicolumn{1}{r}{} & {\setlength{\fboxsep}{0pt}\colorbox{green!30}{[17.60(11.91)]}}  & \multicolumn{1}{r}{} & \multicolumn{1}{r}{} & \setlength{\fboxsep}{0pt}\colorbox{green!30}{[37.61(19.27)]}  & \multicolumn{1}{r}{} & \multicolumn{1}{r}{} & \setlength{\fboxsep}{0pt}\colorbox{green!30}{[27.69(11.38)]}  & \multicolumn{1}{r}{} & \multicolumn{1}{r}{} & \setlength{\fboxsep}{0pt}\colorbox{green!30}{[27.76(11.33)]}  \\
\cmidrule(lr){2-14}
&\multicolumn{1}{c}{\multirow{2}{*}{\textbf{LTS}}}&\multicolumn{1}{c}{\multirow{2}{*}{3.40(3.59)}}&\multicolumn{1}{c}{\multirow{2}{*}{11.80(5.20)}}&\textbf{6.89(7.64)}&\multicolumn{1}{c}{\multirow{2}{*}{\textbf{12.44(7.48)}}}&\multicolumn{1}{c}{\multirow{2}{*}{\setlength{\fboxsep}{0pt}\colorbox{green!30}{22.17(9.53)}}}&27.64(16.67)&\multicolumn{1}{c}{\multirow{2}{*}{\setlength{\fboxsep}{0pt}\colorbox{green!30}{8.76(4.05)}}}&\multicolumn{1}{c}{\multirow{2}{*}{\setlength{\fboxsep}{0pt}\colorbox{green!30}{17.77(4.26)}}}&\setlength{\fboxsep}{0pt}\colorbox{green!30}{12.66(10.04)}&\multicolumn{1}{c}{\multirow{2}{*}{\setlength{\fboxsep}{0pt}\colorbox{green!30}{8.73(4.03)}}}&\multicolumn{1}{c}{\multirow{2}{*}{\setlength{\fboxsep}{0pt}\colorbox{green!30}{17.32(4.32)}}}&\setlength{\fboxsep}{0pt}\colorbox{green!30}{12.61(10.07)}\\
\multicolumn{1}{r}{}& \multicolumn{1}{r}{}& \multicolumn{1}{r}{} & \multicolumn{1}{r}{} & \setlength{\fboxsep}{0pt}\colorbox{green!30}{[16.61(11.81)]}  & \multicolumn{1}{r}{} & \multicolumn{1}{r}{} & \setlength{\fboxsep}{0pt}\colorbox{green!30}{[37.92(20.47)]}  & \multicolumn{1}{r}{} & \multicolumn{1}{r}{} & \setlength{\fboxsep}{0pt}\colorbox{green!30}{[26.78(11.22)]}  & \multicolumn{1}{r}{} & \multicolumn{1}{r}{} & \setlength{\fboxsep}{0pt}\colorbox{green!30}{[26.76(11.22)]}  \\
\cmidrule(lr){2-14}
&\multicolumn{1}{c}{\multirow{2}{*}{MMCE~\cite{kumar2018trainable}}}&\multicolumn{1}{c}{\multirow{2}{*}{4.45(4.03)}}&\multicolumn{1}{c}{\multirow{2}{*}{--}}&--&\multicolumn{1}{c}{\multirow{2}{*}{18.83(10.82)}}&\multicolumn{1}{c}{\multirow{2}{*}{--}}&--&\multicolumn{1}{c}{\multirow{2}{*}{8.59(5.98)}}&\multicolumn{1}{c}{\multirow{2}{*}{--}}&--&\multicolumn{1}{c}{\multirow{2}{*}{8.50(5.00)}}&\multicolumn{1}{c}{\multirow{2}{*}{--}}&--\\
\multicolumn{1}{r}{}& \multicolumn{1}{r}{}& \multicolumn{1}{r}{} & \multicolumn{1}{r}{} & [--]  & \multicolumn{1}{r}{} & \multicolumn{1}{r}{} & [--]  & \multicolumn{1}{r}{} & \multicolumn{1}{r}{} & [--]  & \multicolumn{1}{r}{} & \multicolumn{1}{r}{} & [--]  \\
\cmidrule(lr){2-14}
&\multicolumn{1}{c}{\multirow{2}{*}{MMCE~\cite{kumar2018trainable}+\textbf{LTS}}}&\multicolumn{1}{c}{\multirow{2}{*}{4.15(3.54)}}&\multicolumn{1}{c}{\multirow{2}{*}{--}}&--&\multicolumn{1}{c}{\multirow{2}{*}{17.98(10.69)}}&\multicolumn{1}{c}{\multirow{2}{*}{--}}&--&\multicolumn{1}{c}{\multirow{2}{*}{7.28(3.80)}}&\multicolumn{1}{c}{\multirow{2}{*}{--}}&--&\multicolumn{1}{c}{\multirow{2}{*}{7.17(3.84)}}&\multicolumn{1}{c}{\multirow{2}{*}{--}}&--\\
\multicolumn{1}{r}{}& \multicolumn{1}{r}{}& \multicolumn{1}{r}{} & \multicolumn{1}{r}{} & [--]  & \multicolumn{1}{r}{} & \multicolumn{1}{r}{} & [--]  & \multicolumn{1}{r}{} & \multicolumn{1}{r}{} & [--]  & \multicolumn{1}{r}{} & \multicolumn{1}{r}{} & [--]  \\
\cmidrule(lr){2-14}
&\multicolumn{1}{c}{\multirow{2}{*}{FL~\cite{mukhoti2020calibrating}}}&\multicolumn{1}{c}{\multirow{2}{*}{3.47(3.11)}}&\multicolumn{1}{c}{\multirow{2}{*}{\textbf{8.68(5.45)}}}&\setlength{\fboxsep}{0pt}\colorbox{green!30}{9.01(7.19)}&\multicolumn{1}{c}{\multirow{2}{*}{14.77(13.28)}}&\multicolumn{1}{c}{\multirow{2}{*}{\textbf{17.62(13.53)}}}&28.37(15.86)&\multicolumn{1}{c}{\multirow{2}{*}{\setlength{\fboxsep}{0pt}\colorbox{green!30}{7.46(3.43)}}}&\multicolumn{1}{c}{\multirow{2}{*}{\textbf{14.08(4.49)}}}&\setlength{\fboxsep}{0pt}\colorbox{green!30}{14.09(9.78)}&\multicolumn{1}{c}{\multirow{2}{*}{\setlength{\fboxsep}{0pt}\colorbox{green!30}{7.43(3.45)}}}&\multicolumn{1}{c}{\multirow{2}{*}{\textbf{13.63(4.57)}}}&\setlength{\fboxsep}{0pt}\colorbox{green!30}{14.06(9.83)}\\
\multicolumn{1}{r}{}& \multicolumn{1}{r}{}& \multicolumn{1}{r}{} & \multicolumn{1}{r}{} & \setlength{\fboxsep}{0pt}\colorbox{green!30}{[13.84(11.67)]}  & \multicolumn{1}{r}{} & \multicolumn{1}{r}{} & \setlength{\fboxsep}{0pt}\colorbox{green!30}{[33.33(18.08)]}  & \multicolumn{1}{r}{} & \multicolumn{1}{r}{} & \setlength{\fboxsep}{0pt}\colorbox{green!30}{[23.60(12.11)]}  & \multicolumn{1}{r}{} & \multicolumn{1}{r}{} & \setlength{\fboxsep}{0pt}\colorbox{green!30}{[23.62(12.05)]}  \\
\cmidrule(lr){2-14}
&\multicolumn{1}{c}{\multirow{2}{*}{FL~\cite{mukhoti2020calibrating}+\textbf{LTS}}}&\multicolumn{1}{c}{\multirow{2}{*}{\textbf{3.13(3.64)}}}&\multicolumn{1}{c}{\multirow{2}{*}{11.06(5.55)}}&6.96(8.21)&\multicolumn{1}{c}{\multirow{2}{*}{14.51(11.07)}}&\multicolumn{1}{c}{\multirow{2}{*}{19.61(9.82)}}&\textbf{26.91(16.06)}&\multicolumn{1}{c}{\multirow{2}{*}{\textbf{6.78(4.05)}}}&\multicolumn{1}{c}{\multirow{2}{*}{15.28(4.76)}}&\textbf{11.85(10.69)}&\multicolumn{1}{c}{\multirow{2}{*}{\textbf{6.73(4.05)}}}&\multicolumn{1}{c}{\multirow{2}{*}{14.76(4.84)}}&\textbf{11.83(10.73)}\\
\multicolumn{1}{r}{}& \multicolumn{1}{r}{}& \multicolumn{1}{r}{} & \multicolumn{1}{r}{} & [\textbf{12.66(12.87)}]  & \multicolumn{1}{r}{} & \multicolumn{1}{r}{} & [\textbf{32.27(19.08)}]  & \multicolumn{1}{r}{} & \multicolumn{1}{r}{} & [\textbf{22.04(13.05)}]  & \multicolumn{1}{r}{} & \multicolumn{1}{r}{} & [\textbf{22.10(12.96)}]  \\
\hline
\hline
\multicolumn{1}{c}{\multirow{8}{*}{\shortstack{U-Net\\LPBA40\\(40)}}}&\multicolumn{1}{c}{\multirow{2}{*}{UC}}&\multicolumn{1}{c}{\multirow{2}{*}{{\setlength{\fboxsep}{0pt}\colorbox{green!30}{5.58(1.16)}}}}&\multicolumn{1}{c}{\multirow{2}{*}{{\setlength{\fboxsep}{0pt}\colorbox{green!30}{14.53(1.67)}}}}&{\setlength{\fboxsep}{0pt}\colorbox{green!30}{5.62(0.95)}} &\multicolumn{1}{c}{\multirow{2}{*}{{\setlength{\fboxsep}{0pt}\colorbox{green!30}{10.71(2.10)}}}}&\multicolumn{1}{c}{\multirow{2}{*}{{\setlength{\fboxsep}{0pt}\colorbox{green!30}{19.18(1.71)}}}}&{\setlength{\fboxsep}{0pt}\colorbox{green!30}{11.74(4.55)}} &\multicolumn{1}{c}{\multirow{2}{*}{{\setlength{\fboxsep}{0pt}\colorbox{green!30}{7.34(1.04)}}}}&\multicolumn{1}{c}{\multirow{2}{*}{{\setlength{\fboxsep}{0pt}\colorbox{green!30}{15.01(1.63)}}}}&{\setlength{\fboxsep}{0pt}\colorbox{green!30}{8.24(3.08)}} &\multicolumn{1}{c}{\multirow{2}{*}{{\setlength{\fboxsep}{0pt}\colorbox{green!30}{7.13(1.02)}}}}&\multicolumn{1}{c}{\multirow{2}{*}{{\setlength{\fboxsep}{0pt}\colorbox{green!30}{14.64(1.62)}}}}&{\setlength{\fboxsep}{0pt}\colorbox{green!30}{8.20(3.06)}}\\
\multicolumn{1}{r}{}& \multicolumn{1}{r}{}& \multicolumn{1}{r}{} & \multicolumn{1}{r}{} & {\setlength{\fboxsep}{0pt}\colorbox{green!30}{[10.23(2.82)]}} &  \multicolumn{1}{r}{} & \multicolumn{1}{r}{} & {\setlength{\fboxsep}{0pt}\colorbox{green!30}{[19.46(4.75)]}} & \multicolumn{1}{r}{} & \multicolumn{1}{r}{} & {\setlength{\fboxsep}{0pt}\colorbox{green!30}{[12.98(2.88)]}} &  \multicolumn{1}{r}{} & \multicolumn{1}{r}{} & {\setlength{\fboxsep}{0pt}\colorbox{green!30}{[12.93(2.83)]}} \\
\cmidrule(lr){2-14}
&\multicolumn{1}{c}{\multirow{2}{*}{TS~\cite{guo2017calibration}}}&\multicolumn{1}{c}{\multirow{2}{*}{{\setlength{\fboxsep}{0pt}\colorbox{green!30}{1.43(0.74)}}}}&\multicolumn{1}{c}{\multirow{2}{*}{{\setlength{\fboxsep}{0pt}\colorbox{green!30}{8.74(1.07)}}}}&2.24(1.93) &\multicolumn{1}{c}{\multirow{2}{*}{4.37(3.73)}}&\multicolumn{1}{c}{\multirow{2}{*}{{\setlength{\fboxsep}{0pt}\colorbox{green!30}{14.90(1.74)}}}}&{\setlength{\fboxsep}{0pt}\colorbox{green!30}{6.68(4.44)}} &\multicolumn{1}{c}{\multirow{2}{*}{6.47(0.91)}}&\multicolumn{1}{c}{\multirow{2}{*}{{\setlength{\fboxsep}{0pt}\colorbox{green!30}{10.06(1.10)}}}}&7.81(2.54) &\multicolumn{1}{c}{\multirow{2}{*}{6.30(0.90)}}&\multicolumn{1}{c}{\multirow{2}{*}{{\setlength{\fboxsep}{0pt}\colorbox{green!30}{9.46(1.06)}}}}&7.77(2.55)\\
\multicolumn{1}{r}{}& \multicolumn{1}{r}{}& \multicolumn{1}{r}{} & \multicolumn{1}{r}{} & {\setlength{\fboxsep}{0pt}\colorbox{green!30}{[5.66(2.49)]}} &  \multicolumn{1}{r}{} & \multicolumn{1}{r}{} & {\setlength{\fboxsep}{0pt}\colorbox{green!30}{[11.03(5.31)]}} & \multicolumn{1}{r}{} & \multicolumn{1}{r}{}  & [11.49(2.53)]  & \multicolumn{1}{r}{} & \multicolumn{1}{r}{} & [11.49(2.48)] \\
\cmidrule(lr){2-14}
&\multicolumn{1}{c}{\multirow{2}{*}{IBTS}}&\multicolumn{1}{c}{\multirow{2}{*}{{\setlength{\fboxsep}{0pt}\colorbox{green!30}{1.47(0.77)}}}}&\multicolumn{1}{c}{\multirow{2}{*}{{\setlength{\fboxsep}{0pt}\colorbox{green!30}{8.79(1.14)}}}}&{\setlength{\fboxsep}{0pt}\colorbox{green!30}{2.34(1.98)}} &\multicolumn{1}{c}{\multirow{2}{*}{4.40(3.65)}}&\multicolumn{1}{c}{\multirow{2}{*}{{\setlength{\fboxsep}{0pt}\colorbox{green!30}{14.96(1.75)}}}}&{\setlength{\fboxsep}{0pt}\colorbox{green!30}{6.79(4.36)}} &\multicolumn{1}{c}{\multirow{2}{*}{6.46(0.91)}}&\multicolumn{1}{c}{\multirow{2}{*}{{\setlength{\fboxsep}{0pt}\colorbox{green!30}{10.10(1.17)}}}}&7.80(2.55) &\multicolumn{1}{c}{\multirow{2}{*}{6.29(0.90)}}&\multicolumn{1}{c}{\multirow{2}{*}{{\setlength{\fboxsep}{0pt}\colorbox{green!30}{9.50(1.13)}}}}&7.76(2.56)\\
\multicolumn{1}{r}{}& \multicolumn{1}{r}{}& \multicolumn{1}{r}{} & \multicolumn{1}{r}{} & {\setlength{\fboxsep}{0pt}\colorbox{green!30}{[5.81(2.46)]}} & \multicolumn{1}{r}{} & \multicolumn{1}{r}{} & {\setlength{\fboxsep}{0pt}\colorbox{green!30}{[10.84(4.60)]}} & \multicolumn{1}{r}{} & \multicolumn{1}{r}{} & [11.51(2.54)] & \multicolumn{1}{r}{} & \multicolumn{1}{r}{} & [11.51(2.49)] \\
\cmidrule(lr){2-14}
&\multicolumn{1}{c}{\multirow{2}{*}{\textbf{LTS}}}&\multicolumn{1}{c}{\multirow{2}{*}{\textbf{0.90(0.51)}}}&\multicolumn{1}{c}{\multirow{2}{*}{\textbf{7.00(1.23)}}}&\textbf{1.90(1.38)} &\multicolumn{1}{c}{\multirow{2}{*}{\textbf{3.51(3.42)}}}&\multicolumn{1}{c}{\multirow{2}{*}{\textbf{12.33(1.96)}}}&\textbf{5.80(3.68)} &\multicolumn{1}{c}{\multirow{2}{*}{\textbf{6.27(0.93)}}}&\multicolumn{1}{c}{\multirow{2}{*}{\textbf{8.53(1.04)}}}&\textbf{7.60(2.49)} &\multicolumn{1}{c}{\multirow{2}{*}{\textbf{6.09(0.92)}}}&\multicolumn{1}{c}{\multirow{2}{*}{\textbf{7.93(1.08)}}}&\textbf{7.56(2.49)}\\
\multicolumn{1}{r}{}& \multicolumn{1}{r}{}& \multicolumn{1}{r}{} & \multicolumn{1}{r}{} & [\textbf{3.70(2.45)}] & \multicolumn{1}{r}{} & \multicolumn{1}{r}{} & [\textbf{9.29(4.73)}] & \multicolumn{1}{r}{} & \multicolumn{1}{r}{} & [\textbf{10.89(2.61)}] & \multicolumn{1}{r}{} & \multicolumn{1}{r}{} & [\textbf{10.87(2.58)}] \\
\hline
\hline
\multicolumn{1}{c}{\multirow{8}{*}{\shortstack{VoteNet+\\LPBA40\\(640)}}}&\multicolumn{1}{c}{\multirow{2}{*}{UC}}&\multicolumn{1}{c}{\multirow{2}{*}{{\setlength{\fboxsep}{0pt}\colorbox{green!30}{7.26(0.60)}}}}&\multicolumn{1}{c}{\multirow{2}{*}{{\setlength{\fboxsep}{0pt}\colorbox{green!30}{12.78(0.75)}}}}&{\setlength{\fboxsep}{0pt}\colorbox{green!30}{7.25(2.73)}}&\multicolumn{1}{c}{\multirow{2}{*}{{\setlength{\fboxsep}{0pt}\colorbox{green!30}{12.65(0.76)}}}}&\multicolumn{1}{c}{\multirow{2}{*}{{\setlength{\fboxsep}{0pt}\colorbox{green!30}{19.99(1.10)}}}}&{\setlength{\fboxsep}{0pt}\colorbox{green!30}{12.67(3.14)}}&\multicolumn{1}{c}{\multirow{2}{*}{{\setlength{\fboxsep}{0pt}\colorbox{green!30}{7.29(0.59)}}}}&\multicolumn{1}{c}{\multirow{2}{*}{{\setlength{\fboxsep}{0pt}\colorbox{green!30}{12.79(0.75)}}}}&{\setlength{\fboxsep}{0pt}\colorbox{green!30}{7.35(2.67)}}&\multicolumn{1}{c}{\multirow{2}{*}{{\setlength{\fboxsep}{0pt}\colorbox{green!30}{2.30(0.39)}}}}&\multicolumn{1}{c}{\multirow{2}{*}{{\setlength{\fboxsep}{0pt}\colorbox{green!30}{3.52(0.55)}}}}&{\setlength{\fboxsep}{0pt}\colorbox{green!30}{6.25(2.87)}}\\
\multicolumn{1}{r}{}& \multicolumn{1}{r}{}& \multicolumn{1}{r}{} & \multicolumn{1}{r}{} & {\setlength{\fboxsep}{0pt}\colorbox{green!30}{[11.16(1.77)]}} &  \multicolumn{1}{r}{} & \multicolumn{1}{r}{} & {\setlength{\fboxsep}{0pt}\colorbox{green!30}{[16.72(1.63)]}}  & \multicolumn{1}{r}{} & \multicolumn{1}{r}{} & {\setlength{\fboxsep}{0pt}\colorbox{green!30}{[11.22(1.78)]}} &  \multicolumn{1}{r}{} & \multicolumn{1}{r}{} & {\setlength{\fboxsep}{0pt}\colorbox{green!30}{[10.23(1.58)]}} \\
\cmidrule(lr){2-14}
&\multicolumn{1}{c}{\multirow{2}{*}{TS~\cite{guo2017calibration}}}&\multicolumn{1}{c}{\multirow{2}{*}{{\setlength{\fboxsep}{0pt}\colorbox{green!30}{5.07(0.59)}}}}&\multicolumn{1}{c}{\multirow{2}{*}{{\setlength{\fboxsep}{0pt}\colorbox{green!30}{9.48(0.77)}}}}&{\setlength{\fboxsep}{0pt}\colorbox{green!30}{5.08(2.48)}}&\multicolumn{1}{c}{\multirow{2}{*}{{\setlength{\fboxsep}{0pt}\colorbox{green!30}{8.44(0.84)}}}}&\multicolumn{1}{c}{\multirow{2}{*}{{\setlength{\fboxsep}{0pt}\colorbox{green!30}{18.69(1.27)}}}}&{\setlength{\fboxsep}{0pt}\colorbox{green!30}{8.54(3.39)}}&\multicolumn{1}{c}{\multirow{2}{*}{{\setlength{\fboxsep}{0pt}\colorbox{green!30}{5.11(0.58)}}}}&\multicolumn{1}{c}{\multirow{2}{*}{{\setlength{\fboxsep}{0pt}\colorbox{green!30}{9.69(0.80)}}}}&{\setlength{\fboxsep}{0pt}\colorbox{green!30}{5.29(2.39)}}&\multicolumn{1}{c}{\multirow{2}{*}{{\setlength{\fboxsep}{0pt}\colorbox{green!30}{2.12(0.37)}}}}&\multicolumn{1}{c}{\multirow{2}{*}{{\setlength{\fboxsep}{0pt}\colorbox{green!30}{3.38(0.52)}}}}&{\setlength{\fboxsep}{0pt}\colorbox{green!30}{4.62(2.44)}}\\
\multicolumn{1}{r}{}& \multicolumn{1}{r}{}& \multicolumn{1}{r}{} & \multicolumn{1}{r}{} & {\setlength{\fboxsep}{0pt}\colorbox{green!30}{[8.77(1.74)]}} &  \multicolumn{1}{r}{} & \multicolumn{1}{r}{} & {\setlength{\fboxsep}{0pt}\colorbox{green!30}{[13.14(2.08)]}} & \multicolumn{1}{r}{} & \multicolumn{1}{r}{} & {\setlength{\fboxsep}{0pt}\colorbox{green!30}{[8.90(1.78)]}} & \multicolumn{1}{r}{} & \multicolumn{1}{r}{} & {\setlength{\fboxsep}{0pt}\colorbox{green!30}{[8.21(1.59)]}} \\
\cmidrule(lr){2-14}
&\multicolumn{1}{c}{\multirow{2}{*}{IBTS}}&\multicolumn{1}{c}{\multirow{2}{*}{{\setlength{\fboxsep}{0pt}\colorbox{green!30}{2.77(0.37)}}}}&\multicolumn{1}{c}{\multirow{2}{*}{\textbf{4.06(0.45)}}}&{\setlength{\fboxsep}{0pt}\colorbox{green!30}{3.14(1.09)}}&\multicolumn{1}{c}{\multirow{2}{*}{{\setlength{\fboxsep}{0pt}\colorbox{green!30}{5.57(0.97)}}}}&\multicolumn{1}{c}{\multirow{2}{*}{{\setlength{\fboxsep}{0pt}\colorbox{green!30}{16.90(2.20)}}}}&{\setlength{\fboxsep}{0pt}\colorbox{green!30}{6.57(2.99)}}&\multicolumn{1}{c}{\multirow{2}{*}{{\setlength{\fboxsep}{0pt}\colorbox{green!30}{3.28(0.39)}}}}&\multicolumn{1}{c}{\multirow{2}{*}{\textbf{4.27(0.55)}}}&{\setlength{\fboxsep}{0pt}\colorbox{green!30}{3.96(1.26)}}&\multicolumn{1}{c}{\multirow{2}{*}{{\setlength{\fboxsep}{0pt}\colorbox{green!30}{0.69(0.26)}}}}&\multicolumn{1}{c}{\multirow{2}{*}{{\setlength{\fboxsep}{0pt}\colorbox{green!30}{2.30(0.40)}}}}&{\setlength{\fboxsep}{0pt}\colorbox{green!30}{3.15(1.06)}}\\
\multicolumn{1}{r}{}& \multicolumn{1}{r}{}& \multicolumn{1}{r}{} & \multicolumn{1}{r}{} & {\setlength{\fboxsep}{0pt}\colorbox{green!30}{[3.21(1.13)]}} & \multicolumn{1}{r}{} & \multicolumn{1}{r}{} & {\setlength{\fboxsep}{0pt}\colorbox{green!30}{[5.26(2.81)]}} & \multicolumn{1}{r}{} & \multicolumn{1}{r}{} & {\setlength{\fboxsep}{0pt}\colorbox{green!30}{[4.27(1.62)]}} & \multicolumn{1}{r}{} & \multicolumn{1}{r}{} & {\setlength{\fboxsep}{0pt}\colorbox{green!30}{[3.63(1.12)]}} \\
\cmidrule(lr){2-14}
&\multicolumn{1}{c}{\multirow{2}{*}{\textbf{LTS}}}&\multicolumn{1}{c}{\multirow{2}{*}{\textbf{0.71(0.33)}}}&\multicolumn{1}{c}{\multirow{2}{*}{4.18(0.73)}}&\textbf{1.64(0.94)}&\multicolumn{1}{c}{\multirow{2}{*}{\textbf{1.46(0.67)}}}&\multicolumn{1}{c}{\multirow{2}{*}{\textbf{11.55(1.68)}}}&\textbf{3.54(2.02)}&\multicolumn{1}{c}{\multirow{2}{*}{\textbf{1.24(0.49)}}}&\multicolumn{1}{c}{\multirow{2}{*}{4.87(0.83)}}&\textbf{2.52(1.26)}&\multicolumn{1}{c}{\multirow{2}{*}{\textbf{0.30(0.24)}}}&\multicolumn{1}{c}{\multirow{2}{*}{\textbf{2.14(0.43)}}}&\textbf{1.90(1.00)}\\
\multicolumn{1}{r}{}& \multicolumn{1}{r}{}& \multicolumn{1}{r}{} & \multicolumn{1}{r}{} & [\textbf{2.43(1.64)}] & \multicolumn{1}{r}{} & \multicolumn{1}{r}{} & [\textbf{4.52(3.26)}] & \multicolumn{1}{r}{} & \multicolumn{1}{r}{} & [\textbf{3.45(1.94)}] & \multicolumn{1}{r}{} & \multicolumn{1}{r}{} & [\textbf{2.69(1.35)}]\\
\specialrule{.15em}{.05em}{.05em}
\end{tabular} 
\end{adjustbox}
\caption{Calibration results for 4 different segmentation models on 4 different tasks. Results are reported in mean(std) format. The number of testing samples are listed in parentheses underneath each dataset name. UC denotes the uncalibrated result. $\downarrow$ denotes that lower is better. Best results are bolded and green indicates statistically significant differences w.r.t. LTS (FL+LTS for CamVid). Note that due to GPU memory limits, results of MMCE and MMCE+LTS are for downsampled images, thus can not be directly compared with other methods. The goal of including them is to show that LTS can improve MMCE. LTS generally achieves the best performance on almost all metrics in the \textit{All} region, \textit{Boundary} region and \textit{Local} region. Additional results are in {\supp}~\ref{app:additional_results}.
\vspace{-2mm}
}
\label{tab:All_metrics}
\end{table*}

\subsection{FCN semantic segmentation on COCO}
\label{sec:COCO_exp}


\textbf{General:} We use a Fully-Convolutional Network (FCN)~\cite{long2015fully} with a ResNet-101~\cite{he2016deep} backbone for semantic segmentation on the COCO dataset. Tab.~\ref{tab:All_metrics} shows our quantitative evaluation results for calibrating such a segmentation model. In the \textit{All} region, TS and IBTS do not improve calibration performance, possibly because the natural images in the COCO dataset are complex and vary significantly in type and shape, yet TS uses a global temperature value for all images. IBTS performs slightly better than TS on average because it uses an image-dependent temperature scaling to capture image variations, though it cannot explain the spatial image variations in the \textit{All} region. Furthermore, we observe that LTS is in general significantly better than classical methods, i.e. IsoReg~\cite{zadrozny2002transforming}, VS~\cite{guo2017calibration}, ETS~\cite{zhang2020mix} and DirODIR~\cite{kull2019beyond}. This is likely because these classical methods treat each pixel/voxel independently without considering their spatial correlations in semantic segmentation.

\textbf{Boundary:} The relatively low segmentation performance of the segmentation network suggests that such spatial variations might matter. Specifically, semantic segmentation results in a mean IOU of 63.7\%, indicating how challenging this dataset is. 
Further, all methods except VS~\cite{guo2017calibration} show significant improvements 
in the \textit{Boundary} region. This indicates that (1) these boundary regions share common miscalibration patterns, which can be captured by most methods, and (2) miscalibration effects are indeed, as expected, more pronounced in these boundary regions. 

\textbf{Local:} Different from the \textit{All} region, the \textit{Local} region is based on randomly extracted small patches of an image. Specifically, \textit{Local-Avg} reflects the average performance of local probability calibration while \textit{Local-Max} reflects the calibration performance in the most uncalibrated patch region thus measuring the worst-case calibration result. Results in ECE, SCE and ACE all suggests that LTS can calibrate the entire image region as well as local image regions. Other approaches result in significantly worse calibrations. 

\textbf{MCE:} Further, the MCE results illustrate that probability calibration for semantic segmentation is indeed very challenging compared with classification. This is because classification annotation is typically very accurate while per-pixel/voxel annotation of semantic segmentation can be difficult, especially at object boundaries. For example, in the extreme case, if one pixel/voxel is annotated wrong but predicted correct (or vice versa), then the accuracy is 0 while the prediction confidence is nearly 100\%. This will result in MCE values close to 100\% for bin based evaluation. 
Usually, these outliers make up only a small portion of all pixels/voxels in an image. Examples for such \textit{outliers} can be observed in Fig.~\ref{fig:local_rd} uncalibrated patch 1 and 3 at the lowest confidence point, where the percentage of samples is very small, but the accuracy-confidence difference is notable. Thus, for all experiments, we expect that MCE can be very high compared to the classification probability calibration literature. LTS can improve MCE values, but may still result in large MCE values.

\subsection{Tiramisu semantic segmentation on CamVid}
\label{sec:Tiramisu_exp}
\vspace{-2mm}

\textbf{General:} We use the Tiramisu segmentation model~\cite{jegou2017one} on the CamVid dataset. Tab.~\ref{tab:All_metrics} shows quantitative results for calibrating this segmentation model. Compared with the results for the COCO dataset, all four metrics are reduced greatly. This is mainly because the images in CamVid only contain 11 class street scenes and the images are relatively consistent for such scenes. Instead, images from the COCO dataset show different objects in different images. See {\supp}~\ref{app:dataset_variation} for details. Results are consistent with the COCO dataset. Specifically, (1) LTS can calibrate both the \emph{All} region probabilities as well as the local regions inside an image; (2) LTS is, in general, significantly better than TS and IBTS for most comparisons.

\vspace{-0.5mm}
\textbf{Joint Prediction and Calibration:} Further, we show that our approach is beneficial for methods that jointly optimize prediction and calibration~\cite{kumar2018trainable,mukhoti2020calibrating}. MMCE~\cite{kumar2018trainable} and FL~\cite{mukhoti2020calibrating} both consider
miscalibration when training semantic segmentation networks. 
Tab.~\ref{tab:All_metrics} shows that compared to the uncalibrated results, both MMCE and FL work significantly better. Furthermore, with LTS as a post-hoc calibration, calibration performance further consistently improves (except \emph{Boundary} regions for FL).
These findings are consistent with the results in \cite{mukhoti2020calibrating} where TS is used as a post-hoc calibration method and the authors show that MMCE+TS and FL+TS work consistently better than MMCE and FL. Hence, this favors our LTS as a successful post-hoc calibration method for segmentation. 

\begin{table*}[!t] 
\centering
\begin{adjustbox}{max width=0.85\textwidth}
\begin{tabular}{ccccccccccccc}
\specialrule{.15em}{.05em}{.05em} 
\multicolumn{1}{c}{\multirow{2}{*}{Method}} & \multicolumn{1}{c}{\multirow{2}{*}{ASD (mm)$\downarrow$}} & \multicolumn{1}{c}{\multirow{2}{*}{SD (\%)$\uparrow$}} & \multicolumn{1}{c}{\multirow{2}{*}{95MD (mm)$\downarrow$}} & \multicolumn{2}{c}{\multirow{1}{*}{VD (\%)$\uparrow$}} & \multicolumn{3}{c}{\multirow{1}{*}{VC(\textit{All}) (\%)}} & \multicolumn{3}{c}{\multirow{1}{*}{VC(\textit{Boundary}) (\%)}}\\ 
\cmidrule(lr){5-6}
\cmidrule(lr){7-9}
\cmidrule(lr){10-12}
\multicolumn{1}{r}{} & \multicolumn{1}{c}{} & \multicolumn{1}{c}{} & \multicolumn{1}{c}{} & \textit{All} & \textit{Boundary}
& rate & w$\rightarrow$c $\uparrow$ & c$\rightarrow$w $\downarrow$
& rate & w$\rightarrow$c $\uparrow$ & c$\rightarrow$w $\downarrow$\\
\specialrule{.15em}{.05em}{.05em}
Best Fusion&0.04(0.01)&99.06(0.23)&0.18(0.08)&98.99(0.19)&97.29(0.45)&20.53(1.13)&94.62(0.93)&0.00(0.00)&35.85(1.06)&94.11(0.90)&0.00(0.00)\\
Best Calibration&0.27(0.04)&93.51(1.01)&1.69(0.20)&93.71(0.73)&87.70(1.09)&13.96(0.43)&98.88(0.18)&0.00(0.00)&25.93(0.46)&98.68(0.21)&0.00(0.00)\\
\hline
\hline
UC&0.99(0.07)&75.89(1.79)&3.82(0.26)&81.19(1.09)&61.01(1.13)&-&-&-&-&-&-\\
TS&0.99(0.07)&75.85(1.80)&3.83(0.27)&81.21(1.08)&61.01(1.13)&0.45(0.03)&43.20(1.33)&40.16(1.23)&0.73(0.04)&39.34(1.32)&41.37(1.24)\\
IBTS&1.00(0.07)&75.75(1.82)&3.86(0.27)&81.20(1.08)&60.87(1.13)&1.43(0.12)&41.14(1.56)&43.27(1.35)&2.35(0.17)&36.93(1.45)&45.14(1.30)\\
\textbf{LTS}&\textbf{0.98(0.07)}&\textbf{75.96(1.78)}&\textbf{3.82(0.26)}&\textbf{81.27(1.07)}&\textbf{61.15(1.13)}&1.88(0.14)&42.42(1.43)&37.53(1.04)&2.96(0.18)&40.51(1.15)&35.59(1.01)\\
\specialrule{.15em}{.05em}{.05em}
\end{tabular} 
\end{adjustbox}
\caption{MAS label fusion results based on calibrated probabilities. $\downarrow$($\uparrow$) indicates that lower(higher) values are better. mm denotes millimeter. UC denotes uncalibrated results. VC denotes voxel annotation changes between the uncalibrated approach to the corresponding method: w$\rightarrow$c is from wrong voxel annotation to correct voxel annotation; c$\rightarrow$w is from correct voxel annotation to wrong voxel annotation. Rate is calculated based on the number of changes out of the possible number of changes. (Note that many voxel annotations can not change because all atlas annotations give the same label, thus a change in probability would not change the voxel annotation.) LTS generally improves segmentations slightly. After LTS probability calibration, JLF changes more voxels than for TS and IBTS. Further, the difference between the correct conversion and the incorrect conversion is improved over TS and IBTS. This indicates that JLF can produce better segmentations with a better probability calibration and suggests that downstream tasks may in general benefit from better calibration.}
\vspace{-1mm}
\label{tab:VoteNet_dice}
\end{table*}

\subsection{U-Net segmentation on LPBA40}
\label{sec:UNet_exp}
\vspace{-1.5mm}


\textbf{General:} We use a customized 3D U-Net~\cite{cciccek20163d} for the segmentation of the LPBA40 dataset. Tab.~\ref{tab:All_metrics} shows quantitative results for calibrating this segmentation model. All three methods calibrate the probabilities relatively well in this experiment. This might be because images have been affinely registered to a common atlas space, which reduces the variations of images and may make it easier for TS, IBTS and, LTS to calibrate both in the \textit{All} region and the \textit{Boundary} region. This might also explain the performance differences between the computer vision datasets and the medical imaging dataset in Tab.~\ref{tab:All_metrics}. See {\supp}~\ref{app:dataset_variation} for details. Differences between calibration performance among TS and IBTS are relatively small. However, LTS still performs best with respect to most metrics. 

\textbf{Spatial Variation:} Furthermore, when it comes to the \textit{Local} region analysis, LTS consistently works best. Fig.~\ref{fig:local_rd} visualizes such difference via reliability diagrams. The red arrows highlight that TS, IBTS and LTS calibrate probabilties for the whole image well but only LTS consistently performs well in the \textit{Local} region. This indicates the superiority of LTS's spatially-variant probability calibration.

\subsection{Downstream MAS label fusion on LPBA40}
\label{sec:VoteNet_exp}
\vspace{-1mm}
We use a customized VoteNet+~\cite{ding2020votenet+} for multi-atlas segmentation on the LPBA40 dataset. In this approach, a network (VoteNet+) is trained to locally predict if a labeled atlas that has been registered to the target image space should be considered trustworthy or not. Label fusion (among the registered atlas images) can then make use of these probabilities to obtain the multi-atlas segmentation results. It is these VoteNet+ probabilities that we seek to calibrate.


\textbf{Calibration Metrics:} Tab.~\ref{tab:All_metrics} shows our quantitative calibration results. Different from the U-Net experiments in \cref{sec:UNet_exp}, we observe bigger differences between the calibration approaches. This might be because the VoteNet+ calibration experiment has sufficient training data (as multi-atlas segmentation performs image registrations from each atlas image to each target image) whereas the experiments in \cref{sec:UNet_exp} are much more data-starved. Besides, as the labeled atlases are registered to the target image space via a flexible non-parametric registration approach, data variance is further reduced in comparison to the affine registrations used as preprocessing in \cref{sec:UNet_exp}. Tab.~\ref{tab:All_metrics} shows that all three methods calibrate probabilities well, and that performance order is consistent with model complexity. I.e.,  LTS performs better than IBTS, and IBTS performs better than TS. These differences are statistically significant. 

\textbf{Label Fusion with Probability:} Tab.~\ref{tab:All_metrics} only demonstrates that the calibration approaches can improve the calibration of the VoteNet+ output. To obtain the multi-atlas segmentation result, we need to use label fusion. 
As the joint label fusion (JLF) approach~\cite{wang2012multi} we use for this purpose can make use of the VoteNet+ label probabilities, it is natural to ask if improved calibration results translate to improved segmentations via JLF. Tab.~\ref{tab:VoteNet_dice} shows that while differences are small, consistent improvements can indeed be observed. Hence, our proposed LTS not only shows good calibration performance on traditional metrics (i.e. ECE, MCE, SCE and ACE), but can also benefit downstream tasks that are sensitive to accurate probabilities. For comparison, we also show two theoretical upper bounds. The \textit{Best Fusion} bound, which is obtained by assigning the correct label to the segmentation result if at least one atlas provides the right label; and the \textit{Best Calibration} bound, which is obtained by assigning a probability of 1 if the prediction by VoteNet+ is correct and $1/|L|$ otherwise, followed by JLF. 
We observe that there is still a large room to improve probability calibration as the obtained results are far from the two upper bounds.

\section{Conclusion and Future Work}
\label{sec:conclusion}
\vspace{-1mm}
We introduced LTS, a general temperature scaling method that allows for spatially-varying probability calibration in the context of multi-label semantic segmentation. Experiments on the COCO, CamVid and LPBA40 datasets show that LTS indeed outperforms probability calibration approaches which cannot account for spatially-varying miscalibration. LTS not only works for standard segmentation models but can also benefit models that aim to jointly optimize prediction and calibration. In addition, using a multi-atlas brain segmentation experiment we demonstrated that downstream tasks may benefit from improved probability calibration. Specifically, we showed that segmentation results obtained via joint label fusion improved when combined with probability calibration. Future work could focus on further calibration improvements. For example, LTS could be easily extended to a bin-wise setting as in~\cite{ji2019bin} or use distributions conditioned on classes as in~\cite{mozafari2018attended}.

\noindent
\section*{Acknowledgements}

This research was supported in part by Award Number 1R01-AR072013 from the National Institute of Arthritis and Musculoskeletal and Skin Diseases and by Award Number 2R42MH118845 of the National Institute of Mental Health. It was also supported by the National Science Foundation (NSF) under award number NSF EECS-1711776. The content is solely the responsibility of the authors and does not necessarily represent the official views of the NIH or the NSF. The authors have no conflicts of interest.

\newpage

{\small
\bibliographystyle{ieee_fullname}
\bibliography{./reference/allBibShort}
}

\newpage
\appendix

\onecolumn

{\centering 
\section*{Local Temperature Scaling for Probability Calibration \\Supplementary Material}
}

\noindent
This supplementary material provides additional details for our approach. Specifically, 
\begin{enumerate}
    \item {\supp}~\ref{app:additional_related_work} briefly introduces additional related work about uncertainty quantification. This section connects with \cref{sec:related_work} in the main manuscript and provides additional comments regarding uncertainty quantification approaches in relation to our approach. 
    
    \item {\supp}~\ref{app:LTS_networks} describes the networks we use for LTS and IBTS. This section connect with \cref{sec:local_temp_scaling} and Fig.~\ref{fig:learning_framework} in the main manuscript and provides details about the tree-like convolutional neural network we use to train the IBTS and LTS models. We emphasize that the network architecture is not our contribution, it is inspired and modified from \cite{lee2017generalizing} and other network architectures could also work.
    
    \item {\supp}~\ref{app:implementation} provides dataset descriptions and implementation details. This section connects with \cref{sec:experiments}, \cref{sec:COCO_exp}, \cref{sec:Tiramisu_exp}, \cref{sec:UNet_exp}, and \cref{sec:VoteNet_exp} in the main manuscript and details (1) the dataset we use; (2) the training/validation/testing data split of segmentation and calibration; (3) the specific hyper-parameters we use to train both segmentation models and calibration models; (4) the GitHub repositories for baseline calibration methods we compare against. 
    
    \item {\supp}~\ref{app:reliability_diagram} provides additional examples for local reliability diagrams. This section connects with \cref{sec:local_temp_scaling} and Fig.~\ref{fig:local_rd} in the main manuscript to additionally show the spatially-variant feature of our LTS approach. 
    
    \item {\supp}~\ref{app:temp_entropy} discusses our temperature scaling approaches from an entropy point of view. This section connects with \cref{sec:why_cross_entropy_and_temperature_scaling} in the main manuscript to prove the theorems to support our claims. Specifically, this section discusses the relation of entropy and cross entropy and uncovers why our temperature scaling approaches (TS, IBTS, LTS) works.  
    
    \item {\supp}~\ref{app:metrics} details the evaluation metrics we use for semantic segmentation. This section connects with \cref{sec:experiments}, Fig.~\ref{fig:local_rd}, Tab.~\ref{tab:All_metrics}, and Tab.~\ref{tab:VoteNet_dice} in the main manuscript to provide formal definitions for all our evaluation measures. 
    
    \item {\supp}~\ref{app:region} illustrates the \textit{Boundary} and \textit{All} evaluation regions. This section connects with \cref{sec:experiments} and Tab.~\ref{tab:All_metrics} in the main manuscript to illustrate a visual example of the different regions we evaluate. Note that the results in the \emph{All} region reflect the overall calibration performance for an image segmentation; results in the \emph{Boundary} region reflect the most challenging calibration performance for an image segmentation.
    
    \item {\supp}~\ref{app:patch_size_metrcis} shows evaluation results for the \emph{Local} region for different patch sizes. This section connects with \cref{sec:experiments} and Tab.~\ref{tab:All_metrics} in the main manuscript to indicate how the local patch size influences the quantitative results. Note that results in the \emph{Local} region generally reflect whether the calibration method can handle spatial variations. This is different from the \emph{All} and \emph{Boundary} regions discussed in {\supp}~\ref{app:region} above.
    
    \item {\supp}~\ref{app:dataset_variation} discusses variations across the different datasets. This section connects with \cref{sec:COCO_exp}, \cref{sec:Tiramisu_exp}, \cref{sec:UNet_exp}, \cref{sec:VoteNet_exp} and Tab.~\ref{tab:All_metrics} in the main manuscript and explains the different magnitudes of the quantitative results for different datasets. Specifically, the COCO dataset shows the biggest variantions, followed by the CamVid dataset and lastly LPBA40 exhibits the smallest variations. Due to the different levels of variation of the different datasets, the reported values in COCO are larger than those in CamVid and the smallest values are observed in LPBA40.
    
    \item {\supp}~\ref{app:additional_results} contains additional evaluation results besides the results presented in Tab.~\ref{tab:All_metrics}. This section connects with \cref{sec:Tiramisu_exp} and Tab.~\ref{tab:All_metrics} in the main manuscript to further strengthen our manuscript. 
      These results are line with the conclusions we obtain in \cref{sec:experiments}, i.e. our LTS approach generally works best among different baseline methods.
    
    \item {\supp}~\ref{app:JLF_deduction} provides details on joint label fusion for multi-atlas segmentation. This section connects with \cref{sec:VoteNet_exp} and Tab.~\ref{tab:VoteNet_dice} in the main manuscript to provide details about the downstream MAS label fusion task. Specifically, this section illustrates why the VoteNet+ based joint label fusion method is sensitive to accurate probability predictions, which in turn demonstrates that improved calibration of our approach results in improved fused segmentation results.
    
\end{enumerate}

\section{Additional Related Work}
\label{app:additional_related_work}
\noindent
Probability calibration can be used for uncertainty estimation~\cite{lakshminarayanan2016simple} as calibrated probabities can directly be used as measures of uncertainty. However, methods that provide uncertainty estimates are not necessarily calibrated. 
Most existing work on uncertainty estimation starts with a Bayesian formulation~\cite{lakshminarayanan2016simple,jena2019bayesian,maronas2020calibration}, whereby a prior distribution is specified, and the posterior distribution over the parameters is optimized over the training data. These Bayesian models should result in better calibrated probability measures if their prior assumptions are valid. However, when some of the underlying assumptions are violated, the results may not be calibrated:~\cite{kendall2017uncertainties} is a good example for a Bayesian model improving calibration, but not achieving it. Other uncertainty estimation approaches include ensembles~\cite{lakshminarayanan2016simple} and  Monte Carlo dropout~\cite{gal2016dropout}, which help probability calibration but do not directly cope or achieve it. Gaussian Process (GP) approaches~\cite{wenger2020non} can inherently provide good uncertainty estimates, but may suffer from lower accuracy and higher computational complexity on high-dimensional classification tasks. In particular, a GP will only provide calibrated measures of uncertainty if the Gaussian assumption is valid. In practice, this may not be the case when combined with a deep network~\cite{tran2019calibrating}. Further, GP models are costly for classification and GP regression formulations require calibration~\cite{milios2018dirichlet,wenger2020non}. Our formulation is entirely different and directly predicts calibration parameters for softmax layers. Our model does not depend on any assumption and is a completely poct-hoc approach for any pre-trained segmentation model with probability outputs.\\

\section{Networks for LTS and IBTS}
\label{app:LTS_networks}
\noindent
To obtain $T^*$ in Eq.~\eqref{eq:TS_opt}, we directly optimize the parameter $T$ with respect to the NLL loss on the hold-out validation dataset. \\

\noindent
To obtain $T_i(x)^*$, we borrow the idea of soft decision trees~\cite{irsoy2016autoencoder} and propose to use a tree-like convolutional neural network~\cite{lee2017generalizing} to predict $T_i (x)$, which has fewer parameters than a standard convolutional neural network while achieving comparable state-of-the-art performance~\cite{lee2017generalizing}. We resort to such a simpler tree-like model, because one of the datasets that we use for evaluation is relatively small, though more complex models could be further explored.

\begin{figure}[!th]
  \includegraphics[width=\linewidth]{./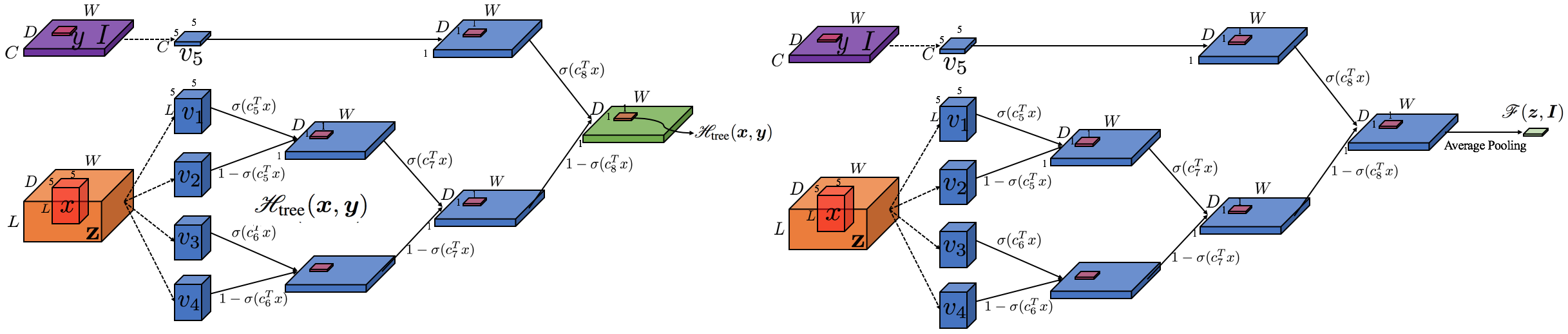}
  \caption{LTS (left) and IBTS (right) hierarchical tree-like architectures demonstrated in 2-D. $W$ is the image width, $D$ is the image length, $L$ is the number of classes, $C$ is the number of channels. $\bm{x}$ is the patch centered at location $x$ of size $L \times 5 \times 5$. Its corresponding patch inside image $I$ is denoted by $\bm{y}$, which is of size $C \times 5 \times 5$. $\sigma$ is the sigmoid function. Input to the model are the logits of size $L \times W \times D$. Output is the spatially varying temperature value of the image ($1 \times W \times D$) for LTS or an image-dependent temperature scalar value ($1\times 1 \times 1$) for IBTS. $\bm{v}_i$ and $\bm{c}_j$ are convolutional filters of size $L \times 5 \times 5$ (except $\bm{v}_5$ is of size $C \times 5 \times 5$ to be compatible with the size of image). Note that the dilation is 2 for all convolutional filters, thus resulting in a 9$\times$9 receptive field.}
  \label{fig:LTS_model}
\end{figure}

\noindent
The proposed framework is constructed as a pre-specified hierarchical binary tree in which each leaf is a convolutional filter learned during training. Denote the leaf node with index $m$ as $\bm{v}_m$, the patch in logits
$\textbf{z}$ to be convolved as $\bm{x}$ and its corresponding patch in image $\textbf{I}$ to be convolved as $\bm{y}$. Since a convolutional layer can be transformed into a fully-connected layer, which is essentially a matrix multiplication plus a bias offset, we use $\bm{v}^{T}_m\bm{x}$ to represent the convolution operation in the framework for ease of notation (omit bias offset for simplicity).
For internal nodes of the tree, each parent node value is a mixture (i.e. weighted average) of children nodes' values and the mixture parameter is also learned during training. Specifically, we use a convolution operation $\bm{c}_m$ plus a sigmoid function $\sigma$\
to determine the mixture parameter $\sigma(\bm{c}^T_m\bm{x})$. The root node is the final output. For IBTS, the output is a single temperature value for the logits, while, for LTS, the output is a temperature map which has the same size as the input logits, except that the number of feature channels is 1. Thus, the nodes of the tree can be represented as follows:

\begin{equation}
\label{eq:tree_node_definition} 
\mathscr{H}_m(\bm{x}, \bm{y}) =
  \begin{cases}
    \bm{v}^{T}_m\bm{y} + 1       & \quad \text{if leaf node in image}\\
    \bm{v}^{T}_m\bm{x} + 1       & \quad \text{if leaf node in logits}\\
    \sigma(\bm{c}^T_m\bm{x}) \mathscr{H}_{m, \text{logits}, \text{left}}(\bm{x}) + (1 - \sigma(\bm{c}^T_m\bm{x})) \mathscr{H}_{m, \text{logits}, \text{right}}(\bm{x})                         & \quad \text{if internal node in logits}\\
    \text{ReLU}\big(\sigma(\bm{c}^T_m\bm{x}) \mathscr{H}_{m, \text{logits}}(\bm{x}) + (1 - \sigma(\bm{c}^T_m\bm{x})) \mathscr{H}_{m, \text{image}}(\bm{y})\big)+\varepsilon          & \quad \text{if root node}
  \end{cases},
\end{equation}

\noindent
where ReLU is the Rectified Linear Unit, $\mathscr{H}_m(\bm{x}, \bm{y})$ is the root node value, $\mathscr{H}_{m, \text{logits}, \text{left}}(\bm{x})$ and $\mathscr{H}_{m, \text{logits}, \text{right}}(\bm{x})$ are the left child node value and right child node value for internal nodes in logits, respectively. $\mathscr{H}_{m, \text{logits}}(\bm{x})$ is the top node containing information only from the logits and $\mathscr{H}_{m, \text{image}}(\bm{y})$ is the top node containing information only from the image. $\varepsilon$ is a very small positive real number to guarantee the positivity for the output temperature value. The $+1$ value for the leaf node is for model initialization and stabilization. With this trick, the learning process is more stable and the performance is much better. If there are only leaf nodes, then the convolution filters are trying to learn the residual of the temperature scalar value with respect to the standard uncalibrated temperature value 1. Fig.~\ref{fig:LTS_model}(left) illustrates the proposed tree-like learning framework for LTS. For simplicity, let us assume the output is positive, then the specific representation becomes 
 
\begin{equation}
\begin{split}
\label{eq:proposed_tree_model}
\mathscr{H}_{\text{tree}}(\bm{x}, \bm{y}) = \; & \sigma(\bm{c}^T_8\bm{x})(\bm{v}^T_5\bm{y}+1) \\
& + (1-\sigma(\bm{c}^T_8\bm{x}))\big\{\sigma(\bm{c}^T_7\bm{x}) \big[\sigma(\bm{c}^T_5\bm{x})(\bm{v}^{T}_1\bm{x} + 1) + (1-\sigma(\bm{c}^T_5\bm{x}))(\bm{v}^{T}_2\bm{x} + 1)\big] \\
& + (1-\sigma(\bm{c}^T_7\bm{x})) \big[\sigma(\bm{c}^T_6\bm{x})(\bm{v}^{T}_3\bm{x} + 1) + (1-\sigma(\bm{c}^T_6\bm{x}))(\bm{v}^{T}_4\bm{x} + 1)\big]\big\}.
\end{split}
\end{equation}
To connect back to the definition in \cref{sec:local_temp_scaling}, $\mathscr{H}_{\text{tree}}$ is the network $\mathscr{H}$, $\bm{v}_i$ and $\bm{c}_j$ are parameters $\alpha$, $\bm{x}$ is the patch centered at location $x$ in logits $\textbf{z}$, $\bm{y}$ is the corresponding patch of image $\textbf{I}$.\\ 

\noindent
To obtain $T_i^*$, we modify the above-mentioned network $\mathscr{H}_{\text{tree}}$ to predict one temperature value $T_i$ for each image. We add an average pooling layer after $\mathscr{H}_{\text{tree}}$ to get the image-based temperature value. Specifically, using $\mathscr{F}$ to represent the network of IBTS as in Eq.~\eqref{eq:IBTS_opt}, we have $\mathscr{F} = \frac{1}{|\Omega|} \sum_{x \in \Omega} \mathscr{H}_{\text{tree}}(\bm{x}, \bm{y})$, where $\bm{x}$ is the patch centered at location $x$ in logits
$\textbf{z}$, $\bm{y}$ is the corresponding batch of $\bm{x}$ in image $\textbf{I}$, and $\Omega$ is the logits space. Fig.~\ref{fig:LTS_model}(right) illustrates the proposed tree-like learning framework for IBTS. 

\section{Dataset Description and Implementation Details}
\label{app:implementation} 
\noindent
We use the following image segmentation datasets in our experiments:
\begin{enumerate}
  \item \textbf{COCO}~\cite{lin2014microsoft}: The Common Object in Context (COCO)~\cite{lin2014microsoft} dataset is a large-scale dataset of complex images. It provides pixel-level labels for 118K training images (COCO train2017) and 5K validation images (COCO val2017). Further, the COCO-stuff~\cite{caesar2018cvpr} dataset augments COCO with dense pixel-level annotations for 80 thing classes and 91 stuff classes. For simplicity, we focus on the 20 categories that are present in the Pascal VOC~\cite{everingham2015pascal} dataset for our experiments, considering the remaining classes as background. 
  \item \textbf{CamVid}~\cite{brostow2008segmentation,brostow2009semantic}:
    The Cambridge-driving Labeled Video Database (CamVid)~\cite{brostow2008segmentation,brostow2009semantic} is a collection of videos with object class semantic labels. We use the split and image resolution as in \cite{jegou2017one}, which consists of 367 frames for training, 101 frames for validation and 233 frames for testing. Each frame has a size of 360$\times$480 and its pixels are labeled with 11 semantic classes excluding background.  
  \item \textbf{LPBA40}~\cite{shattuck2008construction}: The LONI Probabilistic Brain Atlas (LPBA40)~\cite{shattuck2008construction} dataset contains 40 T1-weighted 3D brain MR images from healthy patients. Each image has labels for 56 manually segmented structures. For preprocessing, all images are first affinely registered to the ICBM MNI152 nonlinear atlas~\cite{grabner2006symmetric} using NiftyReg~\cite{modat2014global,modat2010fast,rueckert1999nonrigid} and intensity normalized via histogram equalization.
\end{enumerate}

\noindent
For the Fully-Convolutional Network (FCN) experiment in \cref{sec:COCO_exp}, we use the COCO val2017 dataset for our calibration experiment in which the training/validation/testing images are partitioned in sets of size 3.5K/0.5K/1K, respectively. We use the PyTorch pre-trained model\footnote{\url{https://pytorch.org/docs/stable/torchvision/models.html\#semantic-segmentation}} for semantic segmentation on the COCO dataset. This is an FCN~\cite{long2015fully} with a ResNet-101~\cite{he2016deep} backbone. The pre-trained model has been trained on a subset of COCO train2017, i.e., for the 20 categories that are present in the Pascal VOC~\cite{everingham2015pascal} dataset. For details, please resort to the Pytorch official webpage (footnote) mentioned above.\\

\noindent
For the Tiramisu experiment in \cref{sec:Tiramisu_exp}, we use the hold-out validation dataset for our calibration experiment in which the training/validation images are 90/11. Finally the calibration performance is tested on the testing dataset which includes 233 images. We use the PyTorch Tiramisu\footnote{The implementation follows this GitHub repository: \url{https://github.com/bfortuner/pytorch_tiramisu}} segmentation model~\cite{jegou2017one} on the CamVid dataset. Training details are included in the GitHub repository.\\

\noindent
For the U-Net experiment in \cref{sec:UNet_exp}, we use a 2-fold cross-validation setup to cover all the 40 images in the dataset. Training/validation/testing images are partitioned as 17/3/20. This is consistent with the setting in~\cite{ding2019votenet}. We use 4-fold cross-validation for our calibration experiment to cover all 40 images. Training/validation/testing images are partitioned as 10/3/10 for each fold. The U-Net takes patches of $72 \times 72 \times 72$ of the training images, where the $40 \times 40 \times 40$ patch center is used to tile the volume. The output is the voxel-wise probability of each label at each position. Training patches are randomly cropped assuring at least 5\% correct labels in the patch volume. We use \texttt{Adam}~\cite{kingma2014adam} with 300 epochs and a multi-step learning rate. The initial learning rate is 1e-3, and then reduced by 90\% at the 150-th epoch and the 250-th epoch, respectively. Cross-entropy loss is used as the loss function. When calibrating, within each fold of the U-Net 2-fold cross validation, we perform another 2-fold cross validation. Specifically, 23 images (3 from validation and 20 from testing) are split into 10/3/10 for train/validation/test. 2-fold cross-validation will cover all 20 testing images of U-Net testing. This design results in a 4-fold cross validation experiment to cover all 40 images.\\

\noindent
For the Downstream MAS label fusion experiment in \cref{sec:VoteNet_exp}, we use 2-fold cross-validation to cover all the images. In each fold, 17 atlases are chosen. Training/validation/testing images are partitioned as 272/51/340. This is consistent with the setting in~\cite{ding2020votenet+}. We use 4-fold cross-validation for the calibration experiments to cover all images. Training/validation/testing are partitioned as 170/51/170 for each fold. Training data for VoteNet+ is acquired by deformable image registrations. Specifically, the same 17 images as for the U-Net training are chosen as atlas images. First, all 17 atlases are registered to each other, which results in $17 \times 16 = 272$ pairs of training data. Then all 17 atlases are registered to the 3 validation images for the U-Net, which results in $17 \times 3 = 51$ pairs of validation data. Finally, all 17 atlases are registered to the 20 testing images for the U-Net, which results in $17 \times 20 = 340$ pairs of testing data. The same 2-fold cross-validation strategy still applies to VoteNet+, but with the data split as 272/51/340 for train/validation/test. VoteNet+ takes patches of $72 \times 72 \times 72$ from the target image and a warped atlas image at the same position, where the $40 \times 40 \times 40$ patch center is used to tile the volume. The output is the voxel-wise probability, indicating whether the warped atlas label is equal to the target image label. We use \texttt{Adam}~\cite{kingma2014adam} with 500 epochs with a multi-step learning rate. The initial learning rate is 1e-3 and then reduced by half at the 200-th epoch, 350-th epoch, and 450-th epoch respectively. Same as for the U-Net, training patches are randomly cropped assuring at least 5\% correct labels in the patch volume. Binary cross-entropy is used as the loss function. When calibrating, within each fold of the VoteNet+ 2-fold cross validation, we perform a 2-fold cross validation. Specifically, 391 pairs (51 from validation and 340 from testing) are split into 170/51/170 for train/validation/test. 2-fold cross-validation will cover all 340 testing pairs of VoteNet+ testing. This design results in a 4-fold cross validation experiment to cover all 680 pairs.  Furthermore, we use joint label fusion (JLF)~\cite{wang2012multi} to obtain the final segmentation for each image. See {\supp}~\ref{app:JLF_deduction} for more information on MAS and label fusion, as well as experimental details.\\

\noindent
To train IBTS and LTS, we use \texttt{Adam}~\cite{kingma2014adam} with 100 epochs and a multi-step learning rate. The initial learning rate for the LPBA40 dataset is 1e-4 and is reduced to 1e-5 after 50 epochs, while for the COCO and the CamVid dataset, it is 1e-5 and is reduced to 1e-6 after 50 epochs. We use the cross-entropy loss. The loss is evaluated over the \textit{All} region to ignore the majority of the background.\\

\noindent
The FL and MMCE losses are from the GitHub repository\footnote{\url{https://github.com/torrvision/focal_calibration/tree/main/Losses}} of \cite{mukhoti2020calibrating}. Isotonic regression (IsoReg)~\cite{zadrozny2002transforming} and ensemble temperature scaling (ETS)~\cite{zhang2020mix} are from the GitHub repository\footnote{\url{https://github.com/zhang64-llnl/Mix-n-Match-Calibration}} of ~\cite{zhang2020mix}. Vector scaling (VS)~\cite{guo2017calibration} and Dirichlet calibration with off-diagonal regularization (DirODIR)~\cite{kull2019beyond} are from the GitHub repository\footnote{\url{https://github.com/dirichletcal/experiments_neurips}} of \cite{kull2019beyond}. Training with FL and MMCE follows the same recipe as training with the multi-class entropy loss except that the training loss term is changed. The GitHub implementation repository\footnote{\url{https://github.com/bfortuner/pytorch_tiramisu}} provides all details about the hyper-parameters of training of the deep Tiramisu network; we thus omit them here to avoid duplication. For DirODIR, the hyper-parameters for off-diagonal regularization and bias regularization are both set to 0.01. We use \texttt{Adam} for a maximum of 100 epochs with early stop patience set to 10 epochs, i.e. training stops early if 10 consecutively worse epochs are observed. The model is trained with an initial learning rate of 1e-3 and fine-tuned with a learning rate of 1e-4.

\begin{figure*}[!t]
  \centering
  \includegraphics[width=\textwidth]{./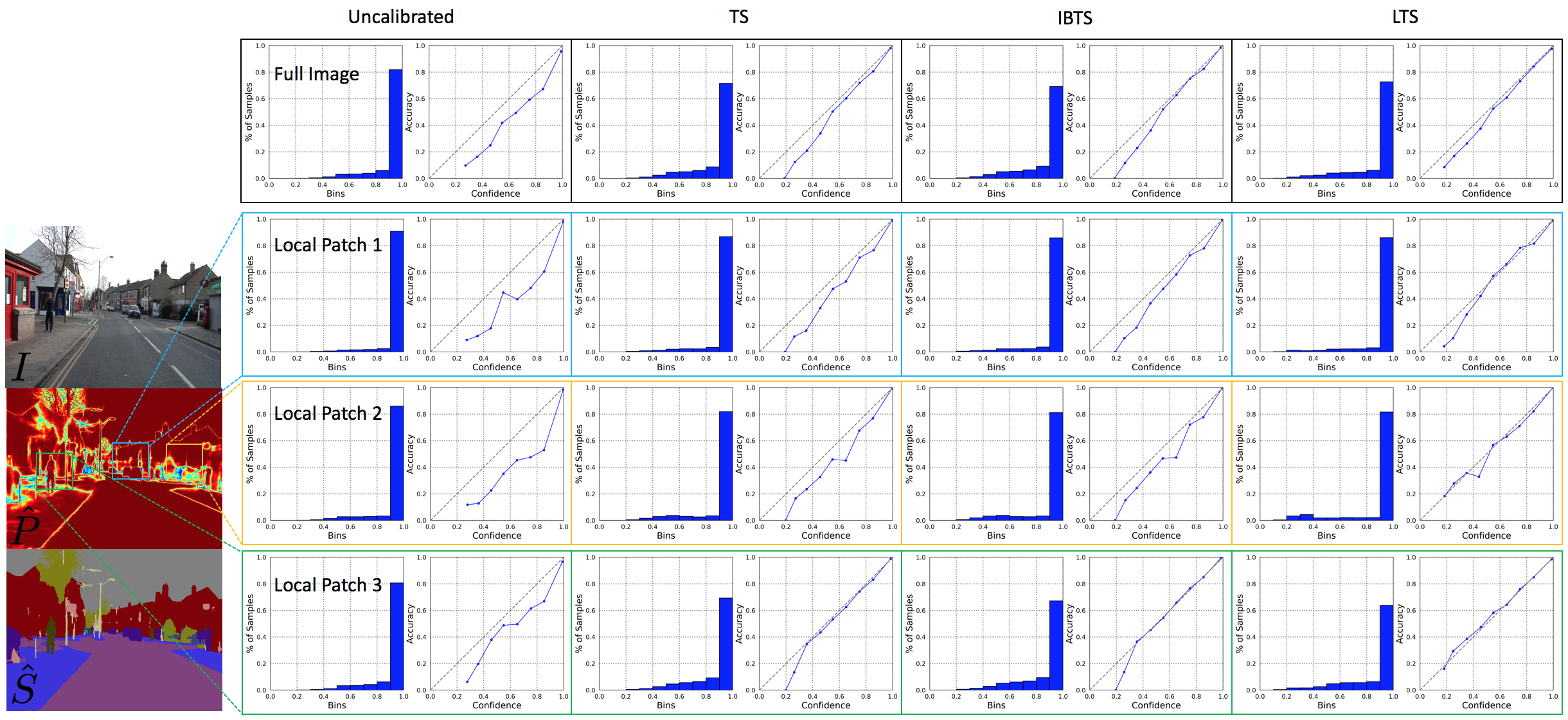}
  \caption{An example of global and local reliability diagrams for different methods for the Tiramisu semantic segmentation experiment (\cref{sec:Tiramisu_exp}). $I$ is the image, $\hat{P}$ is the predicted uncalibrated probability, and $\hat{S}$ is the predicted segmentation. Figures are displayed in couples, where the left figure is the probability distribution of pixels/voxels while the right figure is the reliability diagram (See {\supp}~\ref{app:metrics} for definitions). The top row shows the global reliability diagrams for different methods for the entire image. The three rows underneath correspond to local reliability diagrams for the different methods for different local patches. LTS not only calibrates probabilities well for the entire image but also calibrates probabilities better than TS and IBTS in local pacthes.}
  \label{fig:camvid_local_rd}
\end{figure*}

\section{Local Reliability Diagrams}
\label{app:reliability_diagram}
\noindent
To visualize the spatially-varying property of LTS, we show the local reliability diagram of Tiramisu for the CamVid experiment in Fig.~\ref{fig:camvid_local_rd}. Similar to the conclusion from Fig.~\ref{fig:local_rd}, Fig.~\ref{fig:camvid_local_rd} also suggests that LTS performs better than TS and IBTS for the entire image as well as for the local image patches. This observation is consistent with results in Tab.~\ref{tab:All_metrics}.

\section{Temperature Scaling from Entropy Point of View}
\label{app:temp_entropy}

\noindent
Temperature scaling can also be connected to entropy~\cite{guo2017calibration}. In this section, we establish the relation between entropy and temperature scaling by showing that different temperature scaling models are indeed the solutions for entropy maximization or minimization subject to different constraints. Note that a related insight has been proposed in \cite{guo2017calibration} for classification. We extend it to semantic segmentation for our different temperature scaling settings and provide detailed discussions. Specifically, we show the solutions of TS, IBTS and LTS when minimizing NLL in {\supp}~\ref{app:minimize_NLL}; we define overconfidence and underconfidence in {\supp}~\ref{app:overfitting_underfitting}; we show the entropy maximization and minimization solutions without constraints in {\supp}~\ref{app:entropy_extrmes_without_constraints}; we deduct the solutions for entropy maximization under the condition of overconfidence as well as for entropy minimization under the condition of underconfidence in {\supp}~\ref{app:entropy_entrmes_with_constraints}; finally, we show that the solutions for minimizing NLL w.r.t. TS, IBTS, LTS are also the solutions for entropy maximization in the case of overconfidence or the solutions for entropy minimization in the case of underconfidence in {\supp}~\ref{app:NLL_meet_with_entropy}. Overall, TS, IBTS and LTS determined based on a given dataset results in NLL (cross entropy) and entropy reaching an equilibrium which empirically corresponds to a well-calibration state.\\

\subsection{Minimize NLL with (Local) Temperature Scaling}
\label{app:minimize_NLL}

\noindent
\begin{lemma}
\label{lamma_bound}
Given a logit vector map $\textbf{z}(x)$ at position $x$ and its corresponding probability map obtained via softmax function ($\sigma_{SM}$) the weighted averaged logits with temperature scaling (TS) are (1) monotonic with respect to temperature value and (2) yield the following bounds:
\begin{equation}
    \frac{1}{L}\sum_{l=1}^L \textbf{z}(x)^{(l)} \leq \sum_{l=1}^L \textbf{z}(x)^{(l)}\sigma_{SM}\big(\textbf{z}(x)/T\big)^{(l)}
    \leq \max_l \{\textbf{z}(x)^{(l)}\}.
\end{equation}
\end{lemma}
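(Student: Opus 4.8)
The plan is to fix the location $x$ (suppressing it and writing $z_l := \textbf{z}(x)^{(l)}$) and to reparametrize by the inverse temperature $\beta := 1/T > 0$. Writing the softmax weights as $p_l(\beta) = e^{\beta z_l} / \sum_{j=1}^L e^{\beta z_j}$, the object of interest is the weighted average $g(\beta) := \sum_{l=1}^L z_l\, p_l(\beta)$, i.e. the mean of the logits under the Gibbs distribution $p(\beta)$. The crucial observation is that $g$ is exactly the first derivative of the log-partition function $\psi(\beta) := \log \sum_{j=1}^L e^{\beta z_j}$, namely $g(\beta) = \psi'(\beta) = \sum_j z_j p_j(\beta)$.

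First I would establish monotonicity. Differentiating once more gives $\psi''(\beta) = \sum_l z_l^2 p_l(\beta) - \big(\sum_l z_l p_l(\beta)\big)^2$, which is precisely the variance of the logits under $p(\beta)$ and hence nonnegative. Therefore $g(\beta) = \psi'(\beta)$ is nondecreasing in $\beta$. Since $\beta = 1/T$ is strictly decreasing in $T$, the chain rule gives $\frac{d}{dT}\, g(1/T) = \psi''(1/T)\cdot(-1/T^2) \le 0$, so the weighted-averaged logit is nonincreasing in the temperature $T$, which establishes claim (1).

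Next I would obtain the two bounds. The upper bound is immediate and does not even require monotonicity: $g(\beta)$ is a convex combination of the values $\{z_l\}$ with the positive weights $p_l(\beta)$ summing to one, so $g(\beta) \le \max_l z_l$. For the lower bound I would combine the monotonicity just proved with the limiting value as $\beta \to 0^+$ (equivalently $T \to \infty$): there $p_l(\beta) \to 1/L$ for every $l$, so $g(\beta) \to \frac{1}{L}\sum_l z_l$. Since $g$ is nondecreasing in $\beta$ and continuous on $[0,\infty)$, we get $g(\beta) \ge \lim_{\beta' \to 0^+} g(\beta') = \frac{1}{L}\sum_l z_l$ for all $\beta > 0$. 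Restoring the $x$-dependence then yields exactly the claimed chain of inequalities.

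The individual steps are short; the only point requiring genuine care is the identification of $\psi''(\beta)$ with a variance, which is the standard cumulant-generating-function computation and is what makes the sign of the derivative transparent. A minor subtlety to handle cleanly is that the extreme values $\frac{1}{L}\sum_l z_l$ and $\max_l z_l$ are attained only in the limits $T\to\infty$ and $T\to 0^+$, which lie outside the open domain $T>0$ (and coincide in the degenerate case of all-equal logits); the nonstrict inequalities in the statement accommodate this, and continuity of $g$ closes the argument.
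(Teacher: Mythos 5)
Your proof is correct and follows essentially the same route as the paper: both reparametrize by the inverse temperature, show the weighted-average logit is nondecreasing in it because its derivative is the (nonnegative) variance of the logits under the softmax distribution --- the paper phrases this via the Cauchy--Schwarz inequality, you via the log-partition function, but the computation is identical --- and then read off the bounds from the limits $\beta\to 0^+$ and $\beta\to\infty$. Your shortcut of getting the upper bound directly from the convex-combination structure is a minor streamlining, not a different argument.
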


\begin{proof}
Let $\lambda = \frac{1}{T}$ and denote $\mathcal{F}(\lambda) = \sum_{l=1}^L \textbf{z}(x)^{(l)}\sigma_{SM}\big(\lambda \textbf{z}(x)\big)^{(l)} 
= \sum_{l=1}^L \textbf{z}(x)^{(l)}\frac{\exp\big(\lambda \textbf{z}(x)^{(l)}\big)}{\sum_{j=1}^L\exp\big(\lambda \textbf{z}(x)^{(j)}\big)}$. Then we take the derivative with respect to $\lambda$,
\begin{equation}
    \frac{\partial \mathcal{F}(\lambda)}{\partial \lambda} = \frac{\Big( \sum_{l=1}^L (\textbf{z}(x)^{(l)})^2 \exp\big(\lambda \textbf{z}(x)^{(l)}\big) \Big) \Big( \sum_{l=1}^L\exp\big(\lambda \textbf{z}(x)^{(l)}\big) \Big) - \Big( \sum_{l=1}^L \textbf{z}(x)^{(l)} \exp\big(\lambda \textbf{z}(x)^{(l)}\big) \Big)^2}{\Big( \sum_{j=1}^L\exp\big(\lambda \textbf{z}(x)^{(j)}\big) \Big)^2}.
\end{equation}
By the Cauchy–Schwarz inequality, we have 
\begin{equation*}
    \Big( \sum_{l=1}^L (\textbf{z}(x)^{(l)})^2 \exp\big(\lambda \textbf{z}(x)^{(l)}\big) \Big) \Big( \sum_{l=1}^L\exp\big(\lambda \textbf{z}(x)^{(l)}\big) \Big) \geq \Big( \sum_{l=1}^L \textbf{z}(x)^{(l)} \exp\big(\lambda \textbf{z}(x)^{(l)}\big) \Big)^2.
\end{equation*}
Thus, $\frac{\partial \mathcal{F}(\lambda)}{\partial \lambda} \geq 0$. This indicates that the function $\mathcal{F}(\lambda)$ is monotonicly increasing with respect to $\lambda$. Since the temperature scaling value $T$ is non-negative, i.e., $T \in \mathbb{R}^+$, we have $\lambda \in \mathbb{R}^+$. Furthermore,
\begin{equation}
\begin{split}
    \lambda \rightarrow 0, \quad  & \sigma_{SM}\big(\lambda \textbf{z}(x)\big)^{(l)} = \frac{1}{L}, \quad \forall l=1, ..., L;\\
    \lambda \rightarrow +\infty, \quad & \sigma_{SM}\big(\lambda \textbf{z}(x)\big)^{(l)} = 
    \begin{cases}
            1, \quad \max_j \{\textbf{z}(x)^{(j)}\} = \textbf{z}(x)^{(l)},\\
            0, \quad \text{otherwise}.
    \end{cases}
\end{split}
\end{equation}
Therefore, we have $\frac{1}{L}\sum_{l=1}^L \textbf{z}(x)^{(l)} \leq \mathcal{F}(\lambda) \leq \max_l \{\textbf{z}(x)^{(l)}\}$.\\
\end{proof}

\noindent
\textbf{Remark.} If $T$ is allowed to be negative, i.e. $T \in \mathbb{R}$, then the following bounds hold:
\begin{equation}
    \min_l \{\textbf{z}(x)^{(l)}\} \leq \sum_{l=1}^L \textbf{z}(x)^{(l)}\sigma_{SM}\big(\textbf{z}(x)/T\big)^{(l)}
    \leq \max_l \{\textbf{z}(x)^{(l)}\}.
\end{equation}

\begin{theorem}
\label{thm:NLL_opt}
 Given $n$ logit vector maps $\textbf{z}_1, ..., \textbf{z}_n$ and label maps $S_1, ..., S_n$, the optimal 
 temperature values of temperature scaling (TS), image-based temperature scaling (IBTS) and local temperature scaling (LTS) to the following NLL minimization problem
\begin{equation}
\begin{split}
\min_{\alpha_i(x)} - &\sum_{i=1}^n \sum_{x \in \Omega} \log \Big(\sigma_{SM}\big(\alpha_i(x) \textbf{z}_i(x) \big)^{(S_i(x))}\Big)\\
&subject \; to \quad \alpha_i(x) \geq 0
\end{split}
\end{equation}
are
\begin{equation}
\begin{split}
&\begin{cases}
\;\; \alpha^*=0, \quad\quad\quad\quad \text{if}\quad \sum_{i=1}^{n} \sum_{x \in \Omega} \textbf{z}_i(x)^{(S_i(x))} \leq \frac{1}{L}\sum_{i=1}^n\sum_{x \in \Omega}\sum_{l=1}^L \textbf{z}_i(x)^{(l)}\\ 
\Big\{\alpha^*>0  \mid \sum_{i=1}^n\sum_{x \in \Omega}\sum_{l=1}^L \textbf{z}_i(x)^{(l)}\sigma_{SM}\big( \alpha^* \textbf{z}_i(x) \big)^{(l)} = \sum_{i=1}^{n} \sum_{x \in \Omega} \textbf{z}_i(x)^{(S_i(x))} \Big\}, \text{otherwise}
\end{cases}\\
&\begin{cases} 
\;\; \alpha_i^*=0, \quad\quad\quad\quad \text{if}\quad \sum_{x \in \Omega} \textbf{z}_i(x)^{(S_i(x))} \leq \frac{1}{L}\sum_{x \in \Omega}\sum_{l=1}^L \textbf{z}_i(x)^{(l)}\\ 
\Big\{\alpha_i^*>0  \mid \sum_{x \in \Omega}\sum_{l=1}^L \textbf{z}_i(x)^{(l)}\sigma_{SM}\big( \alpha_i^* \textbf{z}_i(x) \big)^{(l)} = \sum_{x \in \Omega} \textbf{z}_i(x)^{(S_i(x))} \Big\}, \text{otherwise}
\end{cases}\\
&\begin{cases}
\;\; \alpha_i(x)^*=0, \quad\quad\quad\quad \text{if}\quad \textbf{z}_i(x)^{(S_i(x))} \leq \frac{1}{L}\sum_{l=1}^L \textbf{z}_i(x)^{(l)}\\ 
\Big\{\alpha_i(x)^*>0  \mid \sum_{l=1}^L \textbf{z}_i(x)^{(l)}\sigma_{SM}\big( \alpha_i(x)^* \textbf{z}_i(x) \big)^{(l)} =  \textbf{z}_i(x)^{(S_i(x))} \Big\}, \text{otherwise}
\end{cases},
\end{split} 
\end{equation}
where
\begin{equation}
\begin{split}
\text{(TS):}\quad \alpha_i(x) &\coloneqq \alpha, \forall i, x, \quad \text{and} \quad T \coloneqq \frac{1}{\alpha}, T \in \mathbb{R}^+\\
\text{(IBTS):}\quad \alpha_i(x) &\coloneqq \alpha_i, \forall x, \quad \text{and} \quad T_i \coloneqq \frac{1}{\alpha_i}, T_i \in \mathbb{R}^+\\
\text{(LTS):}\quad \alpha_i(x) &\coloneqq \alpha_i(x), \quad \text{and} \quad T_i(x) \coloneqq \frac{1}{\alpha_i(x)}, T_i(x) \in \mathbb{R}^+ .
\end{split}
\end{equation}\\
\end{theorem}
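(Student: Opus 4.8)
The plan is to exploit the fact that, in all three variants, the NLL objective \emph{decouples} across the free parameters, reducing the joint minimization to a collection of independent one-dimensional convex problems on $[0,\infty)$. Writing $s=S_i(x)$, each summand equals $-\alpha_i(x)\,\textbf{z}_i(x)^{(s)} + \log\sum_{j}\exp(\alpha_i(x)\,\textbf{z}_i(x)^{(j)})$, which is convex in its temperature argument (an affine term plus a log-sum-exp). For LTS every $\alpha_i(x)$ appears in exactly one summand, so the objective splits into $|\Omega|\cdot n$ scalar problems; for IBTS the shared $\alpha_i$ couples only the terms of image $i$, giving $n$ scalar problems; for TS the single $\alpha$ couples everything into one scalar problem. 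In each case the relevant objective is a finite sum $\Phi(\alpha)$ of convex summands and is therefore convex, so it suffices to analyze one convex function of a single nonnegative variable.

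First I would compute $\Phi'(\alpha)$. Differentiating the log-sum-exp yields, for the LTS per-location problem, $\Phi'(\alpha)=\sum_{l}\textbf{z}_i(x)^{(l)}\sigma_{SM}(\alpha\,\textbf{z}_i(x))^{(l)} - \textbf{z}_i(x)^{(s)}$, and the IBTS/TS derivatives are the corresponding sums of such expressions over $x$ (resp. over all $i,x$). The key is to recognize the first term as exactly the weighted-logit average $\mathcal{F}(\alpha)$ studied in Lemma~\ref{lamma_bound} (with $\alpha=1/T$). That lemma supplies two facts: $\mathcal{F}$ is monotonically nondecreasing in $\alpha$ (consistent with the convexity just noted), and $\tfrac{1}{L}\sum_l \textbf{z}_i(x)^{(l)}\le \mathcal{F}(\alpha)\le \max_l \textbf{z}_i(x)^{(l)}$, with the lower bound attained at $\alpha=0$.

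With convexity in hand, the constrained minimizer over $[0,\infty)$ is characterized by the first-order (KKT) conditions: the optimum sits at the boundary $\alpha^*=0$ precisely when $\Phi'(0)\ge 0$, and otherwise at an interior stationary point where $\Phi'(\alpha^*)=0$. Evaluating $\Phi'(0)$ via the lower bound of Lemma~\ref{lamma_bound} gives $\Phi'(0)=\tfrac{1}{L}\sum_l \textbf{z}_i(x)^{(l)} - \textbf{z}_i(x)^{(s)}$ (and its aggregated analogues), so $\Phi'(0)\ge 0$ is exactly the stated condition $\textbf{z}_i(x)^{(S_i(x))}\le \tfrac{1}{L}\sum_l \textbf{z}_i(x)^{(l)}$ for the boundary case; the ``otherwise'' branch is then the stationarity equation $\mathcal{F}(\alpha^*)=\textbf{z}_i(x)^{(S_i(x))}$, and the monotonicity of $\mathcal{F}$ shows this root is unique (up to the degenerate case of all-equal logits, where $\mathcal{F}$ is constant and any $\alpha$ is optimal). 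Assembling the three granularities then yields the three displayed cases.

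The main obstacle is the \emph{existence} of an interior minimizer in the ``otherwise'' branch, which hinges on the upper bound $\mathcal{F}(\alpha)\le\max_l \textbf{z}_i(x)^{(l)}$. When $\Phi'(0)<0$ I need $\Phi'$ to become nonnegative for large $\alpha$ so that an intermediate-value argument produces the root $\mathcal{F}(\alpha^*)=\textbf{z}_i(x)^{(S_i(x))}$. This is immediate whenever the target lies strictly below the upper bound, but it degenerates in the edge case where the true-label logit equals $\max_l \textbf{z}_i(x)^{(l)}$ (a correctly and maximally confident prediction): there the supremum is approached only as $\alpha\to\infty$ (equivalently $T\to 0$) and no finite minimizer exists. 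I would handle this by noting that for the aggregated TS and IBTS objectives the upper bound $\sum \max_l \textbf{z}_i(x)^{(l)}$ strictly exceeds $\sum \textbf{z}_i(x)^{(S_i(x))}$ as soon as a single pooled sample is misclassified, so a finite interior root exists in the generic regime, and I would state the per-location LTS result under the same genericity proviso.
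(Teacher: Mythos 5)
Your proposal follows essentially the same route as the paper's own proof: differentiate the (convex, decoupled) NLL with respect to the inverse temperature, identify the derivative with the weighted-logit average of Lemma~\ref{lamma_bound}, and split on the sign of the derivative at zero, using that lemma's monotonicity and bounds together with the intermediate value theorem to locate the interior stationary point. Your explicit treatment of the degenerate case where the true-label logit equals $\max_l \textbf{z}_i(x)^{(l)}$ --- so the stationarity equation $\mathcal{F}(\alpha^*)=\textbf{z}_i(x)^{(S_i(x))}$ has no finite root and the infimum is approached only as $\alpha\to\infty$ --- is a genuine refinement that the paper's proof silently skips (its Case~2 invokes the intermediate value theorem even when the target equals the unattained supremum), and it matters most for the per-location LTS problem, where this situation arises at every correctly classified pixel.
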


\begin{proof}

For TS, Let
\begin{equation}
\mathcal{F}(\alpha) = - \sum_{i=1}^n \sum_{x \in \Omega} \log \Big(\sigma_{SM}\big(\alpha \textbf{z}_i(x) \big)^{(S_i(x))}\Big).
\end{equation}
Taking the derivative w.r.t. $\alpha$ we obtain
\begin{equation}
\frac{\partial \mathcal{F}(\alpha)}{\partial \alpha} = -\sum_{i=1}^n \sum_{x \in \Omega} \Big( \textbf{z}_i(x)^{(S_i(x))} - \sum_{l=1}^L \textbf{z}_i(x)^{(l)}\sigma_{SM} \big( \alpha \textbf{z}_i(x) \big)^{(l)}  \Big) 
\end{equation}

\noindent
\textbf{Case 1:} If $\sum_{i=1}^{n} \sum_{x \in \Omega} \textbf{z}_i(x)^{(S_i(x))} \leq \frac{1}{L}\sum_{i=1}^n\sum_{x \in \Omega}\sum_{l=1}^L \textbf{z}_i(x)^{(l)}$, we have $\frac{\partial \mathcal{F}(\alpha)}{\partial \alpha}\mid_{\alpha=0} \geq 0$. With Lemma~\ref{lamma_bound}, $\mathcal{F}(\alpha)$ is a monotonic increasing function. This indicates the minimum value is achieved at $\alpha=0$.\\
\textbf{Case 2:} If $\sum_{i=1}^{n} \sum_{x \in \Omega} \textbf{z}_i(x)^{(S_i(x))} > \frac{1}{L}\sum_{i=1}^n\sum_{x \in \Omega}\sum_{l=1}^L \textbf{z}_i(x)^{(l)}$. With Lemma~\ref{lamma_bound} we have $\frac{\partial \mathcal{F}(\alpha)}{\partial \alpha}\mid_{\alpha=0} < 0$ and $\frac{\partial \mathcal{F}(\alpha)}{\partial \alpha}\mid_{\alpha \rightarrow +\infty} \geq 0$. From the intermediate value theorem and Lemma~\ref{lamma_bound}, we know there exists a unique $\alpha^*$ ($\{\alpha^*>0  \mid \sum_{i=1}^n\sum_{x \in \Omega}\sum_{l=1}^L \textbf{z}_i(x)^{(l)}\sigma_{SM}\big( \alpha^* \textbf{z}_i(x) \big)^{(j)} = \sum_{i=1}^{n} \sum_{x \in \Omega} \textbf{z}_i(x)^{(S_i(x))} \}$) such that $\frac{\partial \mathcal{F}(\alpha)}{\partial \alpha}\mid_{\alpha = \alpha^*} = 0$. This $\alpha^*$ is the point where $\mathcal{F}(\alpha)$ reaches the minimum value.\\

\noindent
For IBTS, let
\begin{equation}
\mathcal{F}(\alpha_i) = - \sum_{i=1}^n \sum_{x \in \Omega} \log \Big(\sigma_{SM}\big(\alpha_i \textbf{z}_i(x) \big)^{(S_i(x))}\Big).
\end{equation}
Taking the derivative w.r.t. $\alpha_i$, we obtain
\begin{equation}
\frac{\partial \mathcal{F}(\alpha_i)}{\partial \alpha_i} = - \sum_{x \in \Omega} \Big( \textbf{z}_i(x)^{(S_i(x))} - \sum_{l=1}^L \textbf{z}_i(x)^{(l)}\sigma_{SM} \big( \alpha_i \textbf{z}_i(x) \big)^{(l)}  \Big), \quad \forall i.  
\end{equation}

\noindent
\textbf{Case 1:} If $\sum_{x \in \Omega} \textbf{z}_i(x)^{(S_i(x))} \leq \frac{1}{L}\sum_{x \in \Omega}\sum_{l=1}^L \textbf{z}_i(x)^{(l)}$, we have $\frac{\partial \mathcal{F}(\alpha_i)}{\partial \alpha_i}\mid_{\alpha_i=0} \geq 0$. With Lemma~\ref{lamma_bound}, $\mathcal{F}(\alpha_i)$ is a monotonic increasing function. This indicates the minimum value is achieved at $\alpha_i=0$.\\
\textbf{Case 2:} If $\sum_{x \in \Omega} \textbf{z}_i(x)^{(S_i(x))} > \frac{1}{L}\sum_{x \in \Omega}\sum_{l=1}^L \textbf{z}_i(x)^{(l)}$. With Lemma~\ref{lamma_bound} we have $\frac{\partial \mathcal{F}(\alpha_i)}{\partial \alpha_i}\mid_{\alpha_i=0} < 0$ and $\frac{\partial \mathcal{F}(\alpha_i)}{\partial \alpha_i}\mid_{\alpha_i \rightarrow +\infty} \geq 0$. From the intermediate value theorem and Lemma~\ref{lamma_bound}, we know there exists a unique $\alpha_i^*$ ($\{\alpha_i^*>0  \mid \sum_{x \in \Omega}\sum_{l=1}^L \textbf{z}_i(x)^{(l)}\sigma_{SM}\big( \alpha_i^* \textbf{z}_i(x) \big)^{(j)} = \sum_{x \in \Omega} \textbf{z}_i(x)^{(S_i(x))} \}$) such that $\frac{\partial \mathcal{F}(\alpha_i)}{\partial \alpha_i}\mid_{\alpha_i = \alpha_i^*} = 0$. This $\alpha_i^*$ is the point where $\mathcal{F}(\alpha_i)$ reaches the minimum value.\\

\noindent
For LTS, let
\begin{equation}
\mathcal{F}(\alpha_i(x)) = - \sum_{i=1}^n \sum_{x \in \Omega} \log \Big(\sigma_{SM}\big(\alpha_i(x) \textbf{z}_i(x) \big)^{(S_i(x))}\Big).
\end{equation}
Taking the derivative w.r.t. $\alpha_i(x)$, we obtain
\begin{equation}
\frac{\partial \mathcal{F}(\alpha_i(x))}{\partial \alpha_i(x)} = - \Big( \textbf{z}_i(x)^{(S_i(x))} - \sum_{l=1}^L \textbf{z}_i(x)^{(l)}\sigma_{SM} \big( \alpha_i(x) \textbf{z}_i(x) \big)^{(l)}  \Big), \quad \forall i, x.
\end{equation}

\noindent
\textbf{Case 1:} If $\textbf{z}_i(x)^{(S_i(x))} \leq \frac{1}{L}\sum_{l=1}^L \textbf{z}_i(x)^{(l)}$, we have $\frac{\partial \mathcal{F}(\alpha_i(x))}{\partial \alpha_i(x)}\mid_{\alpha_i(x)=0} \geq 0$. With Lemma~\ref{lamma_bound}, $\mathcal{F}(\alpha_i(x))$ is a monotonic increasing function. This indicates the minimum value is achieved at $\alpha_i(x)=0$.\\
\textbf{Case 2:} If $\textbf{z}_i(x)^{(S_i(x))} > \frac{1}{L}\sum_{l=1}^L \textbf{z}_i(x)^{(l)}$. With Lemma~\ref{lamma_bound} we have $\frac{\partial \mathcal{F}(\alpha_i(x))}{\partial \alpha_i(x)}\mid_{\alpha_i(x)=0} < 0$ and $\frac{\partial \mathcal{F}(\alpha_i(x))}{\partial \alpha_i(x)}\mid_{\alpha_i(x) \rightarrow +\infty} \geq 0$. From the intermediate value theorem and Lemma~\ref{lamma_bound}, we know there exists a unique $\alpha_i(x)^*$ ($\{\alpha_i(x)^*>0  \mid \sum_{x \in \Omega}\sum_{l=1}^L \textbf{z}_i(x)^{(l)}\sigma_{SM}\big( \alpha_i(x)^* \textbf{z}_i(x) \big)^{(j)} = \sum_{x \in \Omega} \textbf{z}_i(x)^{(S_i(x))} \}$) such that $\frac{\partial \mathcal{F}(\alpha_i(x))}{\partial \alpha_i(x)}\mid_{\alpha_i(x) = \alpha_i(x)^*} = 0$. This $\alpha_i(x)^*$ is the point where $\mathcal{F}(\alpha_i(x))$ reaches the minimum value.

\end{proof}

\noindent
\textbf{Remark.} The original temperature scaling method defines $T$ instead of $\alpha$ in Theorem~\ref{thm:NLL_opt}. $T$ and $\alpha$ are exchangeable via $T = \frac{1}{\alpha}$. Here we use $\alpha$ to make the proof readable and easy to follow. Furthermore, the definition of temperature scaling requires the temperature value $T>0$. By using $\alpha$, we require $\alpha \geq 0$ with $\alpha \rightarrow 0$ when $T \rightarrow +\infty$.\\

\subsection{Overconfidence and Underconfidence}
\label{app:overfitting_underfitting}
\noindent
One indication of overconfidence for semantic segmentation is that the NLL is greater than or equal to the entropy on the testing dataset (and also the validation dataset) (see \cref{sec:why_cross_entropy_and_temperature_scaling} for a detailed explanation). As demonstrated by~\cite{mukhoti2020calibrating}, this greater-than relationship is mainly because the network gradually becomes more and more confident on its incorrect predictions. Mathematically, before calibration, we have the following relationship on the validation (or testing) dataset:
\begin{equation}
\label{eq:TS_overfitting}
    - \sum_{i=1}^n \sum_{x \in \Omega} \log \Big(\sigma_{SM}\big(\textbf{z}_i(x) \big)^{(S_i(x))}\Big) \geq -\sum_{i=1}^n \sum_{x \in \Omega} \sum_{l=1}^L \sigma_{SM}\big(\textbf{z}_i(x)\big)^{(l)} \log \Big(\sigma_{SM}\big(\textbf{z}_i(x)\big)^{(l)}\Big).
\end{equation}
Furthermore, Eq.~\eqref{eq:TS_overfitting} leads to
\begin{align}
    - \sum_{i=1}^n \sum_{x \in \Omega}\Big[ \textbf{z}_i(x)^{(S_i(x))} + \log \Big( \sum_{l=1}^L \exp(\textbf{z}_i(x)^{(l)}) \Big) \Big] &\geq -\sum_{i=1}^n \sum_{x \in \Omega} \Big[ \sum_{l=1}^L \textbf{z}_i(x)^{(l)} \sigma_{SM}\big(\textbf{z}_i(x)\big)^{(l)}\\
    &\quad+ \underbrace{\sum_{l=1}^L \sigma_{SM}\big(\textbf{z}_i(x)\big)^{(l)}}_{\textbf{=1}} \log \Big( \sum_{l=1}^L \exp(\textbf{z}_i(x)^{(l)}) \Big) \Big] \notag\\
    - \sum_{i=1}^n \sum_{x \in \Omega}\Big[ \textbf{z}_i(x)^{(S_i(x))} + \bcancel{\log \Big( \sum_{l=1}^L \exp(\textbf{z}_i(x)^{(l)}) \Big)} \Big] &\geq -\sum_{i=1}^n \sum_{x \in \Omega} \Big[ \sum_{l=1}^L \textbf{z}_i(x)^{(l)} \sigma_{SM}\big(\textbf{z}_i(x)\big)^{(l)}\\
    &\quad+ \bcancel{\log \Big( \sum_{l=1}^L \exp(\textbf{z}_i(x)^{(l)}) \Big)} \Big] \notag\\
    \sum_{i=1}^n \sum_{x \in \Omega} \textbf{z}_i(x)^{(S_i(x))} &\leq \sum_{i=1}^n \sum_{x \in \Omega} \sum_{l=1}^L \textbf{z}_i(x)^{(l)} \sigma_{SM}\big(\textbf{z}_i(x)\big)^{(l)}. \label{eq:TS_constraint_from}
\end{align}
Eq.~\eqref{eq:TS_constraint_from} is where the idea of the TS constraint in Eq.~\eqref{thm:1} is coming from. Similarly, if we assume
\begin{align}
    - \sum_{x \in \Omega} \log \Big(\sigma_{SM}\big(\textbf{z}_i(x) \big)^{(S_i(x))}\Big) \geq - \sum_{x \in \Omega} \sum_{l=1}^L \sigma_{SM}\big(\textbf{z}_i(x)\big)^{(l)} \log \Big(\sigma_{SM}\big(\textbf{z}_i(x)\big)^{(l)}\Big) \quad \forall i\\
    - \log \Big(\sigma_{SM}\big(\textbf{z}_i(x) \big)^{(S_i(x))}\Big) \geq - \sum_{l=1}^L \sigma_{SM}\big(\textbf{z}_i(x)\big)^{(l)} \log \Big(\sigma_{SM}\big(\textbf{z}_i(x)\big)^{(l)}\Big) \quad \forall i, x,
\end{align}
we get
\begin{align}
    \sum_{x \in \Omega} \textbf{z}_i(x)^{(S_i(x))} &\leq \sum_{x \in \Omega} \sum_{l=1}^L \textbf{z}_i(x)^{(l)} \sigma_{SM}\big(\textbf{z}_i(x)\big)^{(l)}, \quad \forall i \label{eq:IBTS_constraint_from}\\
    \textbf{z}_i(x)^{(S_i(x))} &\leq \sum_{l=1}^L \textbf{z}_i(x)^{(l)} \sigma_{SM}\big(\textbf{z}_i(x)\big)^{(l)}, \quad \forall i, x. \label{eq:LTS_constraint_from}
\end{align}
Hence, Eq.~\eqref{eq:IBTS_constraint_from} is where the idea of the IBTS constraint in Eq.~\eqref{thm:1} is coming from and Eq.~\eqref{eq:LTS_constraint_from} is where the idea of the LTS constraint in Eq.~\eqref{thm:1} is coming from.\\

\begin{definition}
\label{def:overfitting_def}
For semantic segmentation, a model is \textbf{overconfident} for the predicted probabilities in $n$ validation images if
\begin{equation}
\begin{split}
    -\sum_{i=1}^n \sum_{x \in \Omega} \sum_{l=1}^L \sigma_{SM}\big(\textbf{z}_i(x)\big)^{(l)} \log \Big(\sigma_{SM}\big(\textbf{z}_i(x)\big)^{(l)}\Big) &\leq
    - \sum_{i=1}^n \sum_{x \in \Omega} \log \Big(\sigma_{SM}\big(\textbf{z}_i(x) \big)^{(S_i(x))}\Big) \\
    &\text{or}\\
    \sum_{i=1}^n \sum_{x \in \Omega} \textbf{z}_i(x)^{(S_i(x))} &\leq \sum_{i=1}^n \sum_{x \in \Omega} \sum_{l=1}^L \textbf{z}_i(x)^{(l)} \sigma_{SM}\big(\textbf{z}_i(x)\big)^{(l)} \,;
\end{split}
\end{equation}
a model is \textbf{overconfident} for the predicted probabilities in a validation image $I_i$ if
\begin{equation}
\begin{split}
    - \sum_{x \in \Omega} \sum_{l=1}^L \sigma_{SM}\big(\textbf{z}_i(x)\big)^{(l)} \log \Big(\sigma_{SM}\big(\textbf{z}_i(x)\big)^{(l)}\Big) &\leq - \sum_{x \in \Omega} \log \Big(\sigma_{SM}\big(\textbf{z}_i(x) \big)^{(S_i(x))}\Big) \\
    &\text{or}\\
    \sum_{x \in \Omega} \textbf{z}_i(x)^{(S_i(x))} &\leq \sum_{x \in \Omega} \sum_{l=1}^L \textbf{z}_i(x)^{(l)} \sigma_{SM}\big(\textbf{z}_i(x)\big)^{(l)} \,;
\end{split}
\end{equation}
a model is \textbf{overconfident} for the predicted probabilities at position $x$ of a validation image $I_i$ if
\begin{equation}
\begin{split}
    - \sum_{l=1}^L \sigma_{SM}\big(\textbf{z}_i(x)\big)^{(l)} \log \Big(\sigma_{SM}\big(\textbf{z}_i(x)\big)^{(l)}\Big) &\leq - \log \Big(\sigma_{SM}\big(\textbf{z}_i(x) \big)^{(S_i(x))}\Big)\\
    &\text{or}\\
    \textbf{z}_i(x)^{(S_i(x))} &\leq \sum_{l=1}^L \textbf{z}_i(x)^{(l)} \sigma_{SM}\big(\textbf{z}_i(x)\big)^{(l)}\,.
\end{split}
\end{equation}
\end{definition}

\noindent
Furthermore, for underconfidence of semantic segmentation, the NLL is generally less than or equal to the entropy. This is because, when training is insufficient, for correct predictions we have NLL less than or equal to the entropy while for incorrect predictions there is no guaranteed relationship between NLL and entropy. Besides, the majority of the pixel/voxel label predictions for a semantic segmentation are correct after the network has been trained a certain period of time (before overconfidence). Hence, NLL will is expected to be less than or equal to the entropy on average during the underconfident stage.
Thus we have the following constraints during underconfidence,
\begin{align}
    \sum_{i=1}^n \sum_{x \in \Omega} \textbf{z}_i(x)^{(S_i(x))} &\geq \sum_{i=1}^n \sum_{x \in \Omega} \sum_{l=1}^L \textbf{z}_i(x)^{(l)} \sigma_{SM}\big(\textbf{z}_i(x)\big)^{(l)} \label{eq:Underfitting_TS_constraint_from}\\
    \sum_{x \in \Omega} \textbf{z}_i(x)^{(S_i(x))} &\geq \sum_{x \in \Omega} \sum_{l=1}^L \textbf{z}_i(x)^{(l)} \sigma_{SM}\big(\textbf{z}_i(x)\big)^{(l)}, \quad \forall i \label{eq:Underfitting_IBTS_constraint_from}\\
    \textbf{z}_i(x)^{(S_i(x))} &\geq \sum_{l=1}^L \textbf{z}_i(x)^{(l)} \sigma_{SM}\big(\textbf{z}_i(x)\big)^{(l)}, \quad \forall i, x. \label{eq:Underfitting_LTS_constraint_from}
\end{align}
Eq.~\eqref{eq:Underfitting_TS_constraint_from}, Eq.~\eqref{eq:Underfitting_IBTS_constraint_from}, and Eq.~\eqref{eq:Underfitting_LTS_constraint_from} are the prototypes of the constraints for TS, IBTS, LTS in Theorem~\ref{thm:underfitting_entropy}.\\

\begin{definition}
\label{def:underfitting_def}
For semantic segmentation, a model is \textbf{underconfident} for the predicted probabilities in $n$ validation images if
\begin{equation}
\begin{split}
    -\sum_{i=1}^n \sum_{x \in \Omega} \sum_{l=1}^L \sigma_{SM}\big(\textbf{z}_i(x)\big)^{(l)} \log \Big(\sigma_{SM}\big(\textbf{z}_i(x)\big)^{(l)}\Big) &\geq
    - \sum_{i=1}^n \sum_{x \in \Omega} \log \Big(\sigma_{SM}\big(\textbf{z}_i(x) \big)^{(S_i(x))}\Big) \\
    &\text{or}\\
    \sum_{i=1}^n \sum_{x \in \Omega} \textbf{z}_i(x)^{(S_i(x))} &\geq \sum_{i=1}^n \sum_{x \in \Omega} \sum_{l=1}^L \textbf{z}_i(x)^{(l)} \sigma_{SM}\big(\textbf{z}_i(x)\big)^{(l)} \,;
\end{split}
\end{equation}
a model is \textbf{underconfident} for the predicted probabilities in a validation image $I_i$ if
\begin{equation}
\begin{split}
    - \sum_{x \in \Omega} \sum_{l=1}^L \sigma_{SM}\big(\textbf{z}_i(x)\big)^{(l)} \log \Big(\sigma_{SM}\big(\textbf{z}_i(x)\big)^{(l)}\Big) &\geq - \sum_{x \in \Omega} \log \Big(\sigma_{SM}\big(\textbf{z}_i(x) \big)^{(S_i(x))}\Big) \\
    &\text{or}\\
    \sum_{x \in \Omega} \textbf{z}_i(x)^{(S_i(x))} &\geq \sum_{x \in \Omega} \sum_{l=1}^L \textbf{z}_i(x)^{(l)} \sigma_{SM}\big(\textbf{z}_i(x)\big)^{(l)} \,;
\end{split}
\end{equation}
a model is \textbf{underconfident} for the predicted probabilities at position $x$ of a validation image $I_i$ if
\begin{equation}
\begin{split}
    - \sum_{l=1}^L \sigma_{SM}\big(\textbf{z}_i(x)\big)^{(l)} \log \Big(\sigma_{SM}\big(\textbf{z}_i(x)\big)^{(l)}\Big) &\geq - \log \Big(\sigma_{SM}\big(\textbf{z}_i(x) \big)^{(S_i(x))}\Big)\\
    &\text{or}\\
    \textbf{z}_i(x)^{(S_i(x))} &\geq \sum_{l=1}^L \textbf{z}_i(x)^{(l)} \sigma_{SM}\big(\textbf{z}_i(x)\big)^{(l)}\,.
\end{split}
\end{equation}
\end{definition}

\begin{definition}
\label{def:balanced_def}
For semantic segmentation, a model is \textbf{balanced} for the predicted probabilities in $n$ validation images if
\begin{equation}
\begin{split}
    -\sum_{i=1}^n \sum_{x \in \Omega} \sum_{l=1}^L \sigma_{SM}\big(\textbf{z}_i(x)\big)^{(l)} \log \Big(\sigma_{SM}\big(\textbf{z}_i(x)\big)^{(l)}\Big) &=
    - \sum_{i=1}^n \sum_{x \in \Omega} \log \Big(\sigma_{SM}\big(\textbf{z}_i(x) \big)^{(S_i(x))}\Big) \\
    &\text{or}\\
    \sum_{i=1}^n \sum_{x \in \Omega} \textbf{z}_i(x)^{(S_i(x))} &= \sum_{i=1}^n \sum_{x \in \Omega} \sum_{l=1}^L \textbf{z}_i(x)^{(l)} \sigma_{SM}\big(\textbf{z}_i(x)\big)^{(l)} \,;
\end{split}
\end{equation}
a model is \textbf{balanced} for the predicted probabilities in a validation image $I_i$ if
\begin{equation}
\begin{split}
    - \sum_{x \in \Omega} \sum_{l=1}^L \sigma_{SM}\big(\textbf{z}_i(x)\big)^{(l)} \log \Big(\sigma_{SM}\big(\textbf{z}_i(x)\big)^{(l)}\Big) &= - \sum_{x \in \Omega} \log \Big(\sigma_{SM}\big(\textbf{z}_i(x) \big)^{(S_i(x))}\Big) \\
    &\text{or}\\
    \sum_{x \in \Omega} \textbf{z}_i(x)^{(S_i(x))} &= \sum_{x \in \Omega} \sum_{l=1}^L \textbf{z}_i(x)^{(l)} \sigma_{SM}\big(\textbf{z}_i(x)\big)^{(l)} \,;
\end{split}
\end{equation}
a model is \textbf{balanced} for the predicted probabilities at position $x$ of a validation image $I_i$ if
\begin{equation}
\begin{split}
    - \sum_{l=1}^L \sigma_{SM}\big(\textbf{z}_i(x)\big)^{(l)} \log \Big(\sigma_{SM}\big(\textbf{z}_i(x)\big)^{(l)}\Big) &= - \log \Big(\sigma_{SM}\big(\textbf{z}_i(x) \big)^{(S_i(x))}\Big)\\
    &\text{or}\\
    \textbf{z}_i(x)^{(S_i(x))} &= \sum_{l=1}^L \textbf{z}_i(x)^{(l)} \sigma_{SM}\big(\textbf{z}_i(x)\big)^{(l)}\,.
\end{split}
\end{equation}
\end{definition}

\subsection{Weighted Averaged Logits and Entropy Extremes}
\label{app:entropy_extrmes_without_constraints}

\noindent
\begin{lemma}
\label{lamma_unconstraint}
Given $n$ logit vector maps $\textbf{z}_1, ..., \textbf{z}_n$, equal probability for all labels is the unique solution $q$ (probability distribution) to the following entropy maximization problem:
\begin{equation}
\begin{split}
\label{lemma2}
\max_{q} \quad& -\sum_{i=1}^n \sum_{x \in \Omega} \sum_{l=1}^L q\big(\textbf{z}_i(x)\big)^{(l)} \log \Big(q\big(\textbf{z}_i(x)\big)^{(l)}\Big) \\
subject \; to \quad& q\big(\textbf{z}_i(x)\big)^{(l)} \geq 0 \quad \forall i, x, l\\
& \sum_{l=1}^L q\big(\textbf{z}_i(x)\big)^{(l)} = 1 \quad \forall i, x\\
\end{split}
\end{equation}
\end{lemma}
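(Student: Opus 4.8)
The plan is to exploit the fact that both the objective and the constraints \emph{separate} across each index pair $(i,x)$: the entropy functional is a sum over $(i,x)$ of the per-location entropies $-\sum_{l=1}^L q(\textbf{z}_i(x))^{(l)}\log q(\textbf{z}_i(x))^{(l)}$, while the normalization constraint $\sum_l q(\textbf{z}_i(x))^{(l)}=1$ together with the nonnegativity constraints couples only the $L$ components belonging to the same $(i,x)$. Hence the maximization decomposes into $n\cdot|\Omega|$ independent problems, each maximizing the Shannon entropy of a single discrete distribution over the probability simplex $\Delta_{L-1}=\{q\in\mathbb{R}^L : q^{(l)}\ge 0,\ \sum_l q^{(l)}=1\}$. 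It therefore suffices to solve this one canonical problem and observe that the global maximizer is the concatenation of the per-location maximizers.

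For the single-simplex problem I would first locate the critical point via a Lagrange multiplier for the equality constraint, deferring the nonnegativity constraints which I expect to be inactive. Writing $\mathcal{L}(q,\mu)=-\sum_{l=1}^L q^{(l)}\log q^{(l)}+\mu\big(\sum_{l=1}^L q^{(l)}-1\big)$ and setting $\partial\mathcal{L}/\partial q^{(l)}=-\log q^{(l)}-1+\mu=0$ yields $q^{(l)}=e^{\mu-1}$ for every $l$, so all components are equal; imposing $\sum_l q^{(l)}=1$ then forces $q^{(l)}=1/L$, which is strictly positive, confirming the nonnegativity constraints are slack.

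To upgrade this stationary point to a \emph{unique global} maximizer I would invoke strict concavity: the entropy $H(q)=-\sum_l q^{(l)}\log q^{(l)}$ has diagonal Hessian with entries $-1/q^{(l)}<0$ on the interior, so it is strictly concave on the convex, compact simplex, and a strictly concave function on a convex set has at most one maximizer. Since $H$ is continuous on the compact simplex under the convention $0\log 0=0$, a maximizer exists, and strict concavity makes it unique; combined with the critical-point computation this identifies it as the uniform distribution. An alternative that sidesteps the multiplier and boundary bookkeeping entirely is the Gibbs inequality: for the uniform law $u^{(l)}=1/L$ one has $0\le D_{\mathrm{KL}}(q\,\|\,u)=\sum_l q^{(l)}\log\frac{q^{(l)}}{1/L}=\log L - H(q)$, so that $H(q)\le\log L$ with equality if and only if $q=u$, giving both the optimal value and uniqueness in a single step.

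The only genuine subtlety, and the step I would treat most carefully, is the behavior on the boundary of the simplex where some $q^{(l)}=0$ and the stationarity condition is not meaningful because the derivative of $t\log t$ is unbounded there. The Gibbs-inequality route is the cleanest way to handle this, since it compares against the uniform law over the entire closed simplex at once and requires no separate boundary analysis; I would therefore present the Lagrange computation only as motivation for the form of the answer and let the Gibbs/strict-concavity argument supply the rigorous uniqueness claim. Reassembling the per-location solutions then yields that $q(\textbf{z}_i(x))^{(l)}=1/L$ for all $i,x,l$ is the unique maximizer of the full problem.
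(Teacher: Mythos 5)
Your proposal is correct, and its core computation coincides with the paper's own proof: both introduce a Lagrange multiplier for the normalization constraint, ignore the nonnegativity constraints as inactive, solve the first-order condition $-1-\log q^{(l)}+\mu=0$ to get equal components, and normalize to $1/L$. Where you go beyond the paper is in the second half of your argument. The paper stops at the stationary point and simply asserts that it is ``the entropy maximization solution,'' even though the lemma claims the maximizer is \emph{unique}; it never verifies that the critical point is a maximum rather than a saddle, never handles the boundary of the simplex where $q^{(l)}=0$ and the stationarity condition degenerates, and never proves uniqueness. Your Gibbs-inequality step, $0\le D_{\mathrm{KL}}(q\,\|\,u)=\log L-H(q)$ with equality iff $q=u$ (equivalently, the strict-concavity argument on the compact simplex), supplies exactly the global-optimality and uniqueness claims that the paper's proof leaves implicit, and it does so over the entire closed simplex without any separate boundary analysis. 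Your explicit observation that the problem separates into $n\cdot|\Omega|$ independent single-simplex problems is also only implicit in the paper. In short: same route to the candidate solution, but your version is the more complete proof of the statement as actually written.
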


\begin{proof}
We use Lagrangian multipliers to solve the optimization problem. $q\big(\textbf{z}_i(x)\big)^{(l)} \geq 0$ is ignored in the Lagrangian but the deducted solution satisfies this constraint automatically.

\noindent
Let $\beta_i(x)$ be the multipliers. The Lagrangian is
\begin{equation}
\begin{split}
\mathcal{L} = & -\sum_{i=1}^n \sum_{x \in \Omega} \sum_{l=1}^L q\big(\textbf{z}_i(x)\big)^{(l)} \log \Big(q\big(\textbf{z}_i(x)\big)^{(l)}\Big)
+ \sum_{i=1}^n \sum_{x \in \Omega} \beta_i (x) \bigg(\sum_{l=1}^L q\big(\textbf{z}_i(x)\big)^{(l)} - 1 \bigg). 
\end{split}
\end{equation} 
We take the derivative with respect to $q\big(\textbf{z}_i(x)\big)^{(l)}$ and set it to 0
\begin{equation}
\frac{\partial \mathcal{L}}{\partial q\big(\textbf{z}_i(x)\big)^{(l)}} = -1 -\log \Big(q\big(\textbf{z}_i(x)\big)^{(l)}\Big) + \beta_i(x) = 0.
\end{equation}
Thus, we obtain the expression of $q\big(\textbf{z}_i(x)\big)^{(l)}$ as
\begin{equation}
q\big(\textbf{z}_i(x)\big)^{(l)} = e^{\beta_i(x)-1}.
\end{equation}
Hence, $q\big(\textbf{z}_i(x)\big)^{(l)} \geq 0$. Since $\sum_{l=1}^L q\big(\textbf{z}_i(x)\big)^{(l)} = 1$ for all $i$ and $x$, it must satisfy
\begin{equation}
q\big(\textbf{z}_i(x)\big)^{(l)} = \frac{1}{L}.
\end{equation}
Hence the equal probability distribution over all labels is the entropy maximization solution.\\
\end{proof}

\noindent
\textbf{Remark.} For a classification or semantic segmentation task, equal probability for each label will yield the maximum entropy.\\

\noindent
\textbf{Remark.} The minimum entropy lies at extreme points, i.e.
\begin{equation}
\left.\begin{aligned}
\argmin_{q} \quad& -\sum_{i=1}^n \sum_{x \in \Omega} \sum_{l=1}^L q\big(\textbf{z}_i(x)\big)^{(l)} \log \Big(q\big(\textbf{z}_i(x)\big)^{(l)}\Big) \\
subject \; to \quad& q\big(\textbf{z}_i(x)\big)^{(l)} \geq 0 \quad \forall i, x, l\\
& \sum_{l=1}^L q\big(\textbf{z}_i(x)\big)^{(l)} = 1 \quad \forall i, x\\
\end{aligned}\right\} \Big\{q\big(\textbf{z}_i(x)\big)^{(l)}=1, q\big(\textbf{z}_i(x)\big)^{(j)}=0, (\forall j\neq i) \Big\}, \forall i 
\end{equation}
\\

\subsection{Entropy Extremes Under Constraints}
\label{app:entropy_entrmes_with_constraints}
\noindent
\begin{theorem}
\label{thm:entropy_temperature}
Given $n$ logit vector maps $\textbf{z}_1, ..., \textbf{z}_n$ and label maps $S_1, ..., S_n$, temperature scaling (TS), image-based temperature scaling (IBTS) and local temperature scaling (LTS) are the unique solutions $q$ (probability distribution)
to the following entropy maximization problem with different constraints (A, B or C):

\begin{equation}
\begin{split}
\label{thm:1}
\max_{q} \quad& -\sum_{i=1}^n \sum_{x \in \Omega} \sum_{l=1}^L q\big(\textbf{z}_i(x)\big)^{(l)} \log \Big(q\big(\textbf{z}_i(x)\big)^{(l)}\Big) \\
subject \; to \quad& q\big(\textbf{z}_i(x)\big)^{(l)} \geq 0 \quad \forall i, x, l\\
& \sum_{l=1}^L q\big(\textbf{z}_i(x)\big)^{(l)} = 1 \quad \forall i, x\\
& \begin{cases}
		\sum_{i=1}^n\sum_{x \in \Omega}\sum_{l=1}^L \textbf{z}_i(x)^{(l)}q\big(\textbf{z}_i(x)\big)^{(l)} \geq \varepsilon^A & \text{(A: TS constraint)} \\
		\sum_{x \in \Omega} \sum_{l=1}^L \textbf{z}_i(x)^{(l)}q\big(\textbf{z}_i(x)\big)^{(l)} \geq \varepsilon_i^B \quad \forall i & \text{(B: IBTS constraint)} \\
		\sum_{l=1}^L \textbf{z}_i(x)^{(l)}q\big(\textbf{z}_i(x)\big)^{(l)} \geq \varepsilon_i^C (x) \quad \forall i, x & \text{(C: LTS constraint)}
\end{cases}\\
\end{split}
\end{equation}
where $\varepsilon^A$, $\varepsilon^B_i$ and $\varepsilon^C_i (x)$ are the following constants:
\begin{equation}
\begin{split}
 \varepsilon^A &= \sum_{i=1}^{n} \sum_{x \in \Omega} \textbf{z}_i(x)^{(S_i(x))}\,, \\
 \varepsilon_i^B &= \sum_{x \in \Omega} \textbf{z}_i(x)^{(S_i(x))}\,, \\
 \varepsilon_i^C (x) &= \textbf{z}_i(x)^{(S_i(x))}\,. 
\end{split}
\end{equation}
And the corresponding optimal inverse temperature values for TS, IBTS and LTS are
\begin{equation}
\begin{split}
&\begin{cases}
\;\; \alpha^*=0, \quad\quad\quad\quad \text{if}\quad \sum_{i=1}^{n} \sum_{x \in \Omega} \textbf{z}_i(x)^{(S_i(x))} \leq \frac{1}{L}\sum_{i=1}^n\sum_{x \in \Omega}\sum_{l=1}^L \textbf{z}_i(x)^{(l)}\\ 
\Big\{\alpha^*>0  \mid \sum_{i=1}^n\sum_{x \in \Omega}\sum_{l=1}^L \textbf{z}_i(x)^{(l)}\sigma_{SM}\big( \alpha^* \textbf{z}_i(x) \big)^{(j)} = \sum_{i=1}^{n} \sum_{x \in \Omega} \textbf{z}_i(x)^{(S_i(x))} \Big\}, \text{otherwise}
\end{cases}\\
&\begin{cases} 
\;\; \alpha_i^*=0, \quad\quad\quad\quad \text{if}\quad \sum_{x \in \Omega} \textbf{z}_i(x)^{(S_i(x))} \leq \frac{1}{L}\sum_{x \in \Omega}\sum_{l=1}^L \textbf{z}_i(x)^{(l)}\\ 
\Big\{\alpha_i^*>0  \mid \sum_{x \in \Omega}\sum_{l=1}^L \textbf{z}_i(x)^{(l)}\sigma_{SM}\big( \alpha_i^* \textbf{z}_i(x) \big)^{(j)} = \sum_{x \in \Omega} \textbf{z}_i(x)^{(S_i(x))} \Big\}, \text{otherwise}
\end{cases}\\
&\begin{cases}
\;\; \alpha_i(x)^*=0, \quad\quad\quad\quad \text{if}\quad \textbf{z}_i(x)^{(S_i(x))} \leq \frac{1}{L}\sum_{l=1}^L \textbf{z}_i(x)^{(l)}\\ 
\Big\{\alpha_i(x)^*>0  \mid \sum_{l=1}^L \textbf{z}_i(x)^{(l)}\sigma_{SM}\big( \alpha_i(x)^* \textbf{z}_i(x) \big)^{(j)} =  \textbf{z}_i(x)^{(S_i(x))} \Big\}, \text{otherwise}
\end{cases}.
\end{split} 
\end{equation}\\
\end{theorem}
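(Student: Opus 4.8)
The plan is to treat Eq.~\eqref{thm:1} as a concave maximization program with linear constraints and solve it via Lagrangian/KKT conditions, following the same template as the unconstrained argument in Lemma~\ref{lamma_unconstraint} but now carrying the extra logit constraint. I would derive the result in full only for the TS constraint (A), since (B) and (C) are identical arguments that differ solely in the granularity at which the constraint couples the variables.

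First I would form the Lagrangian, introducing multipliers $\beta_i(x)$ for the normalization equalities $\sum_l q(\textbf{z}_i(x))^{(l)} = 1$ and a single nonnegative multiplier $\alpha \ge 0$ for inequality constraint (A). As in Lemma~\ref{lamma_unconstraint}, the nonnegativity constraints $q \ge 0$ can be dropped from the Lagrangian because the resulting stationary point satisfies them automatically. Setting $\partial \mathcal{L}/\partial q(\textbf{z}_i(x))^{(l)} = 0$ gives $-1 - \log q(\textbf{z}_i(x))^{(l)} + \beta_i(x) + \alpha \textbf{z}_i(x)^{(l)} = 0$, hence $q(\textbf{z}_i(x))^{(l)} = e^{\beta_i(x)-1}\,e^{\alpha \textbf{z}_i(x)^{(l)}}$. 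Eliminating $\beta_i(x)$ through the normalization constraint yields $q(\textbf{z}_i(x))^{(l)} = \sigma_{SM}(\alpha \textbf{z}_i(x))^{(l)}$, i.e.\ the temperature-scaled softmax with inverse temperature $\alpha = 1/T$. The key structural observation is that constraint (A) is a single global constraint, so the same multiplier $\alpha$ appears at every $(i,x)$, producing one global temperature (TS); constraint (B) decouples the program per image, producing a per-image multiplier $\alpha_i$ (IBTS); and constraint (C) decouples per location, producing $\alpha_i(x)$ (LTS).

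Next I would extract the value of $\alpha$ from the remaining KKT conditions. Complementary slackness $\alpha\,(\sum_{i,x,l}\textbf{z}_i(x)^{(l)} q(\textbf{z}_i(x))^{(l)} - \varepsilon^A) = 0$ splits into two cases. If $\alpha = 0$, then $q$ is uniform by Lemma~\ref{lamma_unconstraint}, and feasibility requires $\varepsilon^A \le \frac{1}{L}\sum_{i,x,l}\textbf{z}_i(x)^{(l)}$, which is precisely Case~1 of the theorem. If $\alpha > 0$, the constraint is tight: $\sum_{i,x,l}\textbf{z}_i(x)^{(l)}\sigma_{SM}(\alpha\textbf{z}_i(x))^{(l)} = \varepsilon^A = \sum_{i,x}\textbf{z}_i(x)^{(S_i(x))}$, which is exactly the defining equation for the optimal $\alpha^*$ in Theorem~\ref{thm:NLL_opt}. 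Invoking Lemma~\ref{lamma_bound}, the left-hand side is monotone increasing in $\alpha$ with limiting values $\frac{1}{L}\sum_l \textbf{z}_i(x)^{(l)}$ and $\max_l \textbf{z}_i(x)^{(l)}$, so whenever $\varepsilon^A$ exceeds the uniform value there is a unique such $\alpha^* > 0$ by the intermediate value theorem; this recovers Case~2 and shows the constrained-entropy optimizer coincides with the NLL minimizer of Theorem~\ref{thm:NLL_opt}. Uniqueness of $q$ then follows from strict concavity of the entropy over the convex feasible set cut out by the linear constraints.

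The hard part will be the careful bookkeeping of the KKT conditions for the inequality constraint: verifying that the correct multiplier sign is $\alpha \ge 0$ (so that $T = 1/\alpha$ is admissible), and ruling out spurious solutions on the active-constraint branch. Here Lemma~\ref{lamma_bound} is the essential tool, since its monotonicity guarantees that the tightness equation has at most one positive root and fixes the sign, while its explicit bounds pin down exactly the threshold $\frac{1}{L}\sum_l \textbf{z}_i(x)^{(l)}$ that separates the two KKT cases. The IBTS and LTS variants introduce no new ideas: one repeats the argument with the constraint, and hence the multiplier, ranging over each image or each location, and the per-group bounds from Lemma~\ref{lamma_bound} apply verbatim.
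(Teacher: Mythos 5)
Your proposal follows essentially the same route as the paper's proof: a Lagrangian/KKT argument in which stationarity plus normalization forces $q$ into the temperature-scaled softmax form, complementary slackness produces the two-case split on $\alpha$, and Lemma~\ref{lamma_bound} together with the intermediate value theorem pins down the unique positive root of the tightness equation (the paper likewise works constraints B and C by the same decoupled argument). The only addition is your explicit appeal to strict concavity for uniqueness of $q$, which the paper leaves implicit; otherwise the two proofs coincide.
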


\begin{proof}
We use the Karush–Kuhn–Tucker (KKT) conditions to solve the optimization problems. $q\big(\textbf{z}_i(x)\big)^{(l)} \geq 0$ is ignored for the KKT conditions as the deducted solution satisfies this constraint automatically (i.e., it is inactive).

\noindent 
For constraint A, let $\alpha$, $\beta_i(x)$ be the multipliers. The Lagrangian is
\begin{equation}
\begin{split}
\mathcal{L} = & -\sum_{i=1}^n \sum_{x \in \Omega} \sum_{l=1}^L q\big(\textbf{z}_i(x)\big)^{(l)} \log \Big(q\big(\textbf{z}_i(x)\big)^{(l)}\Big)
- \sum_{i=1}^n \sum_{x \in \Omega} \beta_i (x) \bigg(\sum_{l=1}^L q\big(\textbf{z}_i(x)\big)^{(l)} - 1 \bigg)\\
& - \alpha \Big( \varepsilon^A - \sum_{i=1}^n\sum_{x \in \Omega}\sum_{l=1}^L \textbf{z}_i(x)^{(l)}q\big(\textbf{z}_i(x)\big)^{(l)} \Big). 
\end{split}
\end{equation} 
Thus, the KKT conditions are
\begin{align}
\frac{\partial \mathcal{L}}{\partial q\big(\textbf{z}_i(x)\big)^{(l)}} = -1 -\log \Big(q\big(\textbf{z}_i(x)\big)^{(l)}\Big) + \alpha \textbf{z}_i(x)^{(l)} - \beta_i(x) &= 0 \quad \forall i, l, x, \label{eq:KKT_A_1}\\
\sum_{l=1}^L q\big(\textbf{z}_i(x)\big)^{(l)} - 1 &= 0 \quad \forall i, x, \label{eq:KKT_A_2}\\
\varepsilon^A - \sum_{i=1}^n\sum_{x \in \Omega}\sum_{l=1}^L \textbf{z}_i(x)^{(l)}q\big(\textbf{z}_i(x)\big)^{(l)} &\leq 0, \label{eq:KKT_A_3}\\
\alpha &\geq 0, \label{eq:KKT_A_4}\\
\alpha \Big(\varepsilon^A - \sum_{i=1}^n\sum_{x \in \Omega}\sum_{l=1}^L \textbf{z}_i(x)^{(l)}q\big(\textbf{z}_i(x)\big)^{(l)} \Big) &= 0. \label{eq:KKT_A_5}
\end{align}
From Eq.~\eqref{eq:KKT_A_1}, we obtain the expression of $q\big(\textbf{z}_i(x)\big)^{(l)}$ as
\begin{equation}
q\big(\textbf{z}_i(x)\big)^{(l)} = e^{\alpha \textbf{z}_i(x)^{(l)} - \beta_i(x)-1}.
\end{equation}
Hence, $q\big(\textbf{z}_i(x)\big)^{(l)} \geq 0$. Since $\sum_{l=1}^L q\big(\textbf{z}_i(x)\big)^{(l)} = 1$ (Eq.~\eqref{eq:KKT_A_2}) for all $i$ and $x$, it must satisfy
\begin{equation}
\label{eq:ts_sol}
q\big(\textbf{z}_i(x)\big)^{(l)} = \frac{e^{\alpha \textbf{z}_i(x)^{(l)}}}{\sum_{j=1}^L e^{\alpha \textbf{z}_i(x)^{(j)}}}.
\end{equation}
From Eq.~\eqref{eq:KKT_A_3}, we have
\begin{equation}
\begin{split}
\sum_{i=1}^n\sum_{x \in \Omega}\sum_{l=1}^L \textbf{z}_i(x)^{(l)}q\big(\textbf{z}_i(x)\big)^{(l)} &= \sum_{i=1}^n\sum_{x \in \Omega}\sum_{l=1}^L \textbf{z}_i(x)^{(l)}\frac{e^{\alpha \textbf{z}_i(x)^{(l)}}}{\sum_{j=1}^L e^{\alpha \textbf{z}_i(x)^{(j)}}} \\
&\geq \varepsilon^A \\
&= \sum_{i=1}^{n} \sum_{x \in \Omega} \textbf{z}_i(x)^{(S_i(x))}.    
\end{split}
\end{equation}
\textbf{Case 1:} If $\sum_{i=1}^{n} \sum_{x \in \Omega} \textbf{z}_i(x)^{(S_i(x))} > \frac{1}{L}\sum_{i=1}^n\sum_{x \in \Omega}\sum_{l=1}^L \textbf{z}_i(x)^{(l)}$, then we have
\begin{equation}
\begin{split}
\sum_{i=1}^n\sum_{x \in \Omega}\sum_{l=1}^L \textbf{z}_i(x)^{(l)}q\big(\textbf{z}_i(x)\big)^{(l)} &= \sum_{i=1}^n\sum_{x \in \Omega}\sum_{l=1}^L \textbf{z}_i(x)^{(l)}\frac{e^{\alpha \textbf{z}_i(x)^{(l)}}}{\sum_{j=1}^L e^{\alpha \textbf{z}_i(x)^{(j)}}} \\
&\geq \sum_{i=1}^{n} \sum_{x \in \Omega} \textbf{z}_i(x)^{(S_i(x))} \\
&> \frac{1}{L}\sum_{i=1}^n\sum_{x \in \Omega}\sum_{l=1}^L \textbf{z}_i(x)^{(l)}.    
\end{split}    
\end{equation}

\noindent
If $\alpha=0$, then $q\big(\textbf{z}_i(x)\big)^{(l)}=1/L$ for all $i$, $l$ and $x$. Thus, Eq.~\eqref{eq:KKT_A_3} becomes $\varepsilon^A - \sum_{i=1}^n\sum_{x \in \Omega}\sum_{l=1}^L \textbf{z}_i(x)^{(l)}\frac{1}{L} \leq 0$, which violates the $\sum_{i=1}^{n} \sum_{x \in \Omega} \textbf{z}_i(x)^{(S_i(x))} > \frac{1}{L}\sum_{i=1}^n\sum_{x \in \Omega}\sum_{l=1}^L \textbf{z}_i(x)^{(l)}$ assumption. Hence, $\alpha \neq 0$.

\noindent
Furthermore, we have 
\begin{equation}
\frac{1}{L}\sum_{i=1}^n\sum_{x \in \Omega}\sum_{l=1}^L \textbf{z}_i(x)^{(l)} < \sum_{i=1}^{n} \sum_{x \in \Omega} \textbf{z}_i(x)^{(S_i(x))} \leq \sum_{i=1}^n\sum_{x \in \Omega} \max_l \{ \textbf{z}_i(x)^{(l)} \},
\end{equation}
with Lemma~\ref{lamma_bound} and the intermediate value theorem, there must be a unique strictly positive solution $\alpha^*$ for $\alpha$ such that $\sum_{i=1}^n\sum_{x \in \Omega}\sum_{l=1}^L \textbf{z}_i(x)^{(l)}q\big(\textbf{z}_i(x)\big)^{(l)} = \varepsilon^A = \sum_{i=1}^{n} \sum_{x \in \Omega} \textbf{z}_i(x)^{(S_i(x))}$. Thus Eq.~\eqref{eq:KKT_A_4} and Eq.~\eqref{eq:KKT_A_5} both hold.\\

\noindent
\textbf{Case 2:} If $\sum_{i=1}^{n} \sum_{x \in \Omega} \textbf{z}_i(x)^{(S_i(x))} \leq \frac{1}{L}\sum_{i=1}^n\sum_{x \in \Omega}\sum_{l=1}^L \textbf{z}_i(x)^{(l)}$. \\
If $\alpha \neq 0$, Eq.~\eqref{eq:KKT_A_5} yields $\sum_{i=1}^n\sum_{x \in \Omega}\sum_{l=1}^L \textbf{z}_i(x)^{(l)}q\big(\textbf{z}_i(x)\big)^{(l)} = \varepsilon^A = \sum_{i=1}^{n} \sum_{x \in \Omega} \textbf{z}_i(x)^{(S_i(x))}$. With Lemma~\ref{lamma_bound} and intermediate value theorem, there exists a unique non-positive $\alpha$. This violates Eq.~\eqref{eq:KKT_A_4} and the $\alpha \neq 0$ assumption. Thus, $\alpha=0$.\\
Furthermore, when $\alpha=0$, it yields $q\big(\textbf{z}_i(x)\big)^{(l)}=1/L$ for all $i$, $l$ and $x$. Take $q\big(\textbf{z}_i(x)\big)^{(l)}=1/L$ into Eq.~\eqref{eq:KKT_A_3}, the inequality holds. Eq.~\eqref{eq:KKT_A_4} and Eq.~\eqref{eq:KKT_A_5} also hold. From Lemma~\ref{lamma_unconstraint}, we know that $q\big(\textbf{z}_i(x)\big)^{(l)}=1/L$ is the solution for entropy maximization of Eq.~\eqref{lemma2}. Since Eq.~\eqref{thm:1} is the subproblem of Eq.~\eqref{lemma2}, $q\big(\textbf{z}_i(x)\big)^{(l)}=1/L$ also reaches the entropy maximization of Eq.~\eqref{thm:1}. \\

\noindent
Overall, the optimal solution is 
\begin{equation}
q\big(\textbf{z}_i(x)\big)^{(l)} = \frac{e^{\alpha^* \textbf{z}_i(x)^{(l)}}}{\sum_{j=1}^L e^{\alpha^* \textbf{z}_i(x)^{(j)}}},
\end{equation}
with
\begin{equation}
\begin{cases}
\alpha^* = 0, \quad\quad\quad\quad \text{if}\quad \sum_{i=1}^{n} \sum_{x \in \Omega} \textbf{z}_i(x)^{(S_i(x))} \leq \frac{1}{L}\sum_{i=1}^n\sum_{x \in \Omega}\sum_{l=1}^L \textbf{z}_i(x)^{(l)}\\
\{\alpha^*>0  \mid \sum_{i=1}^n\sum_{x \in \Omega}\sum_{l=1}^L \textbf{z}_i(x)^{(l)}\frac{e^{\alpha^* \textbf{z}_i(x)^{(l)}}}{\sum_{j=1}^L e^{\alpha^* \textbf{z}_i(x)^{(j)}}} = \sum_{i=1}^{n} \sum_{x \in \Omega} \textbf{z}_i(x)^{(S_i(x))} \}, \quad \text{otherwise}
\end{cases}    
\end{equation}
Let $T = \frac{1}{\alpha^*}$ ($\alpha^* \rightarrow 0$ as $T \rightarrow +\infty$), then this is the TS solution. Note that $T$ does not depend on $i$ and $x$, which is the same as the temperature value in Eq.~\eqref{eq:TS_opt}.\\

\noindent
For constraint B, let $\alpha_i$, $\beta_i(x)$ be the multipliers. Then the Lagrangian is
\begin{equation}
\begin{split}
\mathcal{L} = & -\sum_{i=1}^n \sum_{x \in \Omega} \sum_{l=1}^L q\big(\textbf{z}_i(x)\big)^{(l)} \log \Big(q\big(\textbf{z}_i(x)\big)^{(l)}\Big) - \sum_{i=1}^n \sum_{x \in \Omega} \beta_i (x) \bigg(\sum_{l=1}^L q\big(\textbf{z}_i(x)\big)^{(l)} - 1 \bigg)\\
& - \sum_{i=1}^{n} \alpha_i \Big( \varepsilon^B_i - \sum_{x \in \Omega}\sum_{l=1}^L \textbf{z}_i(x)^{(l)}q\big(\textbf{z}_i(x)\big)^{(l)} \Big).
\end{split}
\end{equation} 
Thus, the KKT conditions are
\begin{align}
\frac{\partial \mathcal{L}}{\partial q\big(\textbf{z}_i(x)\big)^{(l)}} = -1 -\log \Big(q\big(\textbf{z}_i(x)\big)^{(l)} \Big) + \alpha_i \textbf{z}_i(x)^{(l)} - \beta_i(x) &= 0 \quad \forall i, l, x, \label{eq:KKT_B_1}\\
\sum_{l=1}^L q\big(\textbf{z}_i(x)\big)^{(l)} - 1 &= 0 \quad \forall i, x, \label{eq:KKT_B_2}\\
\varepsilon^B_i - \sum_{x \in \Omega}\sum_{l=1}^L \textbf{z}_i(x)^{(l)}q\big(\textbf{z}_i(x)\big)^{(l)} &\leq 0 \quad \forall i, \label{eq:KKT_B_3}\\
\alpha_i &\geq 0 \quad \forall i, \label{eq:KKT_B_4}\\
\alpha_i \Big( \varepsilon^B_i - \sum_{x \in \Omega}\sum_{l=1}^L \textbf{z}_i(x)^{(l)}q\big(\textbf{z}_i(x)\big)^{(l)} \Big) &= 0 \quad \forall i. \label{eq:KKT_B_5}
\end{align}
From Eq.~\eqref{eq:KKT_B_1}, we obtain the expression of $q\big(\textbf{z}_i(x)\big)^{(l)}$ as
\begin{equation}
q\big(\textbf{z}_i(x)\big)^{(l)} = e^{\alpha_i \textbf{z}_i(x)^{(l)} - \beta_i(x)-1}.
\end{equation}
Hence, $q\big(\textbf{z}_i(x)\big)^{(l)} \geq 0$. Since $\sum_{l=1}^L q\big(\textbf{z}_i(x)\big)^{(l)} = 1$ (Eq.~\eqref{eq:KKT_B_2}) for all $i$ and $x$, it must have
\begin{equation}
\label{eq:ibts_sol}
q\big(\textbf{z}_i(x)\big)^{(l)} = \frac{e^{\alpha_i \textbf{z}_i(x)^{(l)}}}{\sum_{j=1}^L e^{\alpha_i \textbf{z}_i(x)^{(j)}}},
\end{equation}
From Eq.~\eqref{eq:KKT_B_3}, we have
\begin{equation}
\begin{split}
\sum_{x \in \Omega}\sum_{l=1}^L \textbf{z}_i(x)^{(l)}q\big(\textbf{z}_i(x)\big)^{(l)} &= \sum_{x \in \Omega}\sum_{l=1}^L \textbf{z}_i(x)^{(l)}\frac{e^{\alpha_i \textbf{z}_i(x)^{(l)}}}{\sum_{j=1}^L e^{\alpha_i \textbf{z}_i(x)^{(j)}}} \\
&\geq \varepsilon^B_i \\
&= \sum_{x \in \Omega} \textbf{z}_i(x)^{(S_i(x))}.    
\end{split}
\end{equation}
\textbf{Case 1:} If $\sum_{x \in \Omega} \textbf{z}_i(x)^{(S_i(x))} > \frac{1}{L}\sum_{x \in \Omega}\sum_{l=1}^L \textbf{z}_i(x)^{(l)}$, then we have
\begin{equation}
\begin{split}
\sum_{x \in \Omega}\sum_{l=1}^L \textbf{z}_i(x)^{(l)}q\big(\textbf{z}_i(x)\big)^{(l)} &= \sum_{x \in \Omega}\sum_{l=1}^L \textbf{z}_i(x)^{(l)}\frac{e^{\alpha_i \textbf{z}_i(x)^{(l)}}}{\sum_{j=1}^L e^{\alpha_i \textbf{z}_i(x)^{(j)}}} \\
&\geq \sum_{x \in \Omega} \textbf{z}_i(x)^{(S_i(x))} \\
&> \frac{1}{L}\sum_{x \in \Omega}\sum_{l=1}^L \textbf{z}_i(x)^{(l)}.    
\end{split}    
\end{equation}

\noindent
If $\alpha_i=0$, then $q\big(\textbf{z}_i(x)\big)^{(l)}=1/L$ for all $i$, $l$ and $x$. Thus, Eq.~\eqref{eq:KKT_B_3} becomes $\varepsilon^B_i - \sum_{x \in \Omega}\sum_{l=1}^L \textbf{z}_i(x)^{(l)}\frac{1}{L} \leq 0$, which violates the $\sum_{x \in \Omega} \textbf{z}_i(x)^{(S_i(x))} > \frac{1}{L}\sum_{x \in \Omega}\sum_{l=1}^L \textbf{z}_i(x)^{(l)}$ assumption. Hence, $\alpha_i \neq 0$.

\noindent
Furthermore, we have 
\begin{equation}
\frac{1}{L}\sum_{x \in \Omega}\sum_{l=1}^L \textbf{z}_i(x)^{(l)} < \sum_{x \in \Omega} \textbf{z}_i(x)^{(S_i(x))} \leq \sum_{x \in \Omega} \max_l \{ \textbf{z}_i(x)^{(l)} \},
\end{equation}
with Lemma~\ref{lamma_bound} and the intermediate value theorem, there must be a unique strictly positive solution $\alpha_i^*$ for $\alpha_i$ such that $\sum_{x \in \Omega}\sum_{l=1}^L \textbf{z}_i(x)^{(l)}q\big(\textbf{z}_i(x)\big)^{(l)} = \varepsilon^B_i = \sum_{x \in \Omega} \textbf{z}_i(x)^{(S_i(x))}$. Thus Eq.~\eqref{eq:KKT_B_4} and Eq.~\eqref{eq:KKT_B_5} both hold.\\

\noindent
\textbf{Case 2:} If $\sum_{x \in \Omega} \textbf{z}_i(x)^{(S_i(x))} < \frac{1}{L}\sum_{x \in \Omega}\sum_{l=1}^L \textbf{z}_i(x)^{(l)}$. \\
If $\alpha_i \neq 0$, Eq.~\eqref{eq:KKT_B_5} yields $\sum_{x \in \Omega}\sum_{l=1}^L \textbf{z}_i(x)^{(l)}q\big(\textbf{z}_i(x)\big)^{(l)} = \varepsilon^B_i = \sum_{x \in \Omega} \textbf{z}_i(x)^{(S_i(x))}$. With Lemma~\ref{lamma_bound} and the intermediate value theorem, there exists a unique non-positive $\alpha_i$. This violates Eq.~\eqref{eq:KKT_B_4} and the $\alpha_i \neq 0$ assumption. Thus, $\alpha_i=0$.\\
Furthermore, when $\alpha_i=0$, it yields $q\big(\textbf{z}_i(x)\big)^{(l)}=1/L$ for all $i$, $l$ and $x$. Take $q\big(\textbf{z}_i(x)\big)^{(l)}=1/L$ into Eq.~\eqref{eq:KKT_B_3}, the inequality holds. Eq.~\eqref{eq:KKT_B_4} and Eq.~\eqref{eq:KKT_B_5} also hold. From Lemma~\ref{lamma_unconstraint}, we know that $q\big(\textbf{z}_i(x)\big)^{(l)}=1/L$ is the solution for entropy maximization of Eq.~\eqref{lemma2}. Since Eq.~\eqref{thm:1} is the subproblem of Eq.~\eqref{lemma2}, $q\big(\textbf{z}_i(x)\big)^{(l)}=1/L$ also reaches the entropy maximization of Eq.~\eqref{thm:1}. \\

\noindent
Overall, the optimal solution is 
\begin{equation}
q\big(\textbf{z}_i(x)\big)^{(l)} = \frac{e^{\alpha_i^* \textbf{z}_i(x)^{(l)}}}{\sum_{j=1}^L e^{\alpha_i^* \textbf{z}_i(x)^{(j)}}},
\end{equation}
with
\begin{equation}
\begin{cases}
\alpha_i^* = 0, \quad\quad\quad\quad \text{if}\quad \sum_{x \in \Omega} \textbf{z}_i(x)^{(S_i(x))} \leq \frac{1}{L}\sum_{x \in \Omega}\sum_{l=1}^L \textbf{z}_i(x)^{(l)}\\
\{ \alpha_i^* > 0 \mid \sum_{x \in \Omega}\sum_{l=1}^L \textbf{z}_i(x)^{(l)}\frac{e^{\alpha_i^* \textbf{z}_i(x)^{(l)}}}{\sum_{j=1}^L e^{\alpha_i^* \textbf{z}_i(x)^{(j)}}} = \sum_{x \in \Omega} \textbf{z}_i(x)^{(S_i(x))} \}, \quad \text{otherwise}
\end{cases}    
\end{equation}
Let $T_i = \frac{1}{\alpha_i^*}$ ($\alpha_i^* \rightarrow 0$ as $T_i \rightarrow +\infty$), then this is the IBTS solution. Note that $T_i$ does not depend on $x$, which is the same as the temperature value in Eq.~\eqref{eq:image_temperature_scaling}.\\

\noindent
For constraint C, let $\alpha_i(x)$, $\beta_i(x)$ be the multipliers. Then the Lagrangian is
\begin{equation}
\begin{split}
\mathcal{L} = & -\sum_{i=1}^n \sum_{x \in \Omega} \sum_{l=1}^L q\big(\textbf{z}_i(x)\big)^{(l)} \log \Big(q\big(\textbf{z}_i(x)\big)^{(l)}\Big) 
- \sum_{i=1}^n \sum_{x \in \Omega} \beta_i (x) \bigg(\sum_{l=1}^L q\big(\textbf{z}_i(x)\big)^{(l)} - 1 \bigg)\\
& - \sum_{i=1}^{n} \sum_{x \in \Omega} \alpha_i(x) \Big( \varepsilon^C_i (x) - \sum_{l=1}^L \textbf{z}_i(x)^{(l)}q\big(\textbf{z}_i(x)\big)^{(l)} \Big).
\end{split}
\end{equation} 
Thus, the KKT conditions are
\begin{align}
\frac{\partial \mathcal{L}}{\partial q\big(\textbf{z}_i(x)\big)^{(l)}} = -1 -\log \Big(q\big(\textbf{z}_i(x)\big)^{(l)}\Big) + \alpha_i(x) \textbf{z}_i(x)^{(l)} - \beta_i(x) &= 0 \quad \forall i, x, l, \label{eq:KKT_C_1}\\
\sum_{l=1}^L q\big(\textbf{z}_i(x)\big)^{(l)} - 1 &= 0 \quad \forall i, x, \label{eq:KKT_C_2}\\
\varepsilon^C_i(x) - \sum_{l=1}^L \textbf{z}_i(x)^{(l)}q\big(\textbf{z}_i(x)\big)^{(l)} &\leq 0 \quad \forall i,x, \label{eq:KKT_C_3}\\
\alpha_i(x) &\geq 0 \quad \forall i, x, \label{eq:KKT_C_4}\\
\alpha_i(x) \Big( \varepsilon^C_i(x) - \sum_{l=1}^L \textbf{z}_i(x)^{(l)}q\big(\textbf{z}_i(x)\big)^{(l)} \Big) &= 0 \quad \forall i, x. \label{eq:KKT_C_5}
\end{align}
From Eq.~\eqref{eq:KKT_C_1}, we obtain the expression of $q\big(\textbf{z}_i(x)\big)^{(l)}$ as
\begin{equation}
q\big(\textbf{z}_i(x)\big)^{(l)} = e^{\alpha_i(x) \textbf{z}_i(x)^{(l)} - \beta_i(x)-1}.
\end{equation}
Hence, $q\big(\textbf{z}_i(x)\big)^{(l)} \geq 0$. Since $\sum_{l=1}^L q(\textbf{z}_i(x))^{(l)} = 1$ (Eq.~\eqref{eq:KKT_C_2}) for all $i$ and $x$, it must have
\begin{equation}
q\big(\textbf{z}_i(x)\big)^{(l)} = \frac{e^{\alpha_i(x) \textbf{z}_i(x)^{(l)}}}{\sum_{j=1}^L e^{\alpha_i(x) \textbf{z}_i(x)^{(j)}}},
\end{equation}
From Eq.~\eqref{eq:KKT_C_3}, we have
\begin{equation}
\begin{split}
\sum_{l=1}^L \textbf{z}_i(x)^{(l)}q\big(\textbf{z}_i(x)\big)^{(l)} &= \sum_{l=1}^L \textbf{z}_i(x)^{(l)}\frac{e^{\alpha_i(x) \textbf{z}_i(x)^{(l)}}}{\sum_{j=1}^L e^{\alpha_i(x) \textbf{z}_i(x)^{(j)}}} \\
&\geq \varepsilon^C_i(x) \\
&= \textbf{z}_i(x)^{(S_i(x))}.    
\end{split}
\end{equation}
\textbf{Case 1:} If $\textbf{z}_i(x)^{(S_i(x))} > \frac{1}{L}\sum_{l=1}^L \textbf{z}_i(x)^{(l)}$, then we have
\begin{equation}
\begin{split}
\sum_{l=1}^L \textbf{z}_i(x)^{(l)}q\big(\textbf{z}_i(x)\big)^{(l)} &= \sum_{l=1}^L \textbf{z}_i(x)^{(l)}\frac{e^{\alpha_i(x) \textbf{z}_i(x)^{(l)}}}{\sum_{j=1}^L e^{\alpha_i(x) \textbf{z}_i(x)^{(j)}}} \\
&\geq \textbf{z}_i(x)^{(S_i(x))} \\
&> \frac{1}{L}\sum_{l=1}^L \textbf{z}_i(x)^{(l)}.    
\end{split}    
\end{equation}

\noindent
If $\alpha_i(x)=0$, then $q\big(\textbf{z}_i(x)\big)^{(l)}=1/L$ for all $i$, $l$ and $x$. Thus, Eq.~\eqref{eq:KKT_C_3} becomes $\varepsilon^C_i(x) - \sum_{l=1}^L \textbf{z}_i(x)^{(l)}\frac{1}{L} \leq 0$, which violates the $\textbf{z}_i(x)^{(S_i(x))} > \frac{1}{L}\sum_{l=1}^L \textbf{z}_i(x)^{(l)}$ assumption. Hence, $\alpha_i(x) \neq 0$.

\noindent
Furthermore, we have 
\begin{equation}
\frac{1}{L}\sum_{l=1}^L \textbf{z}_i(x)^{(l)} < \textbf{z}_i(x)^{(S_i(x))} \leq \max_l \{ \textbf{z}_i(x)^{(l)} \},
\end{equation}
with Lemma~\ref{lamma_bound} and the intermediate value theorem, there must be a unique strictly positive solution $\alpha_i(x)^*$ for $\alpha_i(x)$ such that $\sum_{l=1}^L \textbf{z}_i(x)^{(l)}q\big(\textbf{z}_i(x)\big)^{(l)} = \varepsilon^C_i(x) = \textbf{z}_i(x)^{(S_i(x))}$. Thus Eq.~\eqref{eq:KKT_C_4} and Eq.~\eqref{eq:KKT_C_5} both hold.\\

\noindent
\textbf{Case 2:} If $\textbf{z}_i(x)^{(S_i(x))} < \frac{1}{L}\sum_{l=1}^L \textbf{z}_i(x)^{(l)}$. \\
If $\alpha_i(x) \neq 0$, Eq.~\eqref{eq:KKT_C_5} yields $\sum_{l=1}^L \textbf{z}_i(x)^{(l)}q\big(\textbf{z}_i(x)\big)^{(l)} = \varepsilon^C_i(x) = \textbf{z}_i(x)^{(S_i(x))}$. With Lemma~\ref{lamma_bound} and the intermediate value theorem, there exists a unique non-positive $\alpha_i$. This violates Eq.~\eqref{eq:KKT_C_4} and $\alpha_i(x) \neq 0$ assumption. Thus, $\alpha_i(x)=0$.\\
Furthermore, when $\alpha_i(x)=0$, it yields $q\big(\textbf{z}_i(x)\big)^{(l)}=1/L$ for all $i$, $l$ and $x$. Take $q\big(\textbf{z}_i(x)\big)^{(l)}=1/L$ into Eq.~\eqref{eq:KKT_C_3}, the inequality holds. Eq.~\eqref{eq:KKT_C_4} and Eq.~\eqref{eq:KKT_C_5} also hold. From Lemma~\ref{lamma_unconstraint}, we know that $q\big(\textbf{z}_i(x)\big)^{(l)}=1/L$ is the solution for entropy maximization of Eq.~\eqref{lemma2}. Since Eq.~\eqref{thm:1} is the subproblem of Eq.~\eqref{lemma2}, $q\big(\textbf{z}_i(x)\big)^{(l)}=1/L$ also reaches the entropy maximization of Eq.~\eqref{thm:1}. \\

\noindent
Overall, the optimal solution is 
\begin{equation}
q\big(\textbf{z}_i(x)\big)^{(l)} = \frac{e^{\alpha_i(x)^* \textbf{z}_i(x)^{(l)}}}{\sum_{j=1}^L e^{\alpha_i(x)^* \textbf{z}_i(x)^{(j)}}},
\end{equation}
with
\begin{equation}
\begin{cases}
\alpha_i(x)^* = 0, \quad\quad\quad\quad \text{if}\quad \textbf{z}_i(x)^{(S_i(x))} \leq \frac{1}{L}\sum_{l=1}^L \textbf{z}_i(x)^{(l)}\\
\{ \alpha_i(x)^* > 0 \mid \sum_{l=1}^L \textbf{z}_i(x)^{(l)}\frac{e^{\alpha_i(x)^* \textbf{z}_i(x)^{(l)}}}{\sum_{j=1}^L e^{\alpha_i(x)^* \textbf{z}_i(x)^{(j)}}} = \textbf{z}_i(x)^{(S_i(x))} \}, \quad \text{otherwise}
\end{cases}    
\end{equation}
Let $T_i(x) = \frac{1}{\alpha_i(x)^*}$ ($\alpha_i(x)^* \rightarrow 0$ as $T_i(x) \rightarrow +\infty$), then this is the LTS solution. Note that this $T_i(x)$ depends on $i$ and $x$, which is the same as the temperature value in Eq.~\eqref{eq:local_temperature_scaling}.\\
\end{proof}

\noindent
\textbf{Remark.} Note that the first two constraints on $q(\text{z}_i(x))$ are shared by all three models, while the last constraint varies across the three models, i.e. A for TS, B for IBTS, and C for LTS. The first two constraints guarantee that $q$ is a probability distribution while the last constraint makes assumptions on the distributions of the corresponding models. Constraint A assumes that the average true class logit is less than or equal to the weighted average logit over the entire image space and all samples. Constraint B requires that the avearge true class logit is less than or equal to the weighted average logit over the image space. Constraint C specifies that the true class logit is less than or equal to the weighted average logit at each location of each image. Note that the three constrains are designed under the overconfidence scenario. The order of the restrictiveness of the constraints is C $>$ B $>$ A, which indicates the model complexity order LTS $>$ IBTS $>$ TS.\\

\noindent
\textbf{Remark.} Theorem~\ref{thm:entropy_temperature} gives a more general proof. However, when it comes to TS, IBTS and LTS, we do not necessarily need such strong conditions. Instead we can use the following simplified theorem~\ref{thm:overderfitting_entropy_second_proof}. 

\begin{customthm}{2-b}
\label{thm:overderfitting_entropy_second_proof}
Given $n$ logit vector maps $\textbf{z}_1, ..., \textbf{z}_n$ and label maps $S_1, ..., S_n$, the optimal temperature values of temperature scaling (TS), image-based temperature scaling (IBTS) and local temperature scaling (LTS) to the following entropy maximization problem with different constraints (A, B or C)

\begin{equation}
\begin{split}
\label{eq:thm2b_extropy_maximum}
\max_{\alpha_i(x)} \quad& -\sum_{i=1}^n \sum_{x \in \Omega} \sum_{l=1}^L \sigma_{SM}\big(\alpha_i(x) \textbf{z}_i(x)\big)^{(l)} \log \Big(\sigma_{SM} \big(\alpha_i(x) \textbf{z}_i(x)\big)^{(l)}\Big) \\
subject \; to \quad& \alpha_i(x) \geq 0 \quad \forall i, x, l\\
& \begin{cases}
		\sum_{i=1}^n\sum_{x \in \Omega}\sum_{l=1}^L \textbf{z}_i(x)^{(l)}\sigma_{SM}\big(\alpha_i(x) \textbf{z}_i(x)\big)^{(l)} \geq \varepsilon^A & \text{(A: TS constraint)} \\
		\sum_{x \in \Omega} \sum_{l=1}^L \textbf{z}_i(x)^{(l)}\sigma_{SM}\big(\alpha_i(x) \textbf{z}_i(x)\big)^{(l)} \geq \varepsilon_i^B \quad \forall i & \text{(B: IBTS constraint)} \\
		\sum_{l=1}^L \textbf{z}_i(x)^{(l)}\sigma_{SM}\big(\alpha_i(x) \textbf{z}_i(x)\big)^{(l)} \geq \varepsilon_i^C (x) \quad \forall i, x & \text{(C: LTS constraint)}
\end{cases}\\
\end{split}
\end{equation}
where $\varepsilon^A$, $\varepsilon^B_i$ and $\varepsilon^C_i (x)$ are the following constants:
\begin{equation}
\begin{split}
 \varepsilon^A &= \sum_{i=1}^{n} \sum_{x \in \Omega} \textbf{z}_i(x)^{(S_i(x))} \,, \\
 \varepsilon_i^B &= \sum_{x \in \Omega} \textbf{z}_i(x)^{(S_i(x))} \,, \\
 \varepsilon_i^C (x) &= \textbf{z}_i(x)^{(S_i(x))} \,. 
\end{split}
\end{equation}
are
\begin{equation}
\begin{split}
&\begin{cases}
\;\; \alpha^*=0, \quad\quad\quad\quad \text{if}\quad \sum_{i=1}^{n} \sum_{x \in \Omega} \textbf{z}_i(x)^{(S_i(x))} \leq \frac{1}{L}\sum_{i=1}^n\sum_{x \in \Omega}\sum_{l=1}^L \textbf{z}_i(x)^{(l)}\\ 
\Big\{\alpha^*>0  \mid \sum_{i=1}^n\sum_{x \in \Omega}\sum_{l=1}^L \textbf{z}_i(x)^{(l)}\sigma_{SM}\big( \alpha^* \textbf{z}_i(x) \big)^{(j)} = \sum_{i=1}^{n} \sum_{x \in \Omega} \textbf{z}_i(x)^{(S_i(x))} \Big\}, \text{otherwise}
\end{cases}\\
&\begin{cases} 
\;\; \alpha_i^*=0, \quad\quad\quad\quad \text{if}\quad \sum_{x \in \Omega} \textbf{z}_i(x)^{(S_i(x))} \leq \frac{1}{L}\sum_{x \in \Omega}\sum_{l=1}^L \textbf{z}_i(x)^{(l)}\\ 
\Big\{\alpha_i^*>0  \mid \sum_{x \in \Omega}\sum_{l=1}^L \textbf{z}_i(x)^{(l)}\sigma_{SM}\big( \alpha_i^* \textbf{z}_i(x) \big)^{(j)} = \sum_{x \in \Omega} \textbf{z}_i(x)^{(S_i(x))} \Big\}, \text{otherwise}
\end{cases}\\
&\begin{cases}
\;\; \alpha_i(x)^*=0, \quad\quad\quad\quad \text{if}\quad \textbf{z}_i(x)^{(S_i(x))} \leq \frac{1}{L}\sum_{l=1}^L \textbf{z}_i(x)^{(l)}\\ 
\Big\{\alpha_i(x)^*>0  \mid \sum_{l=1}^L \textbf{z}_i(x)^{(l)}\sigma_{SM}\big( \alpha_i(x)^* \textbf{z}_i(x) \big)^{(j)} =  \textbf{z}_i(x)^{(S_i(x))} \Big\}, \text{otherwise}
\end{cases}.
\end{split} 
\end{equation}\\
where
\begin{equation}
\begin{split}
\text{(TS):}\quad \alpha_i(x) &\coloneqq \alpha, \forall i, x, \quad \text{and} \quad T \coloneqq \frac{1}{\alpha}, T \in \mathbb{R}^+\\
\text{(IBTS):}\quad \alpha_i(x) &\coloneqq \alpha_i, \forall x, \quad \text{and} \quad T_i \coloneqq \frac{1}{\alpha_i}, T_i \in \mathbb{R}^+\\
\text{(LTS):}\quad \alpha_i(x) &\coloneqq \alpha_i(x), \quad \text{and} \quad T_i(x) \coloneqq \frac{1}{\alpha_i(x)}, T_i(x) \in \mathbb{R}^+ .
\end{split}
\end{equation}
\end{customthm}

\begin{proof}
We use the Karush-Kuhn-Tucker (KKT) conditions to solve the optimization problems. $\alpha \geq 0$ is ignored in the Lagrangian and later be validated w.r.t. the deducted solution.
For TS, Let $\lambda$ be the multiplier, the Lagrangian is 
\begin{equation}
\mathcal{L} = -\sum_{i=1}^n \sum_{x \in \Omega} \sum_{l=1}^L \sigma_{SM}\big(\alpha \textbf{z}_i(x)\big)^{(l)} \log \Big(\sigma_{SM} \big(\alpha \textbf{z}_i(x)\big)^{(l)}\Big) - \lambda \Big( \sum_{i=1}^{n} \sum_{x \in \Omega} \textbf{z}_i(x)^{(S_i(x))} - \sum_{i=1}^n\sum_{x \in \Omega}\sum_{l=1}^L \textbf{z}_i(x)^{(l)}\sigma_{SM}\big(\alpha \textbf{z}_i(x)\big)^{(l)} \Big).
\end{equation}

\noindent
Taking the derivative w.r.t. $\alpha$, we have
\begin{align}
    \frac{\partial \mathcal{L}}{\partial \alpha} &= -\sum_{i=1}^n \sum_{x \in \Omega} \sum_{l=1}^L \sigma_{SM}\big(\alpha \textbf{z}_i(x)\big)^{(l)}\Big( \textbf{z}_i(x)^{(l)} - \sum_{j=1}^L \textbf{z}_i(x)^{(j)}\sigma_{SM}\big( \alpha \textbf{z}_i(x) \big)^{(j)} \Big) \log \Big(\sigma_{SM} \big(\alpha \textbf{z}_i(x)\big)^{(l)}\Big) \notag\\
    &\quad -\sum_{i=1}^n \sum_{x \in \Omega} \underbrace{\Big[\sum_{l=1}^L \sigma_{SM}\big(\alpha \textbf{z}_i(x)\big)^{(l)} \Big( \textbf{z}_i(x)^{(l)} - \sum_{j=1}^L \textbf{z}_i(x)^{(j)}\sigma_{SM}\big( \alpha \textbf{z}_i(x) \big)^{(j)} \Big) \Big]}_{=\sum_{l=1}^L \sigma_{SM}\big(\alpha \textbf{z}_i(x)\big)^{(l)} \textbf{z}_i(x)^{(l)} - \underbrace{\sum_{l=1}^L \sigma_{SM}\big(\alpha \textbf{z}_i(x)\big)^{(l)}}_{\textbf{= 1}} \sum_{j=1}^L \textbf{z}_i(x)^{(j)}\sigma_{SM}\big( \alpha \textbf{z}_i(x) \big)^{(j)} \textbf{= 0} } \notag\\
    &\quad + \lambda \sum_{i=1}^n \sum_{x \in \Omega} \sum_{l=1}^L \sigma_{SM}\big(\alpha \textbf{z}_i(x)\big)^{(l)} \textbf{z}_i(x)^{(l)} \Big( \textbf{z}_i(x)^{(l)} - \sum_{j=1}^L \textbf{z}_i(x)^{(j)}\sigma_{SM}\big( \alpha \textbf{z}_i(x) \big)^{(j)} \Big)\\
    &= -\sum_{i=1}^n \sum_{x \in \Omega} \sum_{l=1}^L \sigma_{SM}\big(\alpha \textbf{z}_i(x)\big)^{(l)}\Big( \textbf{z}_i(x)^{(l)} - \sum_{j=1}^L \textbf{z}_i(x)^{(j)}\sigma_{SM}\big( \alpha \textbf{z}_i(x) \big)^{(j)} \Big) \Big(\alpha \textbf{z}_i(x)^{(l)} - \log \big(\sum_{j=1}^L \exp (\alpha \textbf{z}_i(x)^{(j)} ) \big)\Big) \notag\\
    &\quad + \lambda \sum_{i=1}^n \sum_{x \in \Omega} \sum_{l=1}^L \sigma_{SM}\big(\alpha \textbf{z}_i(x)\big)^{(l)} \textbf{z}_i(x)^{(l)} \Big( \textbf{z}_i(x)^{(l)} - \sum_{j=1}^L \textbf{z}_i(x)^{(j)}\sigma_{SM}\big( \alpha \textbf{z}_i(x) \big)^{(j)} \Big)\\
    &= -\alpha \sum_{i=1}^n \sum_{x \in \Omega} \sum_{l=1}^L \textbf{z}_i(x)^{(l)} \sigma_{SM}\big(\alpha \textbf{z}_i(x)\big)^{(l)}\Big( \textbf{z}_i(x)^{(l)} - \sum_{j=1}^L \textbf{z}_i(x)^{(j)}\sigma_{SM}\big( \alpha \textbf{z}_i(x) \big)^{(j)} \Big) \notag\\
    &\quad + \sum_{i=1}^n \sum_{x \in \Omega} \underbrace{\Big[ \sum_{l=1}^L \sigma_{SM}\big(\alpha \textbf{z}_i(x)\big)^{(l)}\Big( \textbf{z}_i(x)^{(l)} - \sum_{j=1}^L \textbf{z}_i(x)^{(j)}\sigma_{SM}\big( \alpha \textbf{z}_i(x) \big)^{(j)} \Big) \Big]}_{\textbf{= 0}} \log \big(\sum_{j=1}^L \exp (\alpha \textbf{z}_i(x)^{(j)} ) \big) \notag\\
    &\quad + \lambda \sum_{i=1}^n \sum_{x \in \Omega} \sum_{l=1}^L \sigma_{SM}\big(\alpha \textbf{z}_i(x)\big)^{(l)} \textbf{z}_i(x)^{(l)} \Big( \textbf{z}_i(x)^{(l)} - \sum_{j=1}^L \textbf{z}_i(x)^{(j)}\sigma_{SM}\big( \alpha \textbf{z}_i(x) \big)^{(j)} \Big)\\
    &= (\lambda -\alpha) \sum_{i=1}^n \sum_{x \in \Omega} \Big( \sum_{l=1}^L \big( \textbf{z}_i(x)^{(l)} \big)^2 \sigma_{SM}\big(\alpha \textbf{z}_i(x)\big)^{(l)} - \big (\sum_{l=1}^L \textbf{z}_i(x)^{(l)} \sigma_{SM}\big(\alpha \textbf{z}_i(x)\big)^{(l)} \big)^2 \Big).
\end{align}

\noindent
Thus, the KKT conditions are
\begin{align}
\frac{\partial \mathcal{L}}{\partial \alpha} = (\lambda -\alpha) \sum_{i=1}^n \sum_{x \in \Omega} \Big( \sum_{l=1}^L \big( \textbf{z}_i(x)^{(l)} \big)^2 \sigma_{SM}\big(\alpha \textbf{z}_i(x)\big)^{(l)} - \big (\sum_{l=1}^L \textbf{z}_i(x)^{(l)} \sigma_{SM}\big(\alpha \textbf{z}_i(x)\big)^{(l)} \big)^2 \Big) &= 0 \quad \forall i, x, \label{eq:KKT_second_A_2}\\
\sum_{i=1}^{n} \sum_{x \in \Omega} \textbf{z}_i(x)^{(S_i(x))} - \sum_{i=1}^n\sum_{x \in \Omega}\sum_{l=1}^L \textbf{z}_i(x)^{(l)}\sigma_{SM}\big(\alpha \textbf{z}_i(x)\big)^{(l)} &\leq 0, \label{eq:KKT_second_A_3}\\
\lambda &\geq 0, \label{eq:KKT_second_A_4}\\
\lambda \Big(\sum_{i=1}^{n} \sum_{x \in \Omega} \textbf{z}_i(x)^{(S_i(x))} - \sum_{i=1}^n\sum_{x \in \Omega}\sum_{l=1}^L \textbf{z}_i(x)^{(l)}\sigma_{SM}\big(\alpha \textbf{z}_i(x)\big)^{(l)} \Big) &= 0. \label{eq:KKT_second_A_5}
\end{align}

\noindent
By the Cauchy-Schwarz inequality, we have
\begin{align}
    &\quad \sum_{l=1}^L \big( \textbf{z}_i(x)^{(l)} \big)^2 \sigma_{SM}\big(\alpha \textbf{z}_i(x)\big)^{(l)} - \big (\sum_{l=1}^L \textbf{z}_i(x)^{(l)} \sigma_{SM}\big(\alpha \textbf{z}_i(x)\big)^{(l)} \big)^2 \notag \\
    &= \Big( \sum_{l=1}^L \big( \textbf{z}_i(x)^{(l)} \big)^2 \sigma_{SM}\big(\alpha \textbf{z}_i(x)\big)^{(l)} \Big) \underbrace{\Big( \sum_{l=1}^L \sigma_{SM}\big(\alpha \textbf{z}_i(x)\big)^{(l)} \Big)}_{\textbf{= 1}} - \Big (\sum_{l=1}^L \textbf{z}_i(x)^{(l)} \sigma_{SM}\big(\alpha \textbf{z}_i(x)\big)^{(l)} \Big)^2\\
    &\geq  \Big (\sum_{l=1}^L |\textbf{z}_i(x)^{(l)}| \sigma_{SM}\big(\alpha \textbf{z}_i(x)\big)^{(l)} \Big)^2 - \Big (\sum_{l=1}^L \textbf{z}_i(x)^{(l)} \sigma_{SM}\big(\alpha \textbf{z}_i(x)\big)^{(l)} \Big)^2\\
    &\geq 0
\end{align}

\noindent
Hence, we have $\lambda=\alpha$ in Eq.~\eqref{eq:KKT_second_A_2}.

\noindent
\textbf{Case 1:} If $\sum_{i=1}^{n} \sum_{x \in \Omega} \textbf{z}_i(x)^{(S_i(x))} > \frac{1}{L}\sum_{i=1}^n\sum_{x \in \Omega}\sum_{l=1}^L \textbf{z}_i(x)^{(l)}$, then we have
\begin{equation}
\begin{split}
\sum_{i=1}^n\sum_{x \in \Omega}\sum_{l=1}^L \textbf{z}_i(x)^{(l)}\sigma_{SM}\big(\alpha \textbf{z}_i(x)\big)^{(l)} &= \sum_{i=1}^n\sum_{x \in \Omega}\sum_{l=1}^L \textbf{z}_i(x)^{(l)}\frac{e^{\alpha \textbf{z}_i(x)^{(l)}}}{\sum_{j=1}^L e^{\alpha \textbf{z}_i(x)^{(j)}}} \\
&\geq \sum_{i=1}^{n} \sum_{x \in \Omega} \textbf{z}_i(x)^{(S_i(x))} \\
&> \frac{1}{L}\sum_{i=1}^n\sum_{x \in \Omega}\sum_{l=1}^L \textbf{z}_i(x)^{(l)}.    
\end{split}    
\end{equation}

\noindent
If $\alpha=0$, then $\sigma_{SM}\big(\alpha \textbf{z}_i(x)\big)^{(l)}=1/L$ for all $i$, $l$ and $x$. Thus, Eq.~\eqref{eq:KKT_second_A_3} becomes $\sum_{i=1}^{n} \sum_{x \in \Omega} \textbf{z}_i(x)^{(S_i(x))} - \sum_{i=1}^n\sum_{x \in \Omega}\sum_{l=1}^L \textbf{z}_i(x)^{(l)}\frac{1}{L} \leq 0$, which violates the $\sum_{i=1}^{n} \sum_{x \in \Omega} \textbf{z}_i(x)^{(S_i(x))} > \frac{1}{L}\sum_{i=1}^n\sum_{x \in \Omega}\sum_{l=1}^L \textbf{z}_i(x)^{(l)}$ assumption. Hence, $\alpha \neq 0$.

\noindent
Furthermore, we have 
\begin{equation}
\frac{1}{L}\sum_{i=1}^n\sum_{x \in \Omega}\sum_{l=1}^L \textbf{z}_i(x)^{(l)} < \sum_{i=1}^{n} \sum_{x \in \Omega} \textbf{z}_i(x)^{(S_i(x))} \leq \sum_{i=1}^n\sum_{x \in \Omega} \max_l \{ \textbf{z}_i(x)^{(l)} \},
\end{equation}
with Lemma~\ref{lamma_bound} and the intermediate value theorem, there must be a unique strictly positive solution $\alpha^*$ for $\alpha$ such that $\sum_{i=1}^n\sum_{x \in \Omega}\sum_{l=1}^L \textbf{z}_i(x)^{(l)}\sigma_{SM}\big(\alpha \textbf{z}_i(x)\big)^{(l)} = \sum_{i=1}^{n} \sum_{x \in \Omega} \textbf{z}_i(x)^{(S_i(x))}$. Thus Eq.~\eqref{eq:KKT_second_A_4} and Eq.~\eqref{eq:KKT_second_A_5} both hold.\\

\noindent
\textbf{Case 2:} If $\sum_{i=1}^{n} \sum_{x \in \Omega} \textbf{z}_i(x)^{(S_i(x))} \leq \frac{1}{L}\sum_{i=1}^n\sum_{x \in \Omega}\sum_{l=1}^L \textbf{z}_i(x)^{(l)}$. \\
If $\alpha \neq 0$, Eq.~\eqref{eq:KKT_second_A_5} and $\lambda=\alpha$ yields $\sum_{i=1}^n\sum_{x \in \Omega}\sum_{l=1}^L \textbf{z}_i(x)^{(l)}q\big(\textbf{z}_i(x)\big)^{(l)} = \sum_{i=1}^{n} \sum_{x \in \Omega} \textbf{z}_i(x)^{(S_i(x))}$. With Lemma~\ref{lamma_bound} and the intermediate value theorem, there exists a unique non-positive $\alpha$. This violates Eq.~\eqref{eq:KKT_second_A_4} and the $\alpha \neq 0$ assumption. Thus, $\alpha=0$.\\
Furthermore, when $\alpha=0$, it yields $\sigma_{SM}\big(\alpha \textbf{z}_i(x)\big)^{(l)}=1/L$ for all $i$, $l$ and $x$. Take $\sigma_{SM}\big(\alpha \textbf{z}_i(x)\big)^{(l)}=1/L$ into Eq.~\eqref{eq:KKT_second_A_3}, the inequality holds. Eq.~\eqref{eq:KKT_second_A_4} and Eq.~\eqref{eq:KKT_second_A_5} also hold. From Lemma~\ref{lamma_unconstraint}, we know that $\sigma_{SM}\big(\alpha \textbf{z}_i(x)\big)^{(l)}=1/L$ is the solution for entropy maximization of Eq.~\eqref{lemma2}. Since Eq.~\eqref{eq:thm2b_extropy_maximum} is the subproblem of Eq.~\eqref{lemma2}, $\sigma_{SM}\big(\alpha \textbf{z}_i(x)\big)^{(l)}=1/L$ also reaches the entropy maximization of Eq.~\eqref{eq:thm2b_extropy_maximum}. \\

\noindent
Overall, the optimal solution is 
\begin{equation}
\begin{cases}
\alpha^* = 0, \quad\quad\quad\quad \text{if}\quad \sum_{i=1}^{n} \sum_{x \in \Omega} \textbf{z}_i(x)^{(S_i(x))} \leq \frac{1}{L}\sum_{i=1}^n\sum_{x \in \Omega}\sum_{l=1}^L \textbf{z}_i(x)^{(l)}\\
\{\alpha^*>0  \mid \sum_{i=1}^n\sum_{x \in \Omega}\sum_{l=1}^L \textbf{z}_i(x)^{(l)}\frac{e^{\alpha^* \textbf{z}_i(x)^{(l)}}}{\sum_{j=1}^L e^{\alpha^* \textbf{z}_i(x)^{(j)}}} = \sum_{i=1}^{n} \sum_{x \in \Omega} \textbf{z}_i(x)^{(S_i(x))} \}, \quad \text{otherwise}
\end{cases}    
\end{equation}
Let $T = \frac{1}{\alpha^*}$ ($\alpha^* \rightarrow 0$ as $T \rightarrow +\infty$), then this is the TS solution. Note that $T$ does not depend on $i$ and $x$, which is the same as the temperature value in Eq.~\eqref{eq:TS_opt}.\\

\noindent
Similarly, for IBTS and LTS, we can get
\begin{align}
    &\argmax_{\alpha_i} -\sum_{i=1}^n \sum_{x \in \Omega} \sum_{l=1}^L \sigma_{SM}\big(\alpha_i \textbf{z}_i(x)\big)^{(l)} \log \Big(\sigma_{SM} \big(\alpha_i \textbf{z}_i(x)\big)^{(l)}\Big) \notag\\
    = &\begin{cases} 
        \;\; \alpha_i^*=0, \quad\quad\quad\quad \text{if}\quad \sum_{x \in \Omega} \textbf{z}_i(x)^{(S_i(x))} \leq \frac{1}{L}\sum_{x \in \Omega}\sum_{l=1}^L \textbf{z}_i(x)^{(l)}\\ 
        \Big\{\alpha_i^*>0  \mid \sum_{x \in \Omega}\sum_{l=1}^L \textbf{z}_i(x)^{(l)}\sigma_{SM}\big( \alpha_i^* \textbf{z}_i(x) \big)^{(j)} = \sum_{x \in \Omega} \textbf{z}_i(x)^{(S_i(x))} \Big\}, \text{otherwise}
        \end{cases}\\
    &\argmax_{\alpha_i(x)} -\sum_{i=1}^n \sum_{x \in \Omega} \sum_{l=1}^L \sigma_{SM}\big(\alpha_i(x) \textbf{z}_i(x)\big)^{(l)} \log \Big(\sigma_{SM} \big(\alpha_i(x) \textbf{z}_i(x)\big)^{(l)}\Big) \notag\\
    = &\begin{cases}
        \;\; \alpha_i(x)^*=0, \quad\quad\quad\quad \text{if}\quad \textbf{z}_i(x)^{(S_i(x))} \leq \frac{1}{L}\sum_{l=1}^L \textbf{z}_i(x)^{(l)}\\ 
        \Big\{\alpha_i(x)^*>0  \mid \sum_{l=1}^L \textbf{z}_i(x)^{(l)}\sigma_{SM}\big( \alpha_i(x)^* \textbf{z}_i(x) \big)^{(j)} =  \textbf{z}_i(x)^{(S_i(x))} \Big\}, \text{otherwise}
        \end{cases}
\end{align}
\end{proof}

\begin{theorem}
\label{thm:underfitting_entropy}
Given $n$ logit vector maps $\textbf{z}_1, ..., \textbf{z}_n$ and label maps $S_1, ..., S_n$, the optimal temperature values of temperature scaling (TS), image-based temperature scaling (IBTS) and local temperature scaling (LTS) to the following entropy minimization problem with different constraints (A, B or C)

\begin{equation}
\begin{split}
\min_{\alpha_i(x)} \quad& -\sum_{i=1}^n \sum_{x \in \Omega} \sum_{l=1}^L \sigma_{SM}\big(\alpha_i(x) \textbf{z}_i(x)\big)^{(l)} \log \Big(\sigma_{SM} \big(\alpha_i(x) \textbf{z}_i(x)\big)^{(l)}\Big) \\
subject \; to \quad& \alpha_i(x) \geq 0 \quad \forall i, x, l\\
& \begin{cases}
		\sum_{i=1}^n\sum_{x \in \Omega}\sum_{l=1}^L \textbf{z}_i(x)^{(l)}\sigma_{SM}\big(\alpha_i(x) \textbf{z}_i(x)\big)^{(l)} \leq \varepsilon^A & \text{(A: TS constraint)} \\
		\sum_{x \in \Omega} \sum_{l=1}^L \textbf{z}_i(x)^{(l)}\sigma_{SM}\big(\alpha_i(x) \textbf{z}_i(x)\big)^{(l)} \leq \varepsilon_i^B \quad \forall i & \text{(B: IBTS constraint)} \\
		\sum_{l=1}^L \textbf{z}_i(x)^{(l)}\sigma_{SM}\big(\alpha_i(x) \textbf{z}_i(x)\big)^{(l)} \leq \varepsilon_i^C (x) \quad \forall i, x & \text{(C: LTS constraint)}
\end{cases}\\
\end{split}
\end{equation}
where $\varepsilon^A$, $\varepsilon^B_i$ and $\varepsilon^C_i (x)$ are the following constants:
\begin{equation}
\begin{split}
 \varepsilon^A &= \sum_{i=1}^{n} \sum_{x \in \Omega} \textbf{z}_i(x)^{(S_i(x))} \geq \frac{1}{L}\sum_{i=1}^{n} \sum_{x \in \Omega} \sum_{l=1}^L \textbf{z}_i(x)^{(l)} \,, \\
 \varepsilon_i^B &= \sum_{x \in \Omega} \textbf{z}_i(x)^{(S_i(x))} \geq \frac{1}{L} \sum_{x \in \Omega} \sum_{l=1}^L \textbf{z}_i(x)^{(l)} \,, \\
 \varepsilon_i^C (x) &= \textbf{z}_i(x)^{(S_i(x))} \geq \frac{1}{L} \sum_{l=1}^L \textbf{z}_i(x)^{(l)} \,. 
\end{split}
\end{equation}
are
\begin{equation}
\begin{split}
&\Big\{\alpha^*\geq0  \mid \sum_{i=1}^n\sum_{x \in \Omega}\sum_{l=1}^L \textbf{z}_i(x)^{(l)}\sigma_{SM}\big( \alpha^* \textbf{z}_i(x) \big)^{(j)} = \sum_{i=1}^{n} \sum_{x \in \Omega} \textbf{z}_i(x)^{(S_i(x))} \Big\},\\
&\Big\{\alpha_i^*\geq0  \mid \sum_{x \in \Omega}\sum_{l=1}^L \textbf{z}_i(x)^{(l)}\sigma_{SM}\big( \alpha_i^* \textbf{z}_i(x) \big)^{(j)} = \sum_{x \in \Omega} \textbf{z}_i(x)^{(S_i(x))} \Big\},\\
&\Big\{\alpha_i(x)^*\geq0  \mid \sum_{l=1}^L \textbf{z}_i(x)^{(l)}\sigma_{SM}\big( \alpha_i(x)^* \textbf{z}_i(x) \big)^{(j)} =  \textbf{z}_i(x)^{(S_i(x))} \Big\}.
\end{split} 
\end{equation}
where
\begin{equation}
\begin{split}
\text{(TS):}\quad \alpha_i(x) &\coloneqq \alpha, \forall i, x, \quad \text{and} \quad T \coloneqq \frac{1}{\alpha}, T \in \mathbb{R}^+\\
\text{(IBTS):}\quad \alpha_i(x) &\coloneqq \alpha_i, \forall x, \quad \text{and} \quad T_i \coloneqq \frac{1}{\alpha_i}, T_i \in \mathbb{R}^+\\
\text{(LTS):}\quad \alpha_i(x) &\coloneqq \alpha_i(x), \quad \text{and} \quad T_i(x) \coloneqq \frac{1}{\alpha_i(x)}, T_i(x) \in \mathbb{R}^+ .
\end{split}
\end{equation}
\end{theorem}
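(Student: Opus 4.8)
The plan is to treat this statement as the exact dual of Theorem~\ref{thm:overderfitting_entropy_second_proof}: we now \emph{minimize} entropy subject to constraints with the inequality reversed ($\leq$ rather than $\geq$), under the underconfidence hypothesis $\textbf{z}_i(x)^{(S_i(x))} \geq \frac{1}{L}\sum_{l=1}^L \textbf{z}_i(x)^{(l)}$. As in the earlier theorems, I would first reduce everything to the TS case and then recover IBTS and LTS verbatim by replacing the single global scalar $\alpha$ with the per-image scalars $\alpha_i$ (constraint B) or the per-location values $\alpha_i(x)$ (constraint C); since those constraints decouple across images and across locations respectively, the one-variable analysis applies componentwise.

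The two analytic ingredients are already available. First, by Lemma~\ref{lamma_bound} the weighted-average logit $g(\alpha) := \sum_{l=1}^L \textbf{z}(x)^{(l)}\sigma_{SM}(\alpha\textbf{z}(x))^{(l)}$ is monotonically increasing on $[0,\infty)$ with $g(0) = \frac{1}{L}\sum_l \textbf{z}(x)^{(l)}$ and $\sup_\alpha g(\alpha) = \max_l \textbf{z}(x)^{(l)}$. Second, the entropy $H(\alpha) = -\sum_l \sigma_{SM}(\alpha\textbf{z}(x))^{(l)}\log\sigma_{SM}(\alpha\textbf{z}(x))^{(l)}$ is monotonically nonincreasing on $[0,\infty)$: differentiating and using $\sum_l \tfrac{d}{d\alpha}\sigma_{SM}(\alpha\textbf{z})^{(l)} = 0$ collapses $\tfrac{dH}{d\alpha}$ to $-\alpha\big(\sum_l (\textbf{z}^{(l)})^2 \sigma_{SM} - (\sum_l \textbf{z}^{(l)}\sigma_{SM})^2\big)$, whose bracket is precisely the Cauchy--Schwarz variance term already shown to be nonnegative in the proof of Theorem~\ref{thm:overderfitting_entropy_second_proof}. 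Hence $\tfrac{dH}{d\alpha}\le 0$ for $\alpha\ge 0$, so it suffices to cite that computation rather than redo it.

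Given these facts the argument is short. The underconfidence hypothesis gives $g(0) \le \varepsilon = \textbf{z}(x)^{(S(x))}$, while trivially $\varepsilon \le \max_l \textbf{z}(x)^{(l)} = \sup g$; so by the intermediate value theorem and strict monotonicity there is a unique $\alpha^*\ge 0$ with $g(\alpha^*) = \varepsilon$, and the feasible set $\{\alpha\ge 0 : g(\alpha)\le\varepsilon\}$ is exactly $[0,\alpha^*]$. Minimizing the nonincreasing $H$ over $[0,\alpha^*]$ forces the minimizer to the right endpoint $\alpha^*$, i.e. the constraint is active, which is the claimed solution. To present it in the paper's KKT style I would write the Lagrangian with multiplier $\lambda\ge 0$ for $\varepsilon - g \ge 0$; stationarity factors as $(\lambda-\alpha)\cdot\mathrm{Var} = 0$, forcing $\lambda=\alpha$ and hence $\alpha\ge 0$, while complementary slackness gives $\alpha(\varepsilon - g(\alpha))=0$.

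The main obstacle is this last step: complementary slackness admits the spurious stationary point $\alpha=0$ alongside $\alpha^*$, and one must rule it out as the minimizer. This is exactly where global monotonicity of $H$ is indispensable---at $\alpha=0$ the entropy attains its \emph{maximum} value $\log L$ over the feasible interval, so it is the worst point for minimization, leaving $\alpha^*$ as the unique minimizer when $\varepsilon > g(0)$ (and coinciding with it when $\varepsilon = g(0)$). It is worth emphasizing that the hypothesis $\varepsilon \ge \frac{1}{L}\sum_l \textbf{z}^{(l)}$ is precisely what guarantees $\alpha^*\ge 0$; this is why, unlike the overconfidence theorem, no separate $\alpha^*=0$ branch appears and the solution collapses to the single active-constraint case stated here.
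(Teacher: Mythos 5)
Your proposal is correct and follows essentially the same route as the paper: both arguments rest on Lemma~\ref{lamma_bound} for the monotonicity of the weighted-average logit, the Cauchy--Schwarz computation showing $\partial H/\partial \alpha = -\alpha\cdot(\text{variance term}) \le 0$ for $\alpha\ge 0$, and the intermediate value theorem to locate the unique $\alpha^*$ at which the constraint becomes active, with the underconfidence hypothesis guaranteeing $\alpha^*\ge 0$ so that no separate $\alpha^*=0$ branch is needed. The only cosmetic difference is that you offer an optional KKT packaging, whereas the paper's remark explicitly notes it dispenses with KKT here and argues directly from the gradient; your observation that global monotonicity of $H$ is what rules out the spurious stationary point $\alpha=0$ is exactly the content of the paper's ``$\mathcal{F}(\alpha)$ is monotonically decreasing, thus $\alpha^*$ is the optimal point'' step.
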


\begin{proof}
For TS, Let
\begin{equation}
\mathcal{F}(\alpha) = -\sum_{i=1}^n \sum_{x \in \Omega} \sum_{l=1}^L \sigma_{SM}\big(\alpha \textbf{z}_i(x)\big)^{(l)} \log \Big(\sigma_{SM} \big(\alpha \textbf{z}_i(x)\big)^{(l)}\Big).
\end{equation}
Taking the derivative w.r.t. $\alpha$, we have
\begin{align}
    \frac{\partial \mathcal{F}(\alpha)}{\partial \alpha} &= -\sum_{i=1}^n \sum_{x \in \Omega} \sum_{l=1}^L \sigma_{SM}\big(\alpha \textbf{z}_i(x)\big)^{(l)}\Big( \textbf{z}_i(x)^{(l)} - \sum_{j=1}^L \textbf{z}_i(x)^{(j)}\sigma_{SM}\big( \alpha \textbf{z}_i(x) \big)^{(j)} \Big) \log \Big(\sigma_{SM} \big(\alpha \textbf{z}_i(x)\big)^{(l)}\Big) \notag\\
    &\quad -\sum_{i=1}^n \sum_{x \in \Omega} \underbrace{\Big[\sum_{l=1}^L \sigma_{SM}\big(\alpha \textbf{z}_i(x)\big)^{(l)} \Big( \textbf{z}_i(x)^{(l)} - \sum_{j=1}^L \textbf{z}_i(x)^{(j)}\sigma_{SM}\big( \alpha \textbf{z}_i(x) \big)^{(j)} \Big) \Big]}_{=\sum_{l=1}^L \sigma_{SM}\big(\alpha \textbf{z}_i(x)\big)^{(l)} \textbf{z}_i(x)^{(l)} - \underbrace{\sum_{l=1}^L \sigma_{SM}\big(\alpha \textbf{z}_i(x)\big)^{(l)}}_{\textbf{= 1}} \sum_{j=1}^L \textbf{z}_i(x)^{(j)}\sigma_{SM}\big( \alpha \textbf{z}_i(x) \big)^{(j)} \textbf{= 0} }\\
    &= -\sum_{i=1}^n \sum_{x \in \Omega} \sum_{l=1}^L \sigma_{SM}\big(\alpha \textbf{z}_i(x)\big)^{(l)}\Big( \textbf{z}_i(x)^{(l)} - \sum_{j=1}^L \textbf{z}_i(x)^{(j)}\sigma_{SM}\big( \alpha \textbf{z}_i(x) \big)^{(j)} \Big) \Big(\alpha \textbf{z}_i(x)^{(l)} - \log \big(\sum_{j=1}^L \exp (\alpha \textbf{z}_i(x)^{(j)} ) \big)\Big)\\
    &= -\alpha \sum_{i=1}^n \sum_{x \in \Omega} \sum_{l=1}^L \textbf{z}_i(x)^{(l)} \sigma_{SM}\big(\alpha \textbf{z}_i(x)\big)^{(l)}\Big( \textbf{z}_i(x)^{(l)} - \sum_{j=1}^L \textbf{z}_i(x)^{(j)}\sigma_{SM}\big( \alpha \textbf{z}_i(x) \big)^{(j)} \Big) \notag\\
    &\quad + \sum_{i=1}^n \sum_{x \in \Omega} \underbrace{\Big[ \sum_{l=1}^L \sigma_{SM}\big(\alpha \textbf{z}_i(x)\big)^{(l)}\Big( \textbf{z}_i(x)^{(l)} - \sum_{j=1}^L \textbf{z}_i(x)^{(j)}\sigma_{SM}\big( \alpha \textbf{z}_i(x) \big)^{(j)} \Big) \Big]}_{\textbf{= 0}} \log \big(\sum_{j=1}^L \exp (\alpha \textbf{z}_i(x)^{(j)} ) \big)\\
    &= -\alpha \sum_{i=1}^n \sum_{x \in \Omega} \Big( \sum_{l=1}^L \big( \textbf{z}_i(x)^{(l)} \big)^2 \sigma_{SM}\big(\alpha \textbf{z}_i(x)\big)^{(l)} - \big (\sum_{l=1}^L \textbf{z}_i(x)^{(l)} \sigma_{SM}\big(\alpha \textbf{z}_i(x)\big)^{(l)} \big)^2 \Big).
\end{align}
By the Cauchy-Schwarz inequality, we have
\begin{align}
    &\quad \sum_{l=1}^L \big( \textbf{z}_i(x)^{(l)} \big)^2 \sigma_{SM}\big(\alpha \textbf{z}_i(x)\big)^{(l)} - \big (\sum_{l=1}^L \textbf{z}_i(x)^{(l)} \sigma_{SM}\big(\alpha \textbf{z}_i(x)\big)^{(l)} \big)^2 \notag \\
    &= \Big( \sum_{l=1}^L \big( \textbf{z}_i(x)^{(l)} \big)^2 \sigma_{SM}\big(\alpha \textbf{z}_i(x)\big)^{(l)} \Big) \underbrace{\Big( \sum_{l=1}^L \sigma_{SM}\big(\alpha \textbf{z}_i(x)\big)^{(l)} \Big)}_{\textbf{= 1}} - \Big (\sum_{l=1}^L \textbf{z}_i(x)^{(l)} \sigma_{SM}\big(\alpha \textbf{z}_i(x)\big)^{(l)} \Big)^2\\
    &\geq  \Big (\sum_{l=1}^L |\textbf{z}_i(x)^{(l)}| \sigma_{SM}\big(\alpha \textbf{z}_i(x)\big)^{(l)} \Big)^2 - \Big (\sum_{l=1}^L \textbf{z}_i(x)^{(l)} \sigma_{SM}\big(\alpha \textbf{z}_i(x)\big)^{(l)} \Big)^2\\
    &\geq 0
\end{align}
Since $\alpha \geq 0$, finally we get
\begin{equation}
    \frac{\partial \mathcal{F}(\alpha)}{\partial \alpha} \leq 0.
\end{equation}
Thus $\mathcal{F}(\alpha)$ is monotonicly decreasing w.r.t. $\alpha$.\\ 

\noindent
Furthermore, we have the following relations by definition 
\begin{align}
\sum_{i=1}^n\sum_{x \in \Omega}\sum_{l=1}^L \textbf{z}_i(x)^{(l)}\sigma_{SM}\big(\alpha \textbf{z}_i(x)\big)^{(l)} &\leq \sum_{i=1}^{n} \sum_{x \in \Omega} \textbf{z}_i(x)^{(S_i(x))}\\
\frac{1}{L}\sum_{i=1}^{n} \sum_{x \in \Omega} \sum_{l=1}^L \textbf{z}_i(x)^{(l)} &\leq \sum_{i=1}^{n} \sum_{x \in \Omega} \textbf{z}_i(x)^{(S_i(x))} \leq \sum_{i=1}^{n} \sum_{x \in \Omega} \max_{l} \{\textbf{z}_i(x)^{(l)} \} \, .
\end{align}
With Lemma~\ref{lamma_bound} and the intermediate value theorem, there must be a unique non-negative solution $\alpha^*$ for $\alpha$ such that $\sum_{i=1}^n\sum_{x \in \Omega}\sum_{l=1}^L \textbf{z}_i(x)^{(l)}\sigma_{SM}\big(\alpha \textbf{z}_i(x)\big)^{(l)} = \sum_{i=1}^{n} \sum_{x \in \Omega} \textbf{z}_i(x)^{(S_i(x))}$. This $\alpha^*$ is also the maximum $\alpha$ that we can get without violating the constraints. Because $\mathcal{F}(\alpha)$ is monotonicly decreasing, thus $\alpha^*$ is the optimal point that minimizes the entropy, i.e.
\begin{align}
    &\argmin_{\alpha} -\sum_{i=1}^n \sum_{x \in \Omega} \sum_{l=1}^L \sigma_{SM}\big(\alpha \textbf{z}_i(x)\big)^{(l)} \log \Big(\sigma_{SM} \big(\alpha \textbf{z}_i(x)\big)^{(l)}\Big) \notag\\
    = &\Big\{\alpha^*\geq0  \mid \sum_{i=1}^n\sum_{x \in \Omega}\sum_{l=1}^L \textbf{z}_i(x)^{(l)}\sigma_{SM}\big( \alpha^* \textbf{z}_i(x) \big)^{(l)} = \sum_{i=1}^{n} \sum_{x \in \Omega} \textbf{z}_i(x)^{(S_i(x))} \Big\}
\end{align}

\noindent
Similarly, for IBTS and LTS, we can get
\begin{align}
    &\argmin_{\alpha_i} -\sum_{i=1}^n \sum_{x \in \Omega} \sum_{l=1}^L \sigma_{SM}\big(\alpha_i \textbf{z}_i(x)\big)^{(l)} \log \Big(\sigma_{SM} \big(\alpha_i \textbf{z}_i(x)\big)^{(l)}\Big) \notag\\
    = &\Big\{\alpha_i^*\geq0  \mid \sum_{i=1}^n\sum_{x \in \Omega}\sum_{l=1}^L \textbf{z}_i(x)^{(l)}\sigma_{SM}\big( \alpha_i^* \textbf{z}_i(x) \big)^{(l)} = \sum_{i=1}^{n} \sum_{x \in \Omega} \textbf{z}_i(x)^{(S_i(x))} \Big\}\\
    &\argmin_{\alpha_i(x)} -\sum_{i=1}^n \sum_{x \in \Omega} \sum_{l=1}^L \sigma_{SM}\big(\alpha_i(x) \textbf{z}_i(x)\big)^{(l)} \log \Big(\sigma_{SM} \big(\alpha_i(x) \textbf{z}_i(x)\big)^{(l)}\Big) \notag\\
    = &\Big\{\alpha_i(x)^*\geq0  \mid \sum_{i=1}^n\sum_{x \in \Omega}\sum_{l=1}^L \textbf{z}_i(x)^{(l)}\sigma_{SM}\big( \alpha_i(x)^* \textbf{z}_i(x) \big)^{(l)} = \sum_{i=1}^{n} \sum_{x \in \Omega} \textbf{z}_i(x)^{(S_i(x))} \Big\}
\end{align}

\end{proof}

\noindent
\textbf{Remark.} Different from the proof in Theorem~\ref{thm:entropy_temperature} where we used KKT conditions, we only used the gradient here and gave a specific expression for the probability (i.e. softmax of logits) to prove Theorem~\ref{thm:underfitting_entropy}. This kind of proof choice is
because (1) the objective function in Theorem~\ref{thm:entropy_temperature} is concave and we want to obtain
the maximum; (2) the constraints in Theorem~\ref{thm:entropy_temperature} are strong enough (self-contained)
to derive the solution.

\subsection{(Local) Temperature Scaling Drives NLL and Entropy to an Equilibrium}
\label{app:NLL_meet_with_entropy}
\begin{theorem}
\label{thm:NLL_Entropy}
    (1) When the to-be-calibrated semantic segmentation network is overconfident, minimizing NLL w.r.t. TS, IBTS, and LTS results in solutions that are also the solutions of maximizing entropy of the calibrated probability w.r.t. TS, IBTS and LTS under the condition of overconfidence. 
    (2) When the to-be-calibrated semantic segmentation network is underconfident, minimizing NLL w.r.t. TS, IBTS, and LTS results in solutions that are also the solutions of minimizing entropy of the calibrated probability w.r.t. TS, IBTS and LTS under the condition of underconfidence. 
    (3) The post-hoc probability calibration of semantic segmentation with TS, IBTS and LTS approaches reach an equilibrium between Negative Log Likelihood (NLL) and entropy for both underconfidence and overconfidence.\\ 
\end{theorem}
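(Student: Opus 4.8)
The plan is to prove all three parts by directly comparing the closed-form characterizations of the optimizers already established in Theorem~\ref{thm:NLL_opt} (NLL minimization), Theorem~\ref{thm:entropy_temperature} together with Theorem~\ref{thm:overderfitting_entropy_second_proof} (constrained entropy maximization), and Theorem~\ref{thm:underfitting_entropy} (constrained entropy minimization). All of the analytic work — existence, uniqueness, and the explicit two-case split furnished by Lemma~\ref{lamma_bound} and the intermediate value theorem — is already contained in those results, so the remaining task is purely to match solution sets and to check that the over/underconfidence hypotheses make the matching consistent. Throughout I use that the weighted-average logit $\sum_l \textbf{z}_i(x)^{(l)}\sigma_{SM}(\alpha\textbf{z}_i(x))^{(l)}$ is monotone increasing in the inverse temperature $\alpha$ (Lemma~\ref{lamma_bound}), rising from $\tfrac{1}{L}\sum_l \textbf{z}_i(x)^{(l)}$ at $\alpha=0$ to $\max_l \textbf{z}_i(x)^{(l)}$ as $\alpha\to\infty$.

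First, for part (1), I would set the constraint levels in Theorem~\ref{thm:entropy_temperature} to $\varepsilon^A=\sum_{i,x}\textbf{z}_i(x)^{(S_i(x))}$, $\varepsilon^B_i=\sum_{x}\textbf{z}_i(x)^{(S_i(x))}$, $\varepsilon^C_i(x)=\textbf{z}_i(x)^{(S_i(x))}$. With this choice the active-constraint equation of the entropy-maximization problem is exactly the stationarity equation $\sum_l\textbf{z}_i(x)^{(l)}\sigma_{SM}(\alpha^*\textbf{z}_i(x))^{(l)}=\textbf{z}_i(x)^{(S_i(x))}$ (summed as appropriate for TS/IBTS/LTS) that one obtains by differentiating the NLL in Theorem~\ref{thm:NLL_opt}. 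Since both theorems then return the identical two-case optimizer ($\alpha^*=0$ when the true-class logit sum lies at or below $\tfrac{1}{L}\sum_l\textbf{z}^{(l)}$, and otherwise the unique positive root of the balance equation), the minimizers and maximizers coincide. I would then verify consistency with overconfidence: Definition~\ref{def:overfitting_def} is the inequality $\sum\textbf{z}^{(S)}\le\sum_l\textbf{z}^{(l)}\sigma_{SM}(\textbf{z})^{(l)}$ evaluated at the uncalibrated point $\alpha=1$; by monotonicity this forces $\alpha^*\le1$ (i.e.\ $T^*\ge1$), confirming that the branch selected by the entropy constraint is precisely the NLL branch.

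Next, for part (2), the argument is the mirror image. Under underconfidence (Definition~\ref{def:underfitting_def}) we have $\sum\textbf{z}^{(S)}\ge\sum_l\textbf{z}^{(l)}\sigma_{SM}(\textbf{z})^{(l)}$ at $\alpha=1$, hence $\alpha^*\ge1$, and in particular $\sum\textbf{z}^{(S)}>\tfrac{1}{L}\sum_l\textbf{z}^{(l)}$, so only the interior branch of Theorem~\ref{thm:NLL_opt} is active. Theorem~\ref{thm:underfitting_entropy} minimizes entropy subject to the reversed constraints $\sum_l\textbf{z}^{(l)}\sigma_{SM}(\alpha\textbf{z})^{(l)}\le\varepsilon$; its proof shows the entropy is monotonically decreasing in $\alpha$, so the minimizer saturates the constraint at the same balance equation, and the optimizers again coincide.

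Finally, for part (3), I would show that the shared balance equation is exactly the balanced state of Definition~\ref{def:balanced_def} for the \emph{calibrated} probabilities. Expanding $\sigma_{SM}(\alpha^*\textbf{z})$ and repeating the $\log\big(\sum_j\exp(\alpha^*\textbf{z}^{(j)})\big)$ cancellation of {\supp}~\ref{app:overfitting_underfitting}, the equality $\sum_l\textbf{z}^{(l)}\sigma_{SM}(\alpha^*\textbf{z})^{(l)}=\textbf{z}^{(S)}$ is equivalent (for $\alpha^*>0$) to the calibrated entropy equaling the calibrated cross-entropy; on the degenerate branch $\alpha^*=0$ the calibrated distribution is uniform, whose entropy and cross-entropy both equal $\log|L|$, so balance holds there as well. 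Hence TS, IBTS, and LTS drive the NLL and entropy of the calibrated probabilities into equilibrium in either regime. The hard part will be the bookkeeping in part~(1): overconfidence is phrased at $\alpha=1$ while the case split of Theorem~\ref{thm:NLL_opt} is phrased at the $\alpha\to0$ threshold $\tfrac{1}{L}\sum_l\textbf{z}^{(l)}$, so I must invoke the monotonicity of the weighted-average logit carefully to guarantee that the constraint branch chosen by the entropy problem genuinely equals the NLL branch, rather than merely that the two solution formulas are written identically.
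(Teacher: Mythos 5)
Your proposal is correct and follows essentially the same route as the paper's proof: it assembles the result by matching the closed-form optimizers of Theorem~\ref{thm:NLL_opt} with those of Theorems~\ref{thm:entropy_temperature}/\ref{thm:overderfitting_entropy_second_proof} (overconfident case) and Theorem~\ref{thm:underfitting_entropy} (underconfident case), uses the monotonicity from Lemma~\ref{lamma_bound} to place $\alpha^*$ on the correct side of $1$ under each confidence hypothesis, and identifies the saturated balance equation with the equality of calibrated entropy and cross-entropy (the equilibrium/balanced state). Your explicit treatment of the degenerate branch $\alpha^*=0$ and of the mismatch between the $\alpha=1$ confidence condition and the $\alpha\to 0$ case-split threshold is, if anything, slightly more careful than the paper's own write-up.
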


\begin{proof}
For TS, if overconfident, we have the following relationship from definition~\ref{def:overfitting_def}:
\begin{equation}
\label{eq:TS_overfitting_exp}
    \sum_{i=1}^n \sum_{x \in \Omega} \textbf{z}_i(x)^{(S_i(x))} \leq \sum_{i=1}^n \sum_{x \in \Omega} \sum_{l=1}^L \textbf{z}_i(x)^{(l)} \sigma_{SM}\big(\textbf{z}_i(x)\big)^{(l)} \,.
\end{equation}
To eliminate overconfidence, we need to decrease NLL and increase entropy to probabilistically describe empirically observable segmentation errors (see \cref{sec:why_cross_entropy_and_temperature_scaling} for detailed explanations).
From Eq.~\eqref{eq:TS_overfitting_exp}, Theorem~\ref{thm:entropy_temperature} (or theorem~\ref{thm:overderfitting_entropy_second_proof}) and Theorem~\ref{thm:NLL_opt} we know there is a unique optimal $\alpha^*$
\begin{equation}
\begin{cases}
\;\; \alpha^*=0, \quad\quad\quad\quad \text{if}\quad \sum_{i=1}^{n} \sum_{x \in \Omega} \textbf{z}_i(x)^{(S_i(x))} \leq \frac{1}{L}\sum_{i=1}^n\sum_{x \in \Omega}\sum_{l=1}^L \textbf{z}_i(x)^{(l)}\\ 
\Big\{0<\alpha^*\leq 1  \mid \sum_{i=1}^n\sum_{x \in \Omega}\sum_{l=1}^L \textbf{z}_i(x)^{(l)}\sigma_{SM}\big( \alpha^* \textbf{z}_i(x) \big)^{(l)} = \sum_{i=1}^{n} \sum_{x \in \Omega} \textbf{z}_i(x)^{(S_i(x))} \Big\}, \text{otherwise}
\end{cases}
\end{equation}
that drives
the NLL to minimum point and the entropy to maximum point simultaneously. Besides, at the optimal point, NLL equals to entropy, thus reaching an equilibrium. And the overconfidence state is transferred to a balanced state
\begin{equation}
\begin{cases}
-\sum_{i=1}^n \sum_{x \in \Omega} \sum_{l=1}^L \frac{1}{L} \log \Big(\frac{1}{L}\Big) =
    - \sum_{i=1}^n \sum_{x \in \Omega} \log \Big(\frac{1}{L}\Big), \text{if} \sum_{i=1}^{n} \sum_{x \in \Omega} \textbf{z}_i(x)^{(S_i(x))} \leq \frac{1}{L}\sum_{i=1}^n\sum_{x \in \Omega}\sum_{l=1}^L \textbf{z}_i(x)^{(l)}\\
\sum_{i=1}^n\sum_{x \in \Omega}\sum_{l=1}^L \textbf{z}_i(x)^{(l)}\sigma_{SM}\big( \alpha^* \textbf{z}_i(x) \big)^{(l)} = \sum_{i=1}^{n} \sum_{x \in \Omega} \textbf{z}_i(x)^{(S_i(x))}, \text{otherwise}.
\end{cases}
\end{equation}

\noindent
If underconfident, we have the following relationship from definition~\ref{def:underfitting_def}:
\begin{equation}
\label{eq:TS_underfitting_exp}
    \sum_{i=1}^n \sum_{x \in \Omega} \textbf{z}_i(x)^{(S_i(x))} \geq \sum_{i=1}^n \sum_{x \in \Omega} \sum_{l=1}^L \textbf{z}_i(x)^{(l)} \sigma_{SM}\big(\textbf{z}_i(x)\big)^{(l)} \,.
\end{equation}
To eliminate underconfidence, we need to decrease NLL and decrease entropy to probabilistically describe empirically observable segmentation errors.
From Eq.~\eqref{eq:TS_underfitting_exp}, Theorem~\ref{thm:underfitting_entropy} and Theorem~\ref{thm:NLL_opt} we know there is a unique optimal $\alpha^*$
\begin{equation}
    \Big\{\alpha^*\geq 1  \mid \sum_{i=1}^n\sum_{x \in \Omega}\sum_{l=1}^L \textbf{z}_i(x)^{(l)}\sigma_{SM}\big( \alpha^* \textbf{z}_i(x) \big)^{(l)} = \sum_{i=1}^{n} \sum_{x \in \Omega} \textbf{z}_i(x)^{(S_i(x))} \Big\}
\end{equation}
that drives
the NLL to minimum point and the entropy to minimum point simultanously. Besides, at the optimal point, NLL equals to entropy, thus reaching an equilibrium. And the underconfidence state is transferred to a balanced state
\begin{equation}
    \sum_{i=1}^n\sum_{x \in \Omega}\sum_{l=1}^L \textbf{z}_i(x)^{(l)}\sigma_{SM}\big( \alpha^* \textbf{z}_i(x) \big)^{(l)} = \sum_{i=1}^{n} \sum_{x \in \Omega} \textbf{z}_i(x)^{(S_i(x))}
\end{equation}
Overall, TS post-hoc probability calibration makes NLL and entropy reach an equilibrium for the validation dataset under both the underconfidence and overconfidence scenarios.\\

\noindent
Similarly, IBTS and LTS post-hoc probability calibrations also make NLL and entropy reach an equilibrium for each image and for each location respectively under both the underconfident and overconfident scenarios.\\
\end{proof}

\section{Evaluation Metrics for Semantic Segmentation}
\label{app:metrics}
\noindent
This section introduces evaluation metrics for calibration and segmentation.\\

\noindent
\textbf{Reliability Diagram.}
Reliability diagrams are commonly used as visual representations of calibration performance~\cite{degroot1983comparison,murphy1977reliability,niculescu2005predicting}. A reliability diagram is derived from the definition of perfect calibration where the accuracy and the confidence are presented separately. If a model is perfectly calibrated, then the diagram should indicate an identity relationship between the confidence and the accuracy. Otherwise, there is miscalibration in the model. See Fig.~\ref{fig:local_rd} and Fig.~\ref{fig:camvid_local_rd} for examples.\\

\noindent
To visually illustrate the relationship of the confidence and the accuracy in Eq.~\eqref{perfect_calibration}, one can estimate both the confidence and the accuracy from finite samples. Specifically, semantic segmentation results can be grouped into $N$ equal-sized probability intervals (each of size $1/N$) to calculate the accuracy of each bin. Let $\Omega_j$ be the set of pixels/voxels whose predicted probabilities fall into the interval $\Delta_j = (\frac{j-1}{N}, \frac{j}{N}]$. Thus, the \textit{accuracy}~\cite{guo2017calibration} of $\Omega_j$ can be estimated as
\begin{equation}
acc(\Omega_j) = \frac{1}{|\Omega_j|} \sum_{x \in \Omega_j} \mathds{1} (\hat{S}(x) = S(x)),
\end{equation}
where $\hat{S}(x)$ and $S(x)$ are the predicted and true labels for pixel/voxel $x$, $\mathds{1}$ is the indicator function. Note that $acc(\Omega_j)$ is an unbiased and consistent estimator of $\mathbb{P} (\hat{S} = S | \hat{P} \in \Delta_j)$~\cite{guo2017calibration} where $\hat{P}(x)$ is the probability associated with $\hat{S}(x)$ for pixel/voxel at location $x$. The \textit{average confidence}~\cite{guo2017calibration} over bin $\Omega_j$ can be defined as
\begin{equation}
conf(\Omega_j) = \frac{1}{|\Omega_j|} \sum_{x \in \Omega_j} \hat{P}(x),
\end{equation}
Thus, $acc(\Omega_j)$ and $conf(\Omega_j)$ approximate the left-hand side and right-hand side of Eq.~\eqref{perfect_calibration} for bin $\Omega_j$.\\

\noindent
Based on the definition of perfect calibration, a reliability diagram checks whether $acc(\Omega_j) = conf(\Omega_j)$ for all $j \in {1, 2, ..., N}$ and plots the quantitative relation in a bar chart.\\

\noindent
\textbf{Expected Calibration Error (ECE).}
A reliability diagram is only a visual cue to indicate the performance of model calibration: it does not reflect the number of pixels/voxels in each bin. Thus, to account for such variations of the number of samples in a bin, it has been suggested~\cite{naeini2015obtaining} to use a scalar value to summarize the overall calibration performance. The expected calibration error~\cite{naeini2015obtaining} uses the expectation between confidence and the accuracy to indicate the magnitude of the miscalibration. More precisely,
\begin{equation}
ECE = \sum_{j=1}^N \frac{|\Omega_j|}{\Omega_*} | acc(\Omega_j) - conf(\Omega_j) |,
\end{equation}
where $\Omega_* = \sum_j^N |\Omega_j|$ is the total number of pixels/voxels. The difference between $acc$ and $conf$ for a given bin represents the calibration gap.\\

\noindent
\textbf{Maximum Calibration Error (MCE).}
The maximum calibration error~\cite{naeini2015obtaining} measures the worst-case deviation between the confidence and the accuracy. This is extremely important in high-risk applications where reliable confidence prediction is crucial for decision making. Specifically,
\begin{equation}
MCE = \max_{j \in \{1,...,N\}} |acc(\Omega_j) - conf(\Omega_j)|\,.
\end{equation}
Note that both the ECE and the MCE are closely related to the reliability diagram. The ECE is a weighted average of all gaps across all bins while the MCE is the largest gap.\\

\noindent
\textbf{Static Calibration Error (SCE).}
The ECE is computed by only using the predicted label's probability, which does not consider information obtained for other labels. The static calibration error (SCE)~\cite{nixon2019measuring} has therefore been proposed for the multi-label setting, which extends ECE by separately computing the calibration error within a bin for each label followed by averaging across all bins.
More precisely, the SCE is defined as
\begin{equation}
SCE = \sum_{l \in L}  \sum_{j=1}^N  \frac{|\Omega_{j, l}|}{|L| \Omega_*} |acc(\Omega_{j, l}) - conf(\Omega_{j, l})|,
\end{equation}
where $L$ is the set of labels, $\Omega_{j, l}$ is the subset of pixels/voxels for label $l$ in bin $\Omega_j$.\\

\noindent
\textbf{Adaptive Calibration Error (ACE).}
Another weakness of ECE is that the number of pixels/voxels in each bin varies a lot among different bins, posing a bias-variance tradeoff for choosing the number of bins~\cite{nixon2019measuring}.
This motivates the introduction of the adaptive calibration error (ACE)~\cite{nixon2019measuring}. Specifically, ACE uses an adaptive scheme which separates the bin intervals so that each bin contains an equal number of pixels/voxels. Specifically, 
\begin{equation}
ACE = \sum_{l \in L} \sum_{r=1}^R \frac{1}{|L|R} |acc(\Omega_{r, l}) - conf(\Omega_{r, l})|,
\end{equation}
where $R$ is the number of equal-frequency bins, $\Omega_r$ is the $r$-th sorted bin which contains $\Omega_* / R$ pixels/voxels. $\Omega_{r, l}$ is the subset of pixels/voxels for label $l$ in the $r$-th bin $\Omega_r$.\\

\noindent
\textbf{Avgerage Surface Distance (ASD).} ASD is the symmetric average surface distance (usually in millimeter (mm)) between each predicted segmentation label and the true segmentation label. The distance between a point $p$ on a gold-standard or ground-truth surface $\partial S^{(l)}$ and the predicted surface $\partial \hat{S}^{(l)}$ with respect to label $l$ is given by the minimum of the Euclidean norm, i.e. $d(p, \partial \hat{S}^{(l)}) = \min_{\hat{p} \in \partial \hat{S}^{(l)}} ||p - \hat{p}||_2$, where $\hat{p}$ is a point on surface $\partial \hat{S}^{(l)}$. Hence symmetric average surface distance is defined as
\begin{equation}
    ASD = \frac{1}{|L|} \sum_{l \in L} \Bigg( \frac{1}{|\partial S^{(l)}| + |\partial \hat{S}^{(l)}|} \bigg ( \sum_{p \in \partial S^{(l)}} d(p, \partial \hat{S}^{(l)}) + \sum_{\hat{p} \in \partial \hat{S}^{(l)}} d(\hat{p}, \partial S^{(l)}) \bigg ) \Bigg ).
\end{equation}

\noindent
\textbf{Surface Dice (SD).} SD is the averaged Dice score between the segmented label surface and the true label surface at a given tolerance (we use 1 mm). This tolerance captures that a point $p$ may still be counted as being on the surface $\partial \hat{S}^{(l)}$ if the distance is at or below the tolerance, i.e. $d(p, \partial \hat{S}^{(l)}) \leq \text{tolerance}$. Formally, the averaged surface Dice score is defined as

\begin{equation}
    SD = \frac{1}{|L|} \sum_{l \in L}\frac{2|\{p | d(p, \partial S^{(l)}) \leq \epsilon, d(p, \partial \hat{S}^{(l)}) \leq \epsilon\}|}{|\{p | d(p, \partial S^{(l)}) \leq \epsilon\}| + |\{p | d(p, \partial \hat{S}^{(l)}) \leq \epsilon\}|}\,,
\end{equation}
where $\epsilon$ is the tolerance threshold, and $|\cdot|$ is the Cardinality of the set.

\noindent
\textbf{95\% Maximum Distance ($\textbf{95MD}$).} $\text{95MD}$ is the 95th percentile of the symmetric distance between the segmented label volume and the true label volume. The definition is
\begin{equation}
    95MD = \frac{1}{|L|} \sum_{l \in L} \bigg (95\% \text{Percentile} \Big\{...,d(p, \hat{S}^{(l)}), ..., d(\hat{p}, S^{(l)}), ...\Big\} \quad \forall p \in S^{(l)}, \hat{p} \in \hat{S}^{(l)} \bigg).
\end{equation}

\noindent
\textbf{Volume Dice (VD).} VD is the average Dice score over segmented labels (excluding the background). This is a commonly used metric to determine the success of segmentation in the field of medical image analysis. It is defined as
\begin{equation}
    VD = \frac{1}{|L|} \sum_{l \in L}\frac{2|S^{(l)} \cap \hat{S}^{(l)}|}{|S^{(l)}| + |\hat{S}^{(l)}|}.
\end{equation}

\section{Example of \textit{Boundary} Region and \textit{All} Region}
\label{app:region}
\noindent
Fig.~\ref{fig:boundary} shows an example of the \textit{Boundary} region and the \textit{All} region for a 2D slice of a 3D MR brain image. The \textit{Boundary} region is created with boundaries of labels and voxels that are up to 2 voxels away from boundary voxels. The \textit{All} region contains label regions excluding the background and the \textit{Boundary} region. Note that in the multi-atlas segmentation label fusion experiment, the boundary region of the VoteNet+ ground-truth labels is very sparse and thin. Thus, we use the \textit{Boundary} region and the \textit{All} region of the original segmentation labels of the magnetic resonance (MR) images instead. This is the same evaluation approach as for the U-Net segmentation experiment.

\begin{figure}[ht]
  \includegraphics[width=\linewidth]{./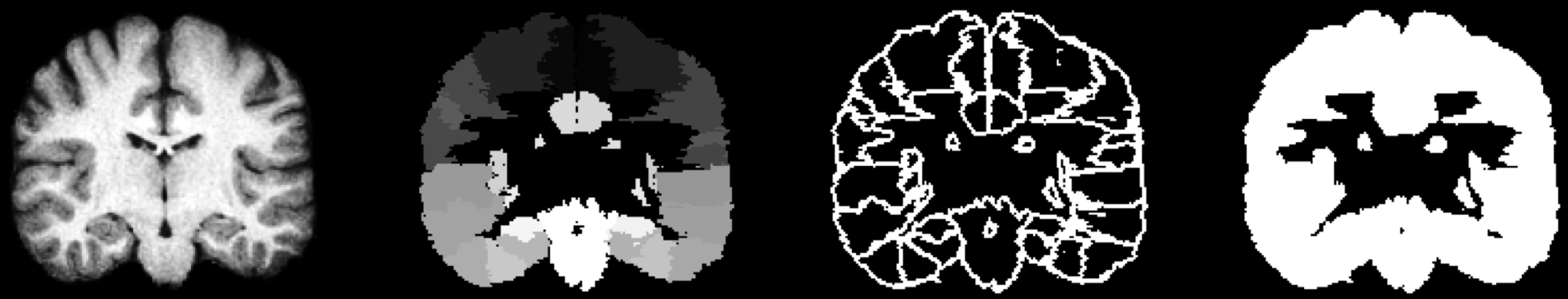}
  \caption{Illustration of \textit{Boundary} region and \textit{All} region of an MR brain image from the LPBA40 dataset in 2D. Left two columns: image and corresponding label map. Right two columns: \textit{Boundary} region and \textit{All} region. The \textit{Boundary} region is usually where mis-segmentations and mis-calibrations occur. The \textit{All} region enlarges the label region to include the \textit{Boundray} region, it thus captures an evaluation region which excludes almost all background of an image.}
  \label{fig:boundary}
\end{figure}

\section{Patch Size vs Metrics Results}
\label{app:patch_size_metrcis}
\noindent
Fig.~\ref{fig:patch_size_difference} shows the results of \textit{Local-Avg} for different metrics with different patch sizes. Note that the \textit{Local-Avg} and \textit{Local-Max} results reported in Tab.~\ref{tab:All_metrics} are for a patch size of 72$\times$72 (or 72$\times$72$\times$72 in 3D). We observe that the probability calibration performance tends to be worse for smaller patch sizes. This is expected as patch variations (also the differences of patch-based multi-class probability distributions) are very significant across patches when patch sizes are small. LTS can improve the calibration performance over TS and IBTS, because it can capture spatially varying effects.

\begin{figure*}[ht]
\centering
  \includegraphics[width=\textwidth]{./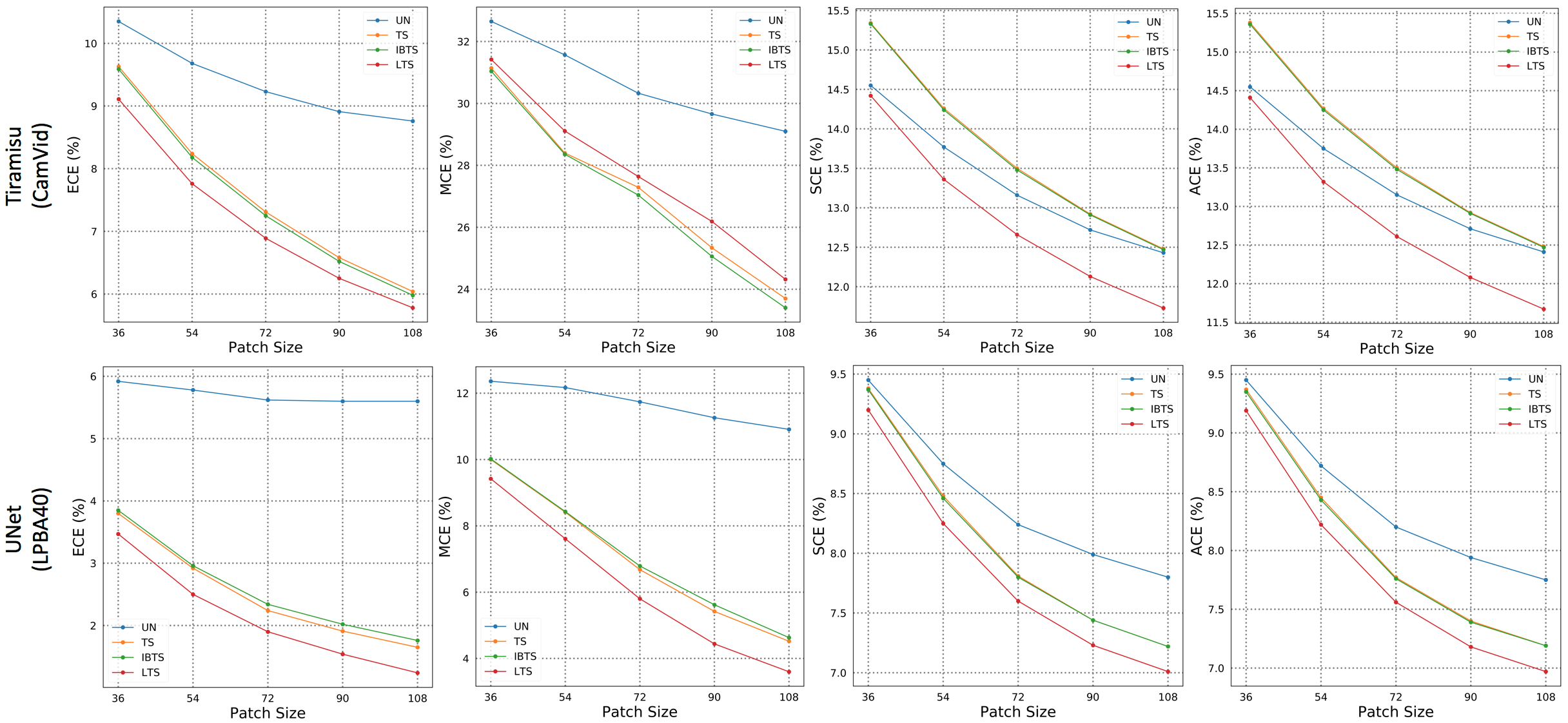}
  \caption{\textit{Local-Avg} results LPBA40 and CamVid experiments for different patch sizes.
  UN denotes uncalibrated results. In general, the smaller the patch size the worse the performance. Besides, LTS works best for most metrics.}
  \label{fig:patch_size_difference}
\end{figure*}

\section{Dataset Variations}
\label{app:dataset_variation}

\noindent
Image variations are different for different datasets. Fig.~\ref{fig:dataset_variation} illustrates such variations. COCO using an FCN is the most complex dataset, followed by CamVid using Tiramisu, LPBA40 using a UNet and finally LPBA40 combined with VoteNet+. The quantitative results of the metrics in Tab.~\ref{tab:All_metrics} follows the same pattern: with the results for COCO using an FCN the weakest and the results for LPBA40 using VoteNet+ the best.

\begin{figure*}[ht]
\centering
  \includegraphics[width=0.6\textwidth]{./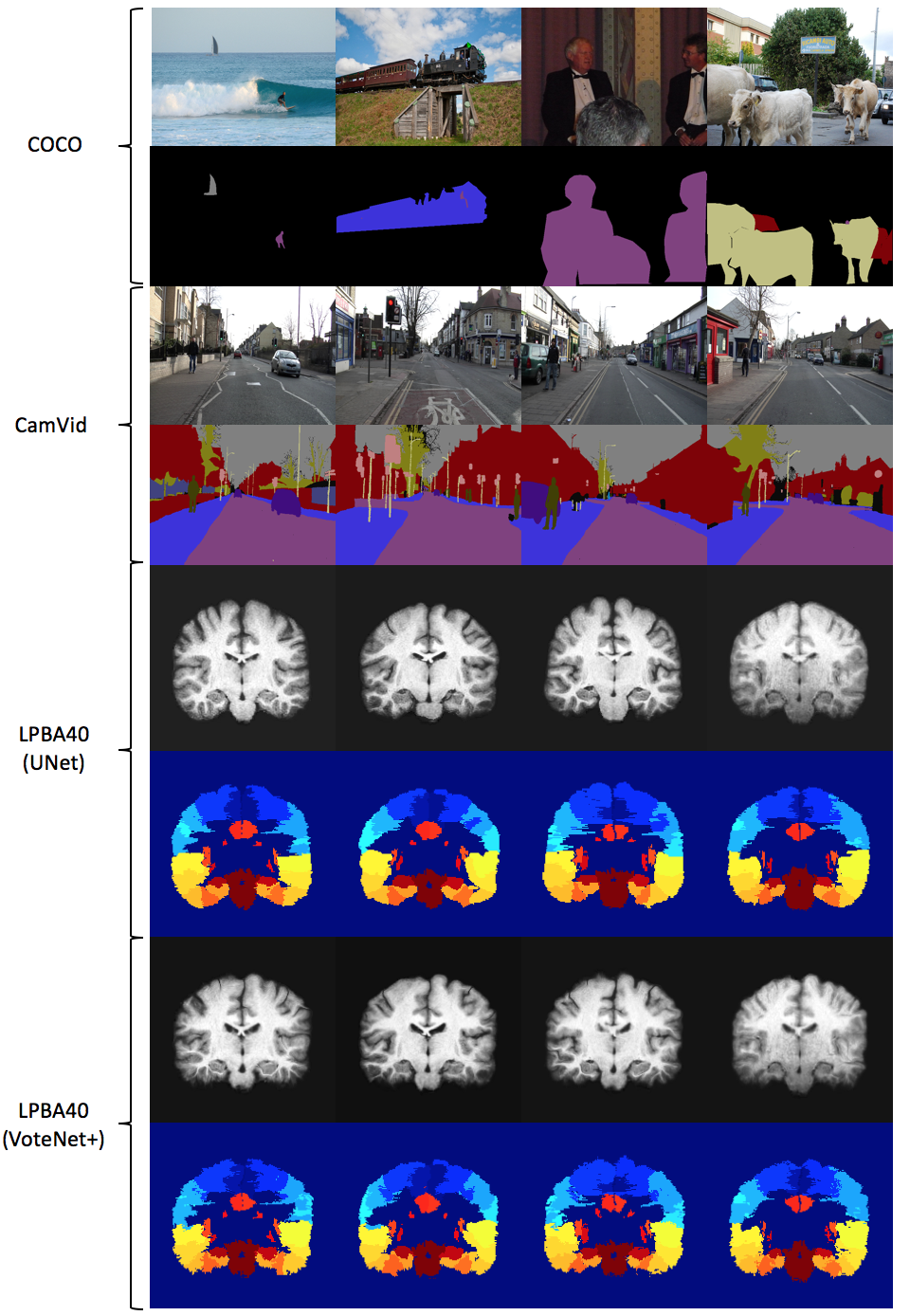}
  \caption{An example of images and labels in different datasets for different experiments. COCO is the most complex dataset and contains different kinds of natural images. CamVid is mainly focused on street scenes. LPBA40 is a dataset of 3D brain MR images. Note that images for UNet are affine pre-registered to a common atlas space while images for VoteNet+ are registered to a target image via a deformable registration. Thus image variations of VoteNet+ experiment are less than that for the UNet experiment.}
  \label{fig:dataset_variation}
\end{figure*}

\section{Additional Quantitative Results}
\label{app:additional_results}
\noindent
Additional quantitative results are provided in Tab.~\ref{tab:additional_metrics}. The results are in line with the conclusions we obtain in \cref{sec:experiments}, i.e. LTS works significantly better than TS~\cite{guo2017calibration}, isotonic Regression (IsoReg)~\cite{zadrozny2002transforming}, ensemble temperature scaling (ETS)~\cite{zhang2020mix}, vector scaling (VS)~\cite{guo2017calibration}, and Dirichlet calibration with off-diagonal regularization (DirODIR)~\cite{kull2019beyond}.

\begin{table*}[!ht] 
\centering
\begin{adjustbox}{max width=\textwidth}
\begin{tabular}{cccccccccccccc}
\specialrule{.15em}{.05em}{.05em}
\multicolumn{1}{c}{\multirow{3}{*}{Dataset}} &
\multicolumn{1}{c}{\multirow{3}{*}{Method}} & \multicolumn{3}{c}{ECE(\%)$\downarrow$} & \multicolumn{3}{c}{MCE(\%)$\downarrow$} & \multicolumn{3}{c}{SCE(\%)$\downarrow$} & \multicolumn{3}{c}{ACE(\%)$\downarrow$}\\
\cmidrule(lr){3-5}
\cmidrule(lr){6-8}
\cmidrule(lr){9-11}
\cmidrule(lr){12-14}
\multicolumn{1}{r}{}& \multicolumn{1}{r}{}& \multicolumn{1}{c}{\multirow{2}{*}{\textit{All}}} & \multicolumn{1}{c}{\multirow{2}{*}{\textit{Boundary}}} & \textit{Local-Avg} & \multicolumn{1}{c}{\multirow{2}{*}{\textit{All}}} & \multicolumn{1}{c}{\multirow{2}{*}{\textit{Boundary}}} & \textit{Local-Avg} & \multicolumn{1}{c}{\multirow{2}{*}{\textit{All}}} & \multicolumn{1}{c}{\multirow{2}{*}{\textit{Boundary}}} & \textit{Local-Avg} & \multicolumn{1}{c}{\multirow{2}{*}{\textit{All}}} & \multicolumn{1}{c}{\multirow{2}{*}{\textit{Boundary}}} & \textit{Local-Avg} \\
\multicolumn{1}{r}{} & \multicolumn{1}{r}{} & \multicolumn{1}{r}{} & \multicolumn{1}{r}{} & [\textit{Local-Max}] & \multicolumn{1}{r}{} & \multicolumn{1}{r}{} & [\textit{Local-Max}] & \multicolumn{1}{r}{} & \multicolumn{1}{r}{} & [\textit{Local-Max}] & \multicolumn{1}{r}{} & \multicolumn{1}{r}{} & [\textit{Local-Max}] \\
\specialrule{.15em}{.05em}{.05em}
\multicolumn{1}{c}{\multirow{24}{*}{\shortstack{Tiramisu\\CamVid\\(233)}}}&\multicolumn{1}{c}{\multirow{2}{*}{UC}}&\multicolumn{1}{c}{\multirow{2}{*}{{\setlength{\fboxsep}{0pt}\colorbox{green!30}{7.79(4.94)}}}}&\multicolumn{1}{c}{\multirow{2}{*}{{\setlength{\fboxsep}{0pt}\colorbox{green!30}{22.79(5.76)}}}}&{\setlength{\fboxsep}{0pt}\colorbox{green!30}{9.23(10.63)}}&\multicolumn{1}{c}{\multirow{2}{*}{{\setlength{\fboxsep}{0pt}\colorbox{green!30}{22.64(12.72)}}}}&\multicolumn{1}{c}{\multirow{2}{*}{{\setlength{\fboxsep}{0pt}\colorbox{green!30}{30.42(10.65)}}}}&{\setlength{\fboxsep}{0pt}\colorbox{green!30}{30.33(16.63)}}&\multicolumn{1}{c}{\multirow{2}{*}{{\setlength{\fboxsep}{0pt}\colorbox{green!30}{9.91(5.02)}}}}&\multicolumn{1}{c}{\multirow{2}{*}{{\setlength{\fboxsep}{0pt}\colorbox{green!30}{24.62(5.69)}}}}&{\setlength{\fboxsep}{0pt}\colorbox{green!30}{13.16(11.72)}}&\multicolumn{1}{c}{\multirow{2}{*}{{\setlength{\fboxsep}{0pt}\colorbox{green!30}{9.90(5.01)}}}}&\multicolumn{1}{c}{\multirow{2}{*}{{\setlength{\fboxsep}{0pt}\colorbox{green!30}{24.43(5.75)}}}}&{\setlength{\fboxsep}{0pt}\colorbox{green!30}{13.15(11.73)}}\\
\multicolumn{1}{r}{}& \multicolumn{1}{r}{}& \multicolumn{1}{r}{} & \multicolumn{1}{r}{} & {\setlength{\fboxsep}{0pt}\colorbox{green!30}{[25.35(12.80)]}}  & \multicolumn{1}{r}{} & \multicolumn{1}{r}{} & {\setlength{\fboxsep}{0pt}\colorbox{green!30}{[56.15(14.61)]}}  & \multicolumn{1}{r}{} & \multicolumn{1}{r}{} & {\setlength{\fboxsep}{0pt}\colorbox{green!30}{[30.60(12.48)]}}  & \multicolumn{1}{r}{} & \multicolumn{1}{r}{} & {\setlength{\fboxsep}{0pt}\colorbox{green!30}{[30.60(12.46)]}} \\
\cmidrule(lr){2-14}
&\multicolumn{1}{c}{\multirow{2}{*}{IsoReg~\cite{zadrozny2002transforming}}}&\multicolumn{1}{c}{\multirow{2}{*}{3.77(3.71)}}&\multicolumn{1}{c}{\multirow{2}{*}{{\setlength{\fboxsep}{0pt}\colorbox{green!30}{16.86(5.99)}}}}&\setlength{\fboxsep}{0pt}\colorbox{green!30}{7.79(8.56)}&\multicolumn{1}{c}{\multirow{2}{*}{\setlength{\fboxsep}{0pt}\colorbox{green!30}{18.19(11.70)}}}&\multicolumn{1}{c}{\multirow{2}{*}{\setlength{\fboxsep}{0pt}\colorbox{green!30}{24.59(10.00)}}}&27.66(15.89)&\multicolumn{1}{c}{\multirow{2}{*}{\setlength{\fboxsep}{0pt}\colorbox{green!30}{9.91(3.86)}}}&\multicolumn{1}{c}{\multirow{2}{*}{\setlength{\fboxsep}{0pt}\colorbox{green!30}{19.89(5.65)}}}&\setlength{\fboxsep}{0pt}\colorbox{green!30}{13.94(10.71)}&\multicolumn{1}{c}{\multirow{2}{*}{\setlength{\fboxsep}{0pt}\colorbox{green!30}{10.07(3.85)}}}&\multicolumn{1}{c}{\multirow{2}{*}{\setlength{\fboxsep}{0pt}\colorbox{green!30}{19.72(5.70)}}}&\setlength{\fboxsep}{0pt}\colorbox{green!30}{14.08(10.74)}\\
\multicolumn{1}{r}{}& \multicolumn{1}{r}{}& \multicolumn{1}{r}{} & \multicolumn{1}{r}{} & \setlength{\fboxsep}{0pt}\colorbox{green!30}{[21.18(12.73)]}  & \multicolumn{1}{r}{} & \multicolumn{1}{r}{} & \setlength{\fboxsep}{0pt}\colorbox{green!30}{[40.66(20.14)]}  & \multicolumn{1}{r}{} & \multicolumn{1}{r}{} & \setlength{\fboxsep}{0pt}\colorbox{green!30}{[29.79(12.51)]}  & \multicolumn{1}{r}{} & \multicolumn{1}{r}{} & \setlength{\fboxsep}{0pt}\colorbox{green!30}{[29.92(12.45)]}  \\
\cmidrule(lr){2-14}
&\multicolumn{1}{c}{\multirow{2}{*}{VS~\cite{guo2017calibration}}}&\multicolumn{1}{c}{\multirow{2}{*}{\setlength{\fboxsep}{0pt}\colorbox{green!30}{5.85(4.27)}}}&\multicolumn{1}{c}{\multirow{2}{*}{\setlength{\fboxsep}{0pt}\colorbox{green!30}{17.95(6.46)}}}&\setlength{\fboxsep}{0pt}\colorbox{green!30}{11.24(11.11)}&\multicolumn{1}{c}{\multirow{2}{*}{\setlength{\fboxsep}{0pt}\colorbox{green!30}{21.14(8.44)}}}&\multicolumn{1}{c}{\multirow{2}{*}{\setlength{\fboxsep}{0pt}\colorbox{green!30}{32.25(12.68)}}}&\setlength{\fboxsep}{0pt}\colorbox{green!30}{38.47(18.10)}&\multicolumn{1}{c}{\multirow{2}{*}{\setlength{\fboxsep}{0pt}\colorbox{green!30}{10.84(5.56)}}}&\multicolumn{1}{c}{\multirow{2}{*}{\setlength{\fboxsep}{0pt}\colorbox{green!30}{22.84(5.62)}}}&\setlength{\fboxsep}{0pt}\colorbox{green!30}{14.90(12.59)}&\multicolumn{1}{c}{\multirow{2}{*}{\setlength{\fboxsep}{0pt}\colorbox{green!30}{10.80(5.55)}}}&\multicolumn{1}{c}{\multirow{2}{*}{\setlength{\fboxsep}{0pt}\colorbox{green!30}{22.39(5.73)}}}&\setlength{\fboxsep}{0pt}\colorbox{green!30}{14.83(12.62)}\\
\multicolumn{1}{r}{}& \multicolumn{1}{r}{}& \multicolumn{1}{r}{} & \multicolumn{1}{r}{} & \setlength{\fboxsep}{0pt}\colorbox{green!30}{[24.97(14.50)]}  & \multicolumn{1}{r}{} & \multicolumn{1}{r}{} & \setlength{\fboxsep}{0pt}\colorbox{green!30}{[44.92(19.20)]}  & \multicolumn{1}{r}{} & \multicolumn{1}{r}{} & \setlength{\fboxsep}{0pt}\colorbox{green!30}{[31.13(14.99)]}  & \multicolumn{1}{r}{} & \multicolumn{1}{r}{} & \setlength{\fboxsep}{0pt}\colorbox{green!30}{[31.01(14.95)]}  \\
\cmidrule(lr){2-14}
&\multicolumn{1}{c}{\multirow{2}{*}{ETS~\cite{zhang2020mix}}}&\multicolumn{1}{c}{\multirow{2}{*}{3.71(3.65)}}&\multicolumn{1}{c}{\multirow{2}{*}{\setlength{\fboxsep}{0pt}\colorbox{green!30}{16.28(6.08)}}}&\setlength{\fboxsep}{0pt}\colorbox{green!30}{7.76(8.46)}&\multicolumn{1}{c}{\multirow{2}{*}{\setlength{\fboxsep}{0pt}\colorbox{green!30}{17.63(10.33)}}}&\multicolumn{1}{c}{\multirow{2}{*}{\setlength{\fboxsep}{0pt}\colorbox{green!30}{23.06(9.25)}}}&27.63(15.94)&\multicolumn{1}{c}{\multirow{2}{*}{\setlength{\fboxsep}{0pt}\colorbox{green!30}{9.98(3.85)}}}&\multicolumn{1}{c}{\multirow{2}{*}{\setlength{\fboxsep}{0pt}\colorbox{green!30}{19.48(5.62)}}}&\setlength{\fboxsep}{0pt}\colorbox{green!30}{14.05(10.70)}&\multicolumn{1}{c}{\multirow{2}{*}{\setlength{\fboxsep}{0pt}\colorbox{green!30}{10.12(3.84)}}}&\multicolumn{1}{c}{\multirow{2}{*}{\setlength{\fboxsep}{0pt}\colorbox{green!30}{19.30(5.67)}}}&\setlength{\fboxsep}{0pt}\colorbox{green!30}{14.14(10.72)}\\
\multicolumn{1}{r}{}& \multicolumn{1}{r}{}& \multicolumn{1}{r}{} & \multicolumn{1}{r}{} & \setlength{\fboxsep}{0pt}\colorbox{green!30}{[20.86(12.73)]}  & \multicolumn{1}{r}{} & \multicolumn{1}{r}{} & \setlength{\fboxsep}{0pt}\colorbox{green!30}{[41.09(20.13)]}  & \multicolumn{1}{r}{} & \multicolumn{1}{r}{} & \setlength{\fboxsep}{0pt}\colorbox{green!30}{[29.78(12.46)]}  & \multicolumn{1}{r}{} & \multicolumn{1}{r}{} & \setlength{\fboxsep}{0pt}\colorbox{green!30}{[29.85(12.42)]}  \\
\cmidrule(lr){2-14}
&\multicolumn{1}{c}{\multirow{2}{*}{DirODIR~\cite{kull2019beyond}}}&\multicolumn{1}{c}{\multirow{2}{*}{\setlength{\fboxsep}{0pt}\colorbox{green!30}{6.63(5.51)}}}&\multicolumn{1}{c}{\multirow{2}{*}{\setlength{\fboxsep}{0pt}\colorbox{green!30}{25.32(8.14)}}}&\setlength{\fboxsep}{0pt}\colorbox{green!30}{11.79(13.66)}&\multicolumn{1}{c}{\multirow{2}{*}{\setlength{\fboxsep}{0pt}\colorbox{green!30}{15.77(8.27)}}}&\multicolumn{1}{c}{\multirow{2}{*}{\setlength{\fboxsep}{0pt}\colorbox{green!30}{34.92(11.45)}}}&\setlength{\fboxsep}{0pt}\colorbox{green!30}{33.54(19.77)}&\multicolumn{1}{c}{\multirow{2}{*}{\setlength{\fboxsep}{0pt}\colorbox{green!30}{12.42(7.33)}}}&\multicolumn{1}{c}{\multirow{2}{*}{\setlength{\fboxsep}{0pt}\colorbox{green!30}{29.01(7.26)}}}&\setlength{\fboxsep}{0pt}\colorbox{green!30}{17.33(16.00)}&\multicolumn{1}{c}{\multirow{2}{*}{\setlength{\fboxsep}{0pt}\colorbox{green!30}{12.37(7.34)}}}&\multicolumn{1}{c}{\multirow{2}{*}{\setlength{\fboxsep}{0pt}\colorbox{green!30}{28.84(7.33)}}}&\setlength{\fboxsep}{0pt}\colorbox{green!30}{17.32(16.00)}\\
\multicolumn{1}{r}{}& \multicolumn{1}{r}{}& \multicolumn{1}{r}{} & \multicolumn{1}{r}{} & \setlength{\fboxsep}{0pt}\colorbox{green!30}{[25.01(16.57)]}  & \multicolumn{1}{r}{} & \multicolumn{1}{r}{} & \setlength{\fboxsep}{0pt}\colorbox{green!30}{[43.56(22.37)]}  & \multicolumn{1}{r}{} & \multicolumn{1}{r}{} & \setlength{\fboxsep}{0pt}\colorbox{green!30}{[32.75(18.49)]}  & \multicolumn{1}{r}{} & \multicolumn{1}{r}{} & \setlength{\fboxsep}{0pt}\colorbox{green!30}{[32.66(18.42)]}  \\
\cmidrule(lr){2-14}
&\multicolumn{1}{c}{\multirow{2}{*}{TS~\cite{guo2017calibration}}}&\multicolumn{1}{c}{\multirow{2}{*}{3.45(3.52)}}&\multicolumn{1}{c}{\multirow{2}{*}{{\setlength{\fboxsep}{0pt}\colorbox{green!30}{12.66(5.43)}}}}&{\setlength{\fboxsep}{0pt}\colorbox{green!30}{7.31(7.72)}}&\multicolumn{1}{c}{\multirow{2}{*}{{\setlength{\fboxsep}{0pt}\colorbox{green!30}{16.02(11.09)}}}}&\multicolumn{1}{c}{\multirow{2}{*}{\setlength{\fboxsep}{0pt}\colorbox{green!30}{23.57(12.88)}}}&27.29(16.23)&\multicolumn{1}{c}{\multirow{2}{*}{{\setlength{\fboxsep}{0pt}\colorbox{green!30}{9.42(3.90)}}}}&\multicolumn{1}{c}{\multirow{2}{*}{\setlength{\fboxsep}{0pt}\colorbox{green!30}{17.85(4.55)}}}&{\setlength{\fboxsep}{0pt}\colorbox{green!30}{13.50(10.14)}}&\multicolumn{1}{c}{\multirow{2}{*}{{\setlength{\fboxsep}{0pt}\colorbox{green!30}{9.44(3.92)}}}}&\multicolumn{1}{c}{\multirow{2}{*}{\setlength{\fboxsep}{0pt}\colorbox{green!30}{17.61(4.59)}}}&{\setlength{\fboxsep}{0pt}\colorbox{green!30}{13.50(10.17)}}\\
\multicolumn{1}{r}{}& \multicolumn{1}{r}{}& \multicolumn{1}{r}{} & \multicolumn{1}{r}{} & {\setlength{\fboxsep}{0pt}\colorbox{green!30}{[17.69(11.91)]}}  & \multicolumn{1}{r}{} & \multicolumn{1}{r}{} & \setlength{\fboxsep}{0pt}\colorbox{green!30}{[37.25(18.98)]}  & \multicolumn{1}{r}{} & \multicolumn{1}{r}{} & \setlength{\fboxsep}{0pt}\colorbox{green!30}{[27.72(11.37)]}  & \multicolumn{1}{r}{} & \multicolumn{1}{r}{} & \setlength{\fboxsep}{0pt}\colorbox{green!30}{[27.76(11.33)]}  \\
\cmidrule(lr){2-14}
&\multicolumn{1}{c}{\multirow{2}{*}{IBTS}}&\multicolumn{1}{c}{\multirow{2}{*}{3.63(3.65)}}&\multicolumn{1}{c}{\multirow{2}{*}{{\setlength{\fboxsep}{0pt}\colorbox{green!30}{12.57(6.07)}}}}&{\setlength{\fboxsep}{0pt}\colorbox{green!30}{7.25(7.67)}}&\multicolumn{1}{c}{\multirow{2}{*}{{\setlength{\fboxsep}{0pt}\colorbox{green!30}{16.01(10.21)}}}}&\multicolumn{1}{c}{\multirow{2}{*}{\setlength{\fboxsep}{0pt}\colorbox{green!30}{23.24(13.00)}}}&27.04(15.94)&\multicolumn{1}{c}{\multirow{2}{*}{{\setlength{\fboxsep}{0pt}\colorbox{green!30}{9.47(3.89)}}}}&\multicolumn{1}{c}{\multirow{2}{*}{\setlength{\fboxsep}{0pt}\colorbox{green!30}{17.98(4.88)}}}&{\setlength{\fboxsep}{0pt}\colorbox{green!30}{13.48(10.12)}}&\multicolumn{1}{c}{\multirow{2}{*}{{\setlength{\fboxsep}{0pt}\colorbox{green!30}{9.49(3.91)}}}}&\multicolumn{1}{c}{\multirow{2}{*}{\setlength{\fboxsep}{0pt}\colorbox{green!30}{17.75(4.92)}}}&{\setlength{\fboxsep}{0pt}\colorbox{green!30}{13.48(10.16)}}\\
\multicolumn{1}{r}{}& \multicolumn{1}{r}{}& \multicolumn{1}{r}{} & \multicolumn{1}{r}{} & {\setlength{\fboxsep}{0pt}\colorbox{green!30}{[17.60(11.91)]}}  & \multicolumn{1}{r}{} & \multicolumn{1}{r}{} & \setlength{\fboxsep}{0pt}\colorbox{green!30}{[37.61(19.27)]}  & \multicolumn{1}{r}{} & \multicolumn{1}{r}{} & \setlength{\fboxsep}{0pt}\colorbox{green!30}{[27.69(11.38)]}  & \multicolumn{1}{r}{} & \multicolumn{1}{r}{} & \setlength{\fboxsep}{0pt}\colorbox{green!30}{[27.76(11.33)]}  \\
\cmidrule(lr){2-14}
&\multicolumn{1}{c}{\multirow{2}{*}{\textbf{LTS}}}&\multicolumn{1}{c}{\multirow{2}{*}{3.40(3.59)}}&\multicolumn{1}{c}{\multirow{2}{*}{11.80(5.20)}}&\textbf{6.89(7.64)}&\multicolumn{1}{c}{\multirow{2}{*}{\textbf{12.44(7.48)}}}&\multicolumn{1}{c}{\multirow{2}{*}{\setlength{\fboxsep}{0pt}\colorbox{green!30}{22.17(9.53)}}}&27.64(16.67)&\multicolumn{1}{c}{\multirow{2}{*}{\setlength{\fboxsep}{0pt}\colorbox{green!30}{8.76(4.05)}}}&\multicolumn{1}{c}{\multirow{2}{*}{\setlength{\fboxsep}{0pt}\colorbox{green!30}{17.77(4.26)}}}&\setlength{\fboxsep}{0pt}\colorbox{green!30}{12.66(10.04)}&\multicolumn{1}{c}{\multirow{2}{*}{\setlength{\fboxsep}{0pt}\colorbox{green!30}{8.73(4.03)}}}&\multicolumn{1}{c}{\multirow{2}{*}{\setlength{\fboxsep}{0pt}\colorbox{green!30}{17.32(4.32)}}}&\setlength{\fboxsep}{0pt}\colorbox{green!30}{12.61(10.07)}\\
\multicolumn{1}{r}{}& \multicolumn{1}{r}{}& \multicolumn{1}{r}{} & \multicolumn{1}{r}{} & \setlength{\fboxsep}{0pt}\colorbox{green!30}{[16.61(11.81)]}  & \multicolumn{1}{r}{} & \multicolumn{1}{r}{} & \setlength{\fboxsep}{0pt}\colorbox{green!30}{[37.92(20.47)]}  & \multicolumn{1}{r}{} & \multicolumn{1}{r}{} & \setlength{\fboxsep}{0pt}\colorbox{green!30}{[26.78(11.22)]}  & \multicolumn{1}{r}{} & \multicolumn{1}{r}{} & \setlength{\fboxsep}{0pt}\colorbox{green!30}{[26.76(11.22)]}  \\
\cmidrule(lr){2-14}
&\multicolumn{1}{c}{\multirow{2}{*}{MMCE~\cite{kumar2018trainable}}}&\multicolumn{1}{c}{\multirow{2}{*}{4.45(4.03)}}&\multicolumn{1}{c}{\multirow{2}{*}{--}}&--&\multicolumn{1}{c}{\multirow{2}{*}{18.83(10.82)}}&\multicolumn{1}{c}{\multirow{2}{*}{--}}&--&\multicolumn{1}{c}{\multirow{2}{*}{8.59(5.98)}}&\multicolumn{1}{c}{\multirow{2}{*}{--}}&--&\multicolumn{1}{c}{\multirow{2}{*}{8.50(5.00)}}&\multicolumn{1}{c}{\multirow{2}{*}{--}}&--\\
\multicolumn{1}{r}{}& \multicolumn{1}{r}{}& \multicolumn{1}{r}{} & \multicolumn{1}{r}{} & [--]  & \multicolumn{1}{r}{} & \multicolumn{1}{r}{} & [--]  & \multicolumn{1}{r}{} & \multicolumn{1}{r}{} & [--]  & \multicolumn{1}{r}{} & \multicolumn{1}{r}{} & [--]  \\
\cmidrule(lr){2-14}
&\multicolumn{1}{c}{\multirow{2}{*}{MMCE~\cite{kumar2018trainable}+\textbf{LTS}}}&\multicolumn{1}{c}{\multirow{2}{*}{4.15(3.54)}}&\multicolumn{1}{c}{\multirow{2}{*}{--}}&--&\multicolumn{1}{c}{\multirow{2}{*}{17.98(10.69)}}&\multicolumn{1}{c}{\multirow{2}{*}{--}}&--&\multicolumn{1}{c}{\multirow{2}{*}{7.28(3.80)}}&\multicolumn{1}{c}{\multirow{2}{*}{--}}&--&\multicolumn{1}{c}{\multirow{2}{*}{7.17(3.84)}}&\multicolumn{1}{c}{\multirow{2}{*}{--}}&--\\
\multicolumn{1}{r}{}& \multicolumn{1}{r}{}& \multicolumn{1}{r}{} & \multicolumn{1}{r}{} & [--]  & \multicolumn{1}{r}{} & \multicolumn{1}{r}{} & [--]  & \multicolumn{1}{r}{} & \multicolumn{1}{r}{} & [--]  & \multicolumn{1}{r}{} & \multicolumn{1}{r}{} & [--]  \\
\cmidrule(lr){2-14}
&\multicolumn{1}{c}{\multirow{2}{*}{FL~\cite{mukhoti2020calibrating}}}&\multicolumn{1}{c}{\multirow{2}{*}{3.47(3.11)}}&\multicolumn{1}{c}{\multirow{2}{*}{\textbf{8.68(5.45)}}}&\setlength{\fboxsep}{0pt}\colorbox{green!30}{9.01(7.19)}&\multicolumn{1}{c}{\multirow{2}{*}{14.77(13.28)}}&\multicolumn{1}{c}{\multirow{2}{*}{\textbf{17.62(13.53)}}}&28.37(15.86)&\multicolumn{1}{c}{\multirow{2}{*}{\setlength{\fboxsep}{0pt}\colorbox{green!30}{7.46(3.43)}}}&\multicolumn{1}{c}{\multirow{2}{*}{\textbf{14.08(4.49)}}}&\setlength{\fboxsep}{0pt}\colorbox{green!30}{14.09(9.78)}&\multicolumn{1}{c}{\multirow{2}{*}{\setlength{\fboxsep}{0pt}\colorbox{green!30}{7.43(3.45)}}}&\multicolumn{1}{c}{\multirow{2}{*}{\textbf{13.63(4.57)}}}&\setlength{\fboxsep}{0pt}\colorbox{green!30}{14.06(9.83)}\\
\multicolumn{1}{r}{}& \multicolumn{1}{r}{}& \multicolumn{1}{r}{} & \multicolumn{1}{r}{} & \setlength{\fboxsep}{0pt}\colorbox{green!30}{[13.84(11.67)]}  & \multicolumn{1}{r}{} & \multicolumn{1}{r}{} & \setlength{\fboxsep}{0pt}\colorbox{green!30}{[33.33(18.08)]}  & \multicolumn{1}{r}{} & \multicolumn{1}{r}{} & \setlength{\fboxsep}{0pt}\colorbox{green!30}{[23.60(12.11)]}  & \multicolumn{1}{r}{} & \multicolumn{1}{r}{} & \setlength{\fboxsep}{0pt}\colorbox{green!30}{[23.62(12.05)]}  \\
\cmidrule(lr){2-14}
&\multicolumn{1}{c}{\multirow{2}{*}{FL~\cite{mukhoti2020calibrating}+\textbf{LTS}}}&\multicolumn{1}{c}{\multirow{2}{*}{\textbf{3.13(3.64)}}}&\multicolumn{1}{c}{\multirow{2}{*}{11.06(5.55)}}&6.96(8.21)&\multicolumn{1}{c}{\multirow{2}{*}{14.51(11.07)}}&\multicolumn{1}{c}{\multirow{2}{*}{19.61(9.82)}}&\textbf{26.91(16.06)}&\multicolumn{1}{c}{\multirow{2}{*}{\textbf{6.78(4.05)}}}&\multicolumn{1}{c}{\multirow{2}{*}{15.28(4.76)}}&\textbf{11.85(10.69)}&\multicolumn{1}{c}{\multirow{2}{*}{\textbf{6.73(4.05)}}}&\multicolumn{1}{c}{\multirow{2}{*}{14.76(4.84)}}&\textbf{11.83(10.73)}\\
\multicolumn{1}{r}{}& \multicolumn{1}{r}{}& \multicolumn{1}{r}{} & \multicolumn{1}{r}{} & [\textbf{12.66(12.87)}]  & \multicolumn{1}{r}{} & \multicolumn{1}{r}{} & [\textbf{32.27(19.08)}]  & \multicolumn{1}{r}{} & \multicolumn{1}{r}{} & [\textbf{22.04(13.05)}]  & \multicolumn{1}{r}{} & \multicolumn{1}{r}{} & [\textbf{22.10(12.96)}]  \\
\specialrule{.15em}{.05em}{.05em}
\end{tabular} 
\end{adjustbox}
\caption{Calibration results for Tiramisu semantic segmentation model on CamVid dataset. Results are reported in mean(std) format. The number of testing samples are listed in parentheses underneath the dataset name. UC denotes the uncalibrated result. $\downarrow$ denotes that lower is better. Best results are bolded and green indicates statistically significant differences w.r.t. FL+LTS. Note that due to GPU memory limits, results of MMCE and MMCE+LTS are for downsampled images, thus can not be directly compared with other methods. The goal of including them is to show that LTS can improve MMCE. LTS generally achieves the best performance on almost all metrics in the \textit{All} region, \textit{Boundary} region and \textit{Local} region.
}
\label{tab:additional_metrics}
\end{table*}

\section{Multi-atlas Segmentation and Joint Label Fusion}
\label{app:JLF_deduction}
\noindent
We give a brief overview of multi-atlas segmentation (MAS)~\cite{iglesias2015multi} and label fusion. Let $T_I$ represents the target image that needs to be segmented. Denote the $n$ atlas images and their corresponding manual segmentations as $A^1 = (A^i_I, A^i_S), A^2 = (A^2_I, A^2_S), ..., A^n = (A^n_I, A^n_S)$. MAS first employs a reliable deformable image registration method to warp all atlas images into the space of the target image $T_I$, i.e. $\tilde{A}^i = (\tilde{A}^i_I, \tilde{A}^i_S), i=1,...,n$. Each $\tilde{A^i_S}$ is considered as a candidate segmentation for $T_I$. Finally, a label fusion method~\cite{iglesias2015multi} $\mathscr{G}$ is used to produce the final estimated segmentation $\hat{T}_S$ for $T_I$, i.e.
\begin{equation}
\label{eq:est}
\hat{T}_S = \mathscr{G}(\tilde{A}^1, \tilde{A}^2, ..., \tilde{A}^n, T_I).
\end{equation}
The goal of label fusion is to use all the information from each individual candidate segmentation to generate a consensus segmentation that is better than any individual candidate segmentation. One of the most common and popular approaches of label fusion is weighted voting at each pixel/voxel of the target image, i.e.
\begin{equation}
\label{eq:weight_label_fusion}
\hat{T}_S(x) = \argmax_{l \in L} \sum_{i=1}^{n} w^i_x \cdot \mathds{1}[\tilde{A}^i_S(x) = l],
\end{equation}
where $l \in L=\{0,\dots,K\}$ is the set of labels ($K$ structures; 0 indicating background), $\mathds{1}[\cdot]$ is the indicator function, and $w^i_x$ is the weight that associates with the $i$-th atlas candidate segmentation $\tilde{A}^i_S$ at position $x$. There are a lot of possible weighting schemes. For example, majority voting (MV) and plurality voting (PV)~\cite{hansen1990neural,heckemann2006automatic} are the simplest ones that assume each atlas contributes with equal reliability to the estimate of the target segmentation, i.e. $w^i_x$ is a constant value for all $i$ and $x$. Moving forward, spatially varying weighted voting (SVWV)~\cite{artaechevarria2009combination,coupe2011patch,sabuncu2010generative} relaxes the assumption to allow for spatially varying weights, i.e. $w^i_x$ can be different for $i$ and $x$. One simple way to estimate the weight $w^i_x$ is to set it as the probability of $\tilde{A}^i_S(x)=T_S(x)$, i.e. $w^i_x = p(\tilde{A}^i_S(x) = T_S(x))$.
Though SVWV significantly improves the performance over MV and PV, it fails to consider the situation that atlases may make correlated errors. Thus, joint label fusion (JLF)~\cite{wang2012multi} has been proposed which down-weights pairs of atlases that consistently make similar errors. Specifically, JLF tries to find the optimal weights $\omega^i_x$ by minimizing the expected error between $\hat{T}_S (x)$ and the true segmentation $T_S (x)$:
\begin{equation}
\label{eq:min_expectation}
E \left [ (T_S (x) - \hat{T}_S (x))^2  \right ].
\end{equation}
Thus, label fusion weights can be computed from Eq.~\eqref{eq:weight} by minimizing the total expectation of segmentation errors of Eq.~\eqref{eq:min_expectation} constrained to $\sum_{i=1}^n \omega^i_x = 1$:
\begin{equation}
\label{eq:weight}
\textbf{w}_x = \frac{\textbf{M}^{-1}_{x} \textbf{1}_n }{\textbf{1}^t_n \textbf{M}^{-1}_{x} \textbf{1}_n},
\end{equation} 
where $\textbf{1}_n$ is a vector of all 1 and $t$ is the transpose. $\textbf{w}_x$ is the vector of weights and $w^i_x$ is its $i$-th entry (correspond to the $i$-th atlas). $\textbf{M}_x$ is a pairwise dependency matrix of size $n \times n$ where each entry $\textbf{M}_x (i, j)$ is the estimated joint probability that atlas $\tilde{A}^i_S$ (row) and $\tilde{A}^j_S$ (column) both provide wrong label suggestions for the target image $T_I$ at location $x$. $\textbf{M}_x (i, j)$ is approximated as follows:
\begin{equation}
\label{eq:probability}
\begin{split}
\textbf{M}_x (i, j) &= p(\tilde{A}^i_S(x) \neq T_S(x), \tilde{A}^j_S(x) \neq T_S(x)) \\
& \approx p(\tilde{A}^i_S(x) \neq T_S(x))p(\tilde{A}^j_S(x) \neq T_S(x)) \\
& = (1 - p(\tilde{A}^i_S(x) = T_S(x)))(1 - p(\tilde{A}^j_S(x) = T_S(x))).
\end{split}
\end{equation}

\noindent
Based on the above-mentioned label fusion approaches, the segmentation accuracy of MAS relies heavily on the accuracy of estimating the probability of the $i$-th atlas having the same label as the target image, i.e. $p(\tilde{A}^i_S(x) = T_S(x))$. Estimation of $p(\tilde{A}^i_S(x) = T_S(x))$ is rarely explored. Typically,
patch-based sum of squared differences (SSD) between image intensities are used~\cite{artaechevarria2009combination,coupe2011patch,sabuncu2010generative,wang2012multi}. Recently, deep convolutional networks based approaches~\cite{ding2019votenet,ding2020votenet+,xie2019improving} have been proposed to improve over the SSD intensity measures and have achieved great success. Here, specifically, we employ a deep convolutional neural network called VoteNet+~\cite{ding2020votenet+} to estimate the probabilities. We then conduct experiments for probability calibration to determine how much improving the calibration can improve the joint label fusion result and in turn the segmentation accuracy.

\end{document}